\theoremstyle{plain}
\newtheorem{theorem}{Theorem}[section]
\newtheorem{proposition}[theorem]{Proposition}
\newtheorem{lemma}[theorem]{Lemma}
\newtheorem{corollary}[theorem]{Corollary}
\theoremstyle{definition}
\newtheorem{definition}[theorem]{Definition}
\theoremstyle{remark}
\newtheorem{remark}{Remark}
\mathchardef\mhyphen="2D
\DeclareMathOperator*{\argmax}{argmax}
\newcommand\given[1][]{\:#1\vert\:}
\newcommand*{\email}[1]{\href{mailto:#1}{#1}}
\title{Fixed-Budget Differentially Private \\ Best Arm Identification}
\author{Zhirui Chen}
\author{P. N. Karthik}
\author{Yeow Meng Chee}
\author{Vincent Y. F. Tan}
\affil{National University of Singapore \thanks{Zhirui Chen is with the Department of Industrial Systems Engineering and Management, National University of Singapore. Email: \email{zhiruichen@u.nus.edu}.

P.~N.~Karthik is with the Institute of Data Science, National University of Singapore. Email: \email{pnkarthik1992@gmail.com}.

Yeow Meng Chee is with the Department of Industrial Systems Engineering and Management, National University of Singapore. 
Email: \email{ymchee@nus.edu.sg}.

Vincent~Y.~F.~Tan is with the Department of Mathematics and the Department of Electrical and Computer Engineering, National University of Singapore. 
Email: \email{vtan@nus.edu.sg}. 
}}
\begin{document}

\maketitle

\begin{abstract}
We study best arm identification (BAI) in linear bandits in the fixed-budget regime under differential privacy constraints, when the arm rewards are supported on the unit interval. 
Given a finite budget $T$ and a privacy parameter  $\varepsilon>0$, the goal is to minimise the error probability in finding the arm with the largest mean after $T$ sampling rounds, subject to the constraint that the policy of the decision maker satisfies a certain {\em $\varepsilon$-differential privacy} ($\varepsilon$-DP) constraint. We construct a policy satisfying the $\varepsilon$-DP constraint (called {\sc DP-BAI}) by proposing the principle of {\em maximum absolute determinants}, and derive an upper bound on its error probability. Furthermore, we derive a minimax lower bound on the error probability, and demonstrate that the lower and the upper bounds decay exponentially in $T$, with exponents in the two bounds matching order-wise in (a) the sub-optimality gaps of the arms, (b) $\varepsilon$, and (c) the problem complexity that is expressible as the sum of two terms, one characterising the complexity of standard fixed-budget BAI (without privacy constraints), and the other accounting for the $\varepsilon$-DP constraint. Additionally, we present some auxiliary results that contribute to the derivation of the lower bound on the error probability. These results, we posit, may be of independent interest and could prove instrumental in proving lower bounds on error probabilities in several other bandit problems.
Whereas prior works provide results for BAI in the fixed-budget regime without privacy constraints or in the fixed-confidence regime with privacy constraints, our work fills the gap in the literature by providing the results for BAI in the fixed-budget regime under the $\varepsilon$-DP constraint.
\end{abstract}

\section{Introduction}
Multi-armed bandit problems \citep{thompson1933likelihood} form a class of sequential decision-making problems with applications in fields as diverse as clinical trials, internet advertising, and recommendation systems. The common thread in all these applications is the need to balance {\em exploration} (learning about the environment) and {\em exploitation} (making the best decision given current knowledge). The exploration-exploitation trade-off has been studied extensively in the context of regret minimisation, where the goal is to minimise the cumulative difference between the rewards of the actions taken and the best possible action in hindsight; see \citet{lattimore2020bandit} and the references therein for an exhaustive list of works on regret minimisation. On the other hand, the {\em pure exploration} framework, which is the focus of this paper, involves identifying the best arm (action) based on a certain criterion such as the highest mean reward. The pure exploration paradigm has been a subject of rigorous study in the literature, predominantly falling within two overarching regimes: the {\em fixed-confidence} regime~\citep{garivier2016optimal,chenfederated} and the {\em fixed-budget} regime~\citep{audibert2010best,karnin2013almost}. In the fixed-confidence regime, the objective is to curtail the anticipated number of trials needed to pinpoint the optimal arm, all while adhering to a predefined maximum allowable error probability. Conversely, in the fixed-budget regime, the aim is to suppress the likelihood of erroneous identification of the best arm, constrained by a predetermined budget of trials.

\textbf{Motivation:} The task of identifying the best arm in a multi-armed bandit setting is non-trivial due to the inherent uncertainty associated with each arm's true reward distribution. This problem is amplified when {\em privacy} constraints are considered, such as the need to protect individual-level data in a medical trial or user data in an online advertising setting \citep{chan2011private}.
In the context of such data-intensive applications, the notion of {\em differential privacy} \citep{dwork2006differential} has become the gold-standard for the modelling and analytical study of privacy.
While there has been growing interest in the design of privacy-preserving algorithms for regret minimisation in multi-armed bandits \citep{basu2019privacy,jain2012differentially,chan2011private,guha2013nearly,mishra2015nearly,tossou2016algorithms}, a comparable level of attention has not been directed towards the domain of pure exploration. Addressing this lacuna in the literature, our research aims to investigate differentially private best arm identification within the fixed-budget regime.

\textbf{Problem Setup:}
Briefly, our problem setup is as follows. We consider a multi-armed bandit in which each arm yields independent rewards supported on the unit interval $[0,1]$. Each arm is associated with a known, $d$-dimensional {\em feature vector}, where $d$ is potentially much smaller than the number of arms. The {\em mean} reward of each arm is a {\em linear} function of the associated feature vector, and is given by the dot product of the feature vector with an unknown $d$-dimensional vector $\boldsymbol{\theta}^*$ which fully specifies the underlying problem instance. Given a designated budget $T$ and a parameter $\varepsilon>0$, the objective is to minimise the probability of error in identifying the arm with the largest mean reward (best arm), while concurrently fulfilling a certain {\em $\varepsilon$-differential privacy} ($\varepsilon$-DP) constraint delineated in \cite{basu2019privacy}. We explain the specifics of our model and define the $\varepsilon$-DP constraint formally in Section~\ref{sec:notations} below.

\textbf{Overview of Prior Works:}
Differential privacy (DP) \citep{dwork2006differential} and best-arm identification (BAI) \citep{lattimore2020bandit} have both been extensively investigated in the literature, encompassing a wide array of works. In this section, we discuss a selection of more recent contributions at the intersection of these two topics. 
 \citet{shariff2018differentially} prove that any $\varepsilon$-DP (viz.\ $(\varepsilon, \delta)$-DP with $\delta=0$) algorithm must incur an additional regret of at least $\Omega\left((K \log T )/ \varepsilon \right)$, where $K$ is the number of arms. Building on this result, \citet{sajed2019optimal} propose an elimination-based algorithm that satisfies the $\varepsilon$-DP constraint and achieves order-wise optimality in the additional regret term. \citet{zheng2020locally} study regret minimisation with the $(\varepsilon, \delta)$-local differential privacy constraint, a stronger requirement than $(\varepsilon, \delta)$-DP, for contextual and generalised linear bandits.  \cite{azize2022privacy} study the $\varepsilon$-global differential privacy constraint for regret minimisation, and provide both minimax and problem-dependent regret bounds for general stochastic bandits and linear bandits. 
  \citet{chowdhury2022distributed} and \citet{solanki2023differentially} explore differential privacy in a distributed (federated) setting. 
\citet{chowdhury2022distributed}  explore regret minimization with the $(\varepsilon, \delta)$-DP constraint in a distributed setting, considering an untrustworthy server. They derive an upper bound on the regret which matches order-wise with the one obtainable under a centralized setting with a trustworthy server; for a similar work that studies regret minimisation in the distributed and centralised settings, see \citet{hanna2022differentially}. \citet{solanki2023differentially} study federated learning for combinatorial bandits, considering a slightly different notion of privacy than the one introduced in \citet{dwork2006differential}. We observe that the existing literature on bandits  mainly focused on regret minimisation with DP constraint and the pure exploration counterpart has not been studied extensively. 

In the pure exploration domain, \citet{carpentier2016tight} study the fixed-budget BAI problem and obtains a minimax lower bound on the error probability; the authors show that their bound is order-wise tight in the exponent of the  error probability.  \citet{yang2022minimax} investigate fixed-budget BAI for linear bandits and propose an algorithm based on the G-optimal design. They prove a minimax lower bound on the error probability and obtain an upper bound on the error probability of their algorithm {\sc OD-LinBAI}.
Despite the significant contributions of~\citet{carpentier2016tight}, \citet{yang2022minimax}, \citet{komiyama2022minimax}, and~\citet{kato2023asymptotically}, these works do not take into account DP constraints.
 \citet{nikolakakis2021quantile} and \citet{rio2023multi} study BAI in the fixed-confidence setting with $\varepsilon$-DP constraint and propose successive elimination-type algorithms, but these works do not derive a lower bound that is a function of the privacy parameter $\varepsilon$. 

Our work is thus the first to study differentially private best arm identification in fixed-budget regime and provide a lower bound explicitly related to the privacy parameter $\varepsilon$. 

\textbf{Our Contributions:}
We present a novel algorithm for fixed-budget BAI under the $\varepsilon$-DP constraint. Our proposed algorithm, called {\sc DP-BAI}, is based on the principle of maximizing absolute determinants (or {\sc Max-Det} in short). A key aspect of our algorithm is the privatisation of the empirical mean of each arm via the addition of Laplacian noise. The amount of noise added to an arm is inversely proportional to the product of the privacy parameter~$\varepsilon$ and the number of times the arm is pulled. Recognising the trade-off between the number of arm pulls and the level of noise injected for privatisation, the {\sc Max-Det} principle minimises the maximum Laplacian noise injected across all arms, thereby ensuring a small probability of error in identifying the best arm. 
We believe our work can open for future exploration in precise control over Laplacian noise (crucial to meet the $\varepsilon$-DP guarantee) and using other popular techniques in fixed-budget BAI, such as G-optimal designs \citep{kiefer1960equivalence,yang2022minimax} and $\mathcal{XY}$-adaptive allocations \citep{soare2014best}, with DP-constraint.
We find it analytically convenient to leverage the properties of the {\sc Max-Det} collection (cf.\ Definition~\ref{def:maxdet}) to  satisfy the $\varepsilon$-DP constraint. {\color{black}  See Remark~\ref{rmk:dpod} for a brief justification on why extending other popular techniques for fixed-budget BAI such as G-optimal design \citep{yang2022minimax} to the differential privacy setting of our work is not readily feasible.}

Additionally, we establish the first-known lower bound on the error probability under the $\varepsilon$-DP constraint for a class of ``hard'' problem instances. We demonstrate that both the upper and lower bounds decay exponentially relative to the budget $T$. The exponents in these bounds capture the problem complexity through a certain hardness parameter, which we show can be expressed as the sum of two terms: one measuring the complexity of the standard fixed-budget BAI without privacy constraints, and the other accounting for the $\varepsilon$-DP constraint. We also present some auxiliary findings, such as the  properties of the so-called {\em early stopping} version of a BAI policy (see Lemmas~\ref{lem:change_of_pi} and~\ref{lemma:lemmakarwa2017}), that contribute to the derivation of the lower bound, which may be of independent interest and could prove instrumental in deriving lower bounds on error probabilities in several other bandit problems. Our work stands out as the first in the field to provide precise and tight bounds on the error probability for fixed-budget BAI under the $\varepsilon$-DP constraint, achieving order-wise optimal exponents in both the lower and the upper bounds.

\section{Notations and Preliminaries}
\label{sec:notations}
Consider a multi-armed bandit with $K>2$ arms, in which each arm yields independent and identically distributed (i.i.d.) rewards, and the rewards are statistically independent across arms. Let $[K] \coloneqq \{1, \ldots, K\}$ denote the set of arms. For $i \in [K]$, let $\nu_i$ denote the rewards distribution of arm~$i$. As in several prior works \citep{chowdhury2022shuffle,shariff2018differentially,zhou2023differentially}, we assume throughout the paper that $\nu_i$ is supported in $[0,1]$ for all $i \in [K]$. We impose a {\em linear} structure on the mean rewards of the arms. That is, for each $i \in [K]$, we assume that arm $i$ is associated with a {\em feature  vector} $\mathbf{a}_i \in \mathbb{R}^d$, where $d$ is the dimension of the feature vector, and the mean reward of arm $i$ is given by $\mu_i \coloneqq \mathbf{a}_i^\top \boldsymbol{\theta}^*$ for some fixed and unknown $\boldsymbol{\theta}^* \in \mathbb{R}^d$. We assume that the feature vectors of the arms $\{\mathbf{a}_i\}_{i=1}^K$ are known beforehand to a decision maker, whose goal it is to identify the best arm $i^*  = \argmax_{i \in [K]} \mu_i$; we assume that the best arm is unique and defined unambiguously.

\textbf{The Fixed-Budget Regime:}
The decision maker is allowed to pull the arms sequentially, one at each time $t \in \{1,2,\ldots\}$. Let $A_t \in [K]$ denote the arm pulled by the decision maker at time $t$, and let $N_{i,t}=\sum_{s=1}^{t} \mathbf{1}_{\{A_s = i\}}$ denote the number of times arm $i$ is pulled up to time $t$. Upon pulling arm $A_t$, the decision maker obtains the instantaneous reward $X_{A_t, N_{A_t,t}} \in [0,1]$; here, $X_{i,n} \sim \nu_i$ denotes the reward obtained on the $n$th pull of arm $i$. Notice that $\mathbb{E}[X_{i,n}]=\mu_i=\mathbf{a}_i^\top \boldsymbol{\theta}^*$ for all $i \in [K]$ and $n \geq 1$. 
For all $t$, the decision to pull arm $A_t$ is based on the history of arm pulls and rewards seen up to time~$t$, i.e., $A_t$ is a (random) function of $\mathcal{H}_t \coloneqq (A_1, X_{A_1,N_{A_1,1}}, \ldots, A_{t-1}, X_{A_{t-1},N_{A_{t-1},t-1}})$. Given a fixed {\em budget} $T<\infty$, the objective of the decision maker is to minimise the probability of error in finding the best arm after $T$ rounds of arm pulls, while also satisfying a certain {\em differential privacy} constraint outlined below. We let $\hat{I}_T$ denote the best arm output by the decision maker.

\textbf{The $\varepsilon$-Differential Privacy Constraint:}
\label{sec:dpconstraint}
Let $\mathcal{X} \coloneqq \{\mathbf{x}=(x_{i,t})_{i \in [K], t \in [T]}\} \subseteq [0,1]^{KT}$ denote the collection of all possible rewards outcomes from the arms. Any sequential arm selection {\em policy} of the decision maker may be viewed as taking inputs from $\mathcal{X}$ and producing $(A_1, \ldots, A_T, \hat{I}_T) \in [K]^{T+1}$ as outputs in the following manner: for an input $\mathbf{x}=(x_{i,t}) \in \mathcal{X}$, 
\begin{align}
    \text{Output at time }t=1 \ &: \ A_1 = A_1, \nonumber\\
    \text{Output at time }t=2 \ &: \ A_2 = A_2(A_1, x_{A_1, N_{A_1,1}}), \nonumber\\
    \text{Output at time }t=3 \ &: \ A_3 = A_3(A_1, x_{A_1, N_{A_1,1}}, A_2, x_{A_2, N_{A_2,2}}), \nonumber\\
    \ &\, \vdots \ \nonumber\\
    \text{Output at time }t=T \ &: \ A_T = A_T(A_1, x_{A_1, N_{A_1,1}}, \ldots, A_{T-1}, x_{N_{A_{T-1}}, T-1}), \nonumber\\
    \text{Terminal output} \ &: \ \hat{I}_T = \hat{I}_T(A_1, x_{A_1, N_{A_1,1}}, \ldots, A_{T}, x_{N_{A_{T}}, T}).
    \label{eq:algorithm-viewpoint}
\end{align}
We say that $\mathbf{x}=(x_{i,t})$ and $\mathbf{x}^\prime=(x_{i,t}^\prime)$ are {\em neighbouring} if they differ in exactly one location, i.e., there exists $(i,t) \in [K] \times [T]$ such that $x_{i,t} \neq x_{i,t}^\prime$ and $x_{j,s} = x_{j,s}^\prime$ for all $(j,s) \neq (i,t)$. With the viewpoint in \eqref{eq:algorithm-viewpoint}, we now introduce the notion of {\em $\varepsilon$-differential privacy} for a sequential policy of the decision maker, following the lines of \citet[Section 5]{nikolakakis2021quantile}.

\begin{definition}
\label{defn:varepsilon-differential-privacy}
Given any $\varepsilon>0$, a randomised policy $\mathcal{M}:\mathcal{X} \to [K]^{T+1}$ satisfies {\em $\varepsilon$-differential privacy} if, for any pair of neighbouring $\mathbf{x}, \mathbf{x}^\prime \in \mathcal{X}$,
\begin{equation}
    \mathbb{P}^{\mathcal{M}}(\mathcal{M}(\mathbf{x}) \in \mathcal{S}) \le e^\varepsilon \, \mathbb{P}^{\mathcal{M}}(\mathcal{M}(\mathbf{x}^\prime)\in \mathcal{S}) \quad \forall\, \mathcal{S} \subset [K]^{T+1}.
    \label{eq:varepsilon-differential-privacy-definition}
\end{equation}
\end{definition}
{\color{black} 
\begin{remark}
    A generalization of the notion of $\varepsilon$-differential privacy is that of $(\varepsilon,\delta)$-differential privacy~\citep[Chapter 2]{dwork2014algorithmic}. For the sake of simplicity in exposition, in the main text, we provide details for the former. An extension of our algorithm, called {\sc DP-BAI-Gauss}, will be shown to be applicable to the latter (generalized) notion of differential privacy. The details and accompanying analyses of the performance of {\sc DP-BAI-Gauss} can be found in Appendix~\ref{app:eps_delta}.
\end{remark}}

While the actual sequence of rewards observed under $\mathcal{M}$ is random, it is important to note that a pair of reward sequences, say $(\mathbf{x}, \mathbf{x}^\prime)$, is fixed when specifying the $\varepsilon$-DP constraint. In \eqref{eq:varepsilon-differential-privacy-definition}, $\mathbb{P}^\mathcal{M}$ denotes the probability measure induced by the randomness arising from only the arm outputs under $\mathcal{M}$. 

In the sequel, we refer to the tuple $v = ((\mathbf{a}_i)_{ i \in [K]}, (\nu_i)_{ i \in [K]},\boldsymbol{\theta}^*, \varepsilon)$ as a {\em problem instance}, and let $\mathcal{P}$ denote to be the set of all problem instances that admit a unique best arm. Given $v \in \mathcal{P}$ and a policy $\pi$, we write $\mathbb{P}_v^\pi$ to denote the probability measure induced under $\pi$ and under the instance $v$. When the dependence on $v$ is clear from the context, we simply write $\mathbb{P}^\pi$.

\section{Our Methodology}
To meet the $\varepsilon$-DP guarantee, our approach is to add Laplacian noise to the empirical mean reward of each arm, with the magnitude of the noise inversely proportional to the product of $\varepsilon$ and the number of times the arm is pulled. Intuitively, to minimize the maximum Laplacian noise that is added (so as to minimize the failure probability of identifying the best arm), we aim to balance the number of pulls for each  arm in the current active set. To this end, we employ the {\sc Max-Det} explained below.

\textbf{The \textsc{Max-Det} Collection:}
Fix $d' \in \mathbb{N}$.
For any set $\mathcal{S} \subset \mathbb{R}^{d^\prime}$ with $|\mathcal{S}|=d'$ vectors, each of length $d'$, let $\textsc{Det}(\mathcal{S})$ to denote the absolute value of the determinant of the $d' \times d'$ matrix formed by stacking the vectors in $\mathcal{S}$ as the columns of the matrix. 
{
\begin{definition}\label{def:maxdet}
Fix $d^\prime \in \mathbb{N}$. Given any finite set $\mathcal{A} \subset \mathbb{R}^{d'}$ with $|\mathcal{A}|\ge d'$, we say $\mathcal{B} \subset \mathcal{A}$ with $|\mathcal{B}|=d'$ is a {\em \textsc{Max-Det} collection  of $\mathcal{A}$} if 
\begin{spacing}{0.0}
\begin{equation}
    \textsc{Det}(\mathcal{B}) \ge \textsc{Det}(\mathcal{B'}) \quad \text{for all } \mathcal{B'}\subset \mathcal{A} \text{ with } |\mathcal{B'}|=d'. 
    \label{eq:max-det}
\end{equation}
\end{spacing}
\end{definition}
Thus, a \textsc{Max-Det} collection  $\mathcal{B} \subset \mathcal{A}$ has the {\em maximum absolute determinant} among all subsets of $\mathcal{A}$ with the same cardinality as $\mathcal{B}$.  {\color{black} If ${\rm span} (\mathcal{A} )=d'$, the vectors in $\mathcal{B}$ are} linearly independent, and any $\mathbf{b} \in \mathcal{A}$ may be expressed as a linear combination of the vectors in $\mathcal{B}$. Call the coefficients appearing in this linear combination expression for $\mathbf{b}$ as its {\em coordinates} \cite[Chapter 4]{meyer2000matrix}. The set of coordinates of each $\mathbf{b} \in \mathcal{A}$ is unique, and $\mathbf{b}$ may be expressed alternatively as a $d^\prime$-length vector of its coordinates. In this new system of coordinates, the vectors in $\mathcal{B}$ constitute the standard basis vectors.
}
\subsection{The Differentially Private Best Arm Identification ({\sc DP-BAI}) Policy} \label{sec:dpbai}
We now construct a policy based on the idea of successive elimination (SE) of arms. Our policy for \underline{D}ifferentially \underline{P}rivate \underline{B}est \underline{A}rm \underline{I}dentification, called {\sc DP-BAI}, operates over a total of $M$ {\em phases}, where $M$ is designed to have order $O(\log d)$. 
In each phase $p \in [M]$, the policy maintains an {\em active} set
$\mathcal{A}_p$ of arms which are potential contenders for emerging as the best arm.  The policy ensures that with high probability, the true best arm lies within the active set in each phase. 

\textbf{Policy-Specific Notations:} 
We now introduce some policy-specific notations. 
Let 
\begin{equation}
\label{def:lambda}
\lambda = \inf\{\beta\ge 2 : \beta^{\log(d)}  \ge K - \lceil d^2/4 \rceil \},
\end{equation}
Let $\{g_i\}_{i\geq 0}$ and $\{h_i\}_{i\geq 0}$  be defined as follows: 
\begin{alignat}{5}
    g_0&=\min\{K,\lceil d^2/4 \rceil\}, &&\quad  g_i = \lceil g_{i-1}/2 \rceil &&\quad  \forall\, i\ge 1, \label{eq:g-sequence}\\*
    h_0&=\max\{K-\lceil d^2/4 \rceil,0\}, &&\quad  h_i = \lceil(h_{i-1}+1)/\lambda \rceil -1 &&\quad   \forall \, i\ge 1. \label{eq:h-sequence}
\end{alignat}
Let $s_0=g_0 + h_0$, and for each $p \in [M]$, let $s_p = |\mathcal{A}_p|$ denote the number of active arms at the beginning of phase $p$, defined via
\begin{equation}
    s_p = 
    \begin{cases}
        g_0 + h_{p-1}, & 1 \le p \le M_1,\\*
        g_{p-M_1}, & M_1 < p \le M+1.
    \end{cases}
    \label{eq:s-p-definition}
\end{equation} 
For $\alpha>0$, let ${\rm Lap}\left(\frac{1}{\alpha}\right)$ denote the Laplacian distribution with density $f_{\alpha}(z)=\frac{\alpha}{2} \, e^{-\alpha \, |z|}$, $z \in \mathbb{R}$.

\textbf{Initialisation:}
We initialise our policy with the following parameters:
\begin{alignat}{3}
\label{def:param_of_init}
    M_1 &= \min\{i \in \mathbb{N}: h_i = 0\},  &&\qquad  
    M  = M_1 + \min\{i \in \mathbb{N}: g_i = 1\} -1, \nonumber\\* 
    T'  & = T - M_1d-(M-M_1)\lceil d^2/4 \rceil, && \qquad 
    \mathbf{a}_i^{(0)} = \mathbf{a}_i   \; \forall\, i \in [K],&&\quad    \nonumber \\*
    d_0 & =d, \quad T_0=0,  \quad   && \qquad 
    \mathcal{A}_1 = [K].
\end{alignat}

\vspace{-.1in}

\textbf{Policy Description:}
We now describe the {\sc DP-BAI} policy. The policy takes as inputs the differential privacy parameter $\varepsilon$, budget $T$, the number of arms $K$, and the feature vectors of the arms $\{\mathbf{a}_i: i \in [K]\}$. With the initialisation in \eqref{def:param_of_init}, the policy operates in {\em phases}. In each phase $p \in [M]$, the first step is {\em dimensionality reduction} \citep{yang2022minimax}, whereby the dimension of the set of vectors $\{\mathbf{a}^{(p-1)}_i: i\in \mathcal{A}_p\}$ is reduced using a linear transformation; here, $\mathbf{a}^{(p-1)}_i \in \mathbb{R}^{d_{p-1}}$ for all $i \in \mathcal{A}_p$. More specifically, suppose that $
d_p \coloneqq {\rm dim} ({\rm span}\{\mathbf{a}^{(p-1)}_i: i\in \mathcal{A}_p\})$. The policy chooses an arbitrary orthogonal basis $\mathcal{U}_p=(\mathbf{u}_1^{(p)},\dots,\mathbf{u}_{d_p}^{(p)} )$ for ${\rm span}\{\mathbf{a}^{(p-1)}_i: i\in \mathcal{A}_p\}$, and obtains a new set of vectors 
\begin{equation}
\label{eq:dim_reduce}
    \mathbf{a}^{(p)}_i \coloneqq [\mathbf{a}^{(p-1)}_i]_{\mathcal{U}_p},\quad \mbox{for all}\quad  i\in \mathcal{A}_p,
\end{equation}
where $[\mathbf{v}]_{\mathcal{U}_p}$ denotes the coordinates of $\mathbf{v}$ with respect to ${\mathcal{U}_p}$. Subsequently, the policy checks if $d_p < \sqrt{s_p}$, where $s_p=|\mathcal{A}_p|$ is as defined in \eqref{eq:s-p-definition}.
If this is true, then the policy constructs a {\sc Max-Det} collection $\mathcal{B}_p \subset \mathcal{A}_p$ consisting of $|\mathcal{B}_p|=d_p$ arms, and pulls each arm $i \in \mathcal{B}_p$ for {\color{black}$\lceil \frac{T'}{M d_p} \rceil$ many times}, and sets $T_p = T_{p-1} + d_p \, \lceil \frac{T'}{M d_p} \rceil$. On the other hand, if $d_p \geq \sqrt{s_p}$, then the policy pulls each arm in $\mathcal{A}_p$ {\color{black} for $\lceil \frac{T'}{M s_p} \rceil$  many times}, and sets $T_p = T_{p-1} + s_p \, \lceil \frac{T'}{M s_p} \rceil$.
After pulling the arms according to the preceding rule, the policy computes
\begin{equation}
    \hat{\mu}^{(p)}_i = 
        \frac{1}{N_{i,T_{p}} - N_{i,T_{p-1}}} \, \sum\limits_{s = N_{i,T_{p-1}}+1}^ {N_{i,T_{p}}} X_{i,s}
        \label{eq:empirical-mean-update}
\end{equation}
for each arm $i \in \mathcal{A}_p$ that was pulled at least once in phase $p$, and subsequently computes its {\em private} empirical mean $\widetilde{\mu}^{(p)}_i$ via
\begin{equation}
    \widetilde{\mu}^{(p)}_i = \hat{\mu}^{(p)}_i + \widetilde{\xi}_i^{(p)},
    \label{eq:private-empirical-means}
\end{equation}
where $\widetilde{\xi}_i^{(p)} \sim {\rm Lap}\left(\frac{1}{(N_{i,T_{p}} - N_{i,T_{p-1}})\varepsilon}\right)$ is independent of the arm pulls and arm rewards.
For $i \in \mathcal{A}_p$ that was not pulled in phase $p$, 
the policy computes its corresponding private empirical mean via
\begin{equation}
\label{eq:construct_others}
    \widetilde{\mu}_i^{(p)} = \sum_{j \in \mathcal{B}_p } \alpha_{i,j}\,  \widetilde{\mu}_j^{(p)},
\end{equation}
where $(\alpha_{i,j} )_{j \in \mathcal{B}_p}$ is the unique set of coefficients such that $\mathbf{a}_i^{(p)} = \sum_{j \in \mathcal{B}_p} \alpha_{i,j} \, \mathbf{a}_j^{(p)}.$ At the end of phase $p$, the policy retains only the top $s_{p+1}$ arms with the largest private empirical means and eliminates the remaining arms; intuitively, these arms are most likely to produce the highest rewards in the subsequent phases. At the end of the $M$th phase, the policy returns the only arm left in $\mathcal{A}_{M+1}$ as the best arm. For pseudo-code of the {\sc DP-BAI} policy, see Algorithm~\ref{alg:client}.

\begin{algorithm}[!t]
\caption{Fixed-Budget Differentially Private Best Arm Identification (\textsc{DP-BAI})}
\begin{algorithmic}[1]
\REQUIRE ~~\\
$\varepsilon$: differential privacy parameter; $T$: budget;  $\{\mathbf{a}_i: i \in [K]\}$: $d$-dimensional feature vectors.\\
 
\ENSURE $\hat{I}_T$: best arm.
 
 \STATE Initialise $T_0=0$, $\mathcal{A}_1 = [K]$, $\mathbf{a}_i^{(0)} = \mathbf{a}_i$ for all $i\in[K]$.
 Set $M$ and $T'$  as in \eqref{def:param_of_init}. 
 \FOR{ $p \in \{1,2,\ldots, M \}$}
 \STATE Set $d_p={\rm dim}({\rm span}\{\mathbf{a}_i^{(p-1)}: i\in \mathcal{A}_p\})$. 
 \STATE Obtain the new vector set $\{\mathbf{a}_i^{(p)}: i\in \mathcal{A}_p\}$ from the set $\{\mathbf{a}_i^{(p-1)}: i\in \mathcal{A}_p\}$ via \eqref{eq:dim_reduce}.
 \STATE Compute $s_p$ using \eqref{eq:s-p-definition}.
 \IF {$d_p < \sqrt{s_p} $}
    \STATE Construct a \textsc{Max-Det} collection
    $\mathcal{B}_p \subset \mathcal{A}_p$.
    \STATE Pull each arm in $\mathcal{B}_p$ for $\lceil \frac{T'}{Md_p} \rceil$ many times. Update $T_p \leftarrow T_{p-1}+d_p\lceil \frac{T'}{Md_p} \rceil$.
    \STATE Obtain the empirical means $\{\hat{\mu}_i(p): i \in \mathcal{B}_p\}$ via ~\eqref{eq:empirical-mean-update}.
    \STATE Generate $\widetilde{\xi}_i^{(p)} \thicksim {\rm Lap}\left(\frac{1}{\varepsilon \lceil \frac{T'}{Md_p} \rceil}\right)$ for $i\in \mathcal{B}_p$.
    \STATE Set $\widetilde{\mu}_i^{(p)} \leftarrow \hat{\mu}_i^{(p)} + \widetilde{\xi}_i^{(p)} $ for all  $i \in \mathcal{B}_p$.
    \STATE For arm $i \in \mathcal{A}_p \setminus \mathcal{B}_p$, compute $\widetilde{\mu}_i^{(p)}$ via \eqref{eq:construct_others}.
 \ELSE
    \STATE Pull each arm in $\mathcal{A}_p$ for $\lceil \frac{T'}{M s_p} \rceil$ many times. Update $T_p \leftarrow T_{p-1}+s_p\lceil \frac{T'}{Ms_p} \rceil$
    \STATE Obtain the empirical means $\{\hat{\mu}_i^{(p)}: i \in \mathcal{A}_p\}$ via~\eqref{eq:empirical-mean-update}.
    \STATE Generate $\widetilde{\xi}_i^{(p)} \thicksim {\rm Lap}\left(\frac{1}{\varepsilon \lceil \frac{T'}{M s_p} \rceil}\right) $ for $i\in \mathcal{A}_p$.
    \STATE Set $\widetilde{\mu}_i^{(p)} \leftarrow \hat{\mu}_i^{(p)} + \widetilde{\xi}_i^{(p)} $ for all $i \in \mathcal{A}_p$.
 \ENDIF
 \STATE Compute $s_{p+1}$ using \eqref{eq:s-p-definition}.
 \STATE $\mathcal{A}_{p+1} \leftarrow$ the set of $s_{p+1}$ arms with largest private empirical means among $\{ \widetilde{\mu}_i^{(p)} : i \in \mathcal{A}_p\}$. 
\ENDFOR
\STATE $\hat{I}_T$ $\leftarrow$ the only arm remaining in $\mathcal{A}_{M+1}$
\RETURN Best arm $\hat{I}_T$.
\end{algorithmic}
\label{alg:client}
\end{algorithm}
{\color{black} 
\begin{remark}\label{rmk:dpod}
It is natural to wonder why we do not devise a differentially private version of OD-LinBAI~\citep{yang2022minimax}, the state-of-the-art linear fixed-budget BAI algorithm, which uses G-optimal designs. A   proposal to do so, called {\sc DP-OD}, is provided in Appendix~\ref{app:dp_od_linbai}. However, the error probability in identifying the best arm under {\sc DP-OD} depends not only on the suboptimality gaps of the arms, but is {\em  also}  a function of the {\em 
 arm vectors}. For example, in a $2$-armed bandit instance, let $\mathbf{a}_1=[x,0]^\top$, $\mathbf{a}_2=[0,y]^\top$ with $x,y>0$, and $\theta^*=[(0.5+\Delta)/x, \ 0.5/y]^\top$. Then, $\mu_1=0.5+\Delta$, $\mu_2=0.5,$ and the suboptimality gap $\Delta = \mu_1-\mu_2$. For this instance, the upper bound on the error probability of DP-OD is $\exp\left(-\Omega\left(\frac{T}{\Delta^{-2}+\frac{x \lor y}{x \land y}(\epsilon\Delta)^{-1}}\right) \right). $ 
 We observe that $\frac{x \lor y}{x \land y}$ can be made arbitrarily large. Thus, this bound is inferior to the upper bound of DP-BAI (equal to $\exp(-\Omega(\frac{T}{\Delta^{-2}+(\epsilon\Delta)^{-1}}))$ and independent of the arm vectors). See Appendix~\ref{app:dp_od_linbai} for further details.
\end{remark}}

\section{Theoretical Results}
We now present theoretical results for the {\sc DP-BAI} policy, followed by a minimax lower bound on the error probability. We write $\Pi_{\textsc{DP-BAI}}$ to denote the {\sc DP-BAI} policy symbolically. The first result below, proved in Appendix~\ref{appndx:proof-of-dp-guarantee}, asserts that $\Pi_{\textsc{DP-BAI}}$ meets the $\varepsilon$-DP constraint for any $\varepsilon>0$.

\begin{proposition}
\label{prop:dp_guarrantee}
The {\sc DP-BAI} policy with privacy and budget parameters $(\varepsilon, T)$ satisfies the $\varepsilon$-DP constraint, i.e., for any pair of neighbouring $\mathbf{x}, \mathbf{x}^\prime \in \mathcal{X}$,
\begin{equation}
    \mathbb{P}^{\Pi_{\textsc{DP-BAI}}}(\Pi_{\textsc{DP-BAI}}(\mathbf{x}) \in \mathcal{S}) \le e^\varepsilon \, \mathbb{P}^{\Pi_{\textsc{DP-BAI}}}(\Pi_{\textsc{DP-BAI}}(\mathbf{x}^\prime)\in \mathcal{S}) \quad \forall\, \mathcal{S} \subset [K]^{T+1}.
    \label{eq:algorithm-satisfies-DP}
\end{equation}
\end{proposition}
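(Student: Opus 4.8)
The plan is to verify the $\varepsilon$-DP inequality \eqref{eq:algorithm-satisfies-DP} output-by-output. Since the output space $[K]^{T+1}$ is finite, it suffices to show that for every realizable output $\mathbf{o}=(a_1,\dots,a_T,\hat\imath)$ one has $\mathbb{P}^{\Pi_{\textsc{DP-BAI}}}(\Pi_{\textsc{DP-BAI}}(\mathbf{x})=\mathbf{o}) \le e^{\varepsilon}\,\mathbb{P}^{\Pi_{\textsc{DP-BAI}}}(\Pi_{\textsc{DP-BAI}}(\mathbf{x}')=\mathbf{o})$; summing over $\mathbf{o}\in\mathcal{S}$ then gives the claim. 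The first observation I would record is that fixing $\mathbf{o}$ fixes the entire execution skeleton: the arm-pull sequence $a_1,\dots,a_T$ determines each active set $\mathcal{A}_p$, each reduced dimension $d_p$, each \textsc{Max-Det} collection $\mathcal{B}_p$ (these depend only on the public feature vectors, not on the rewards), and the number of pulls of every arm in every phase. In particular, the single coordinate $(i_0,t_0)$ in which $\mathbf{x}$ and $\mathbf{x}'$ differ is consumed in the empirical mean \eqref{eq:empirical-mean-update} of exactly one arm--phase pair, say arm $i_0$ in phase $p_0$, over $n_0$ pulls, where $n_0\in\{\lceil T'/(Md_{p_0})\rceil,\lceil T'/(Ms_{p_0})\rceil\}$ is fixed by $\mathbf{o}$; if $\mathbf{o}$ never triggers this pull, the two executions are identical and the inequality holds with equality.

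Next I would quantify the effect of the neighbouring change. Because each reward lies in $[0,1]$ and $\hat{\mu}_{i_0}^{(p_0)}$ averages $n_0$ rewards, replacing $x_{i_0,t_0}$ by $x_{i_0,t_0}'$ perturbs $\hat{\mu}_{i_0}^{(p_0)}$ by some $\delta$ with $|\delta|\le 1/n_0$ and leaves every other empirical mean unchanged (the pulls are partitioned disjointly across arm--phase pairs). This matches precisely the Laplacian scale $1/(n_0\varepsilon)$ injected in \eqref{eq:private-empirical-means}: writing $\alpha=n_0\varepsilon$ for the corresponding parameter in $f_\alpha$, the translate identity for the Laplace density yields $f_\alpha(z)\le e^{\alpha|\delta|}f_\alpha(z-\delta)\le e^{\varepsilon}f_\alpha(z-\delta)$ for all $z$, since $\alpha|\delta|\le n_0\varepsilon\cdot(1/n_0)=\varepsilon$.

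The heart of the argument is a change-of-variables (coupling) on the injected noise that neutralises the adaptivity of the policy. Let $\boldsymbol{\xi}$ collect all Laplace variables used along the skeleton fixed by $\mathbf{o}$, with product density $g$ (whose scales are determined by $\mathbf{o}$, hence common to both $\mathbf{x}$ and $\mathbf{x}'$), and let $E_{\mathbf{x}}$ be the set of $\boldsymbol{\xi}$ for which $\Pi_{\textsc{DP-BAI}}$ run on $\mathbf{x}$ outputs $\mathbf{o}$. I would show $E_{\mathbf{x}'}=E_{\mathbf{x}}-\delta\,\mathbf{e}$, where $\mathbf{e}$ is the unit vector of the $(i_0,p_0)$ coordinate: running the two executions in lockstep, phases $1,\dots,p_0-1$ coincide (they neither see $x_{i_0,t_0}$ nor the shifted noise), the shift $-\delta$ exactly cancels the perturbation $\delta$ so that the private mean $\widetilde{\mu}_{i_0}^{(p_0)}$---and every private mean derived from it through the linear combinations \eqref{eq:construct_others}---is identical in phase $p_0$, whence $\mathcal{A}_{p_0+1}$ and all later phases coincide as well. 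Consequently $\mathbb{P}(\Pi_{\textsc{DP-BAI}}(\mathbf{x})=\mathbf{o})=\int_{E_{\mathbf{x}}}g(\boldsymbol{\xi})\,d\boldsymbol{\xi}$ and $\mathbb{P}(\Pi_{\textsc{DP-BAI}}(\mathbf{x}')=\mathbf{o})=\int_{E_{\mathbf{x}}}g(\boldsymbol{\xi}-\delta\,\mathbf{e})\,d\boldsymbol{\xi}$, and the pointwise density bound from the previous paragraph delivers the desired $e^{\varepsilon}$ ratio.

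The step I expect to require the most care is precisely this decoupling of the data-dependent (adaptive) branching from the noise. The policy's phase skeleton is itself a function of the private means, so one cannot naively invoke parallel composition over a data-independent partition; conditioning on the output $\mathbf{o}$ is what makes the partition, the pull counts, and the noise scales deterministic and common to both neighbours, after which the single-coordinate coupling closes the argument. Minor points to dispatch are the measure-zero ties in the ``top $s_{p+1}$'' selection (negligible under continuous noise, or broken by a fixed rule) and the boundary case where $\mathbf{o}$ does not exercise the $(i_0,t_0)$ pull, already handled above. Summing the per-output inequality over $\mathbf{o}\in\mathcal{S}$ then yields \eqref{eq:algorithm-satisfies-DP}.
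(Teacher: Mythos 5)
Your proof is correct and rests on the same core facts as the paper's: the neighbouring change perturbs exactly one empirical mean $\hat{\mu}_{i_0}^{(p_0)}$ by at most $1/n_0$, the Laplace noise at scale $1/(n_0\varepsilon)$ absorbs this shift at a multiplicative cost of $e^{\varepsilon}$, and everything upstream and downstream of that single private mean is distributed identically under the two inputs. The packaging differs: the paper first invokes the post-processing property to reduce to an auxiliary mechanism that releases the pull counts and private means $(N_i^{(p)},\widetilde{\mu}_i^{(p)})$, and then factors the probability of a prescribed internal trajectory via the chain rule, isolating the Laplace density of the one affected mean; you instead condition on the terminal output in $[K]^{T+1}$ and couple the noise by translating the single affected coordinate by $-\delta$. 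Your route buys a self-contained argument that avoids the post-processing step, at the price of having to justify the lockstep claim $E_{\mathbf{x}'}=E_{\mathbf{x}}-\delta\,\mathbf{e}$, which you do correctly. One small imprecision: fixing the output $\mathbf{o}$ does \emph{not} determine the active sets $\mathcal{A}_p$ in phases where only the \textsc{Max-Det} collection $\mathcal{B}_p$ is pulled (the unpulled members of $\mathcal{A}_p$ are invisible in the pull sequence). This is harmless, because all your argument actually needs from $\mathbf{o}$ is the per-arm, per-phase pull counts---hence the noise scales and the location $(i_0,p_0)$ of the affected mean---and those are determined; the coincidence of the $\mathcal{A}_p$'s across the two coupled executions is supplied by the induction itself rather than by $\mathbf{o}$.
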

In \eqref{eq:algorithm-satisfies-DP}, the probabilities appearing on either sides of \eqref{eq:algorithm-satisfies-DP} are with respect to the randomness in the arms output by $\Pi_{\textsc{DP-BAI}}$ for {\em fixed} neighbouring reward sequences $\mathbf{x}, \mathbf{x}^\prime \in \mathcal{X}$ (see Section~\ref{sec:notations}). The use of Laplacian noise for the privatisation of the empirical means of the arms (see Lines 10-11 and 16-17 in Algorithm~\ref{alg:client}) plays a crucial role in showing \eqref{eq:algorithm-satisfies-DP}. 

\subsection{The Hardness Parameter}
Recall that a problem instance $v$ may be expressed as the tuple $v = ((\mathbf{a}_i)_{ i \in [K]}, (\nu_i)_{ i \in [K]},\boldsymbol{\theta}^*, \varepsilon)$. In this section, we capture the hardness of such an instance in terms of the instance-specific arm sub-optimality gaps and the privacy parameter $\varepsilon$. Recall that the arm means under the above instance $v$ are given by $\mu_i = \mathbf{a}_i^\top \boldsymbol{\theta}^*$ for all $i \in [K]$. Let $\Delta_i \coloneqq \mu_{i^*(v)}-\mu_i$ denote the sub-optimality gap of arm $i \in [K]$. Further, let $(l_1,\ldots,l_K)$ be a permutation of $[K]$ such that $\Delta_{l_1}\le \Delta_{l_2} \le \ldots \le \Delta_{l_K}$, and let $\Delta_{(i)} \coloneqq \Delta_{l_i}$ for all $i\in [K]$. The {\em hardness} of instance $v$ is defined as
\begin{equation}
     H(v) \coloneqq H_{\rm BAI}(v) + H_{\rm pri}(v),
     \label{eq:hardness-term-overall}
  \end{equation}
  where 
  \begin{equation}
   H_{\rm BAI}(v) \coloneqq \max_{2 \le i \le (d^2 \land K)}\, \frac{i}{\Delta_{(i)}^2} \quad\mbox{and}\quad  H_{\rm pri}(v) \coloneqq  \frac{1}{\varepsilon}\cdot\max_{2 \le i\le (d^2 \land K)}\, \frac{i}{ \Delta_{(i)}}. 
    \label{eq:hardness-term}
\end{equation}
Going forward, we omit the dependence of $H,H_{\rm BAI}$, and  $H_{\rm pri}$ on $v$ for notational brevity. It is worthwhile to mention here the quantity in \eqref{eq:hardness-term-overall} specialises to the hardness term ``$H_2$'' in ~\cite{audibert2010best}, which is identical to $H_{\rm BAI}$, when $K\le d^2$ and $\varepsilon \rightarrow +\infty$. The former condition $K \leq d^2$ holds, for instance, for a standard $K$-armed bandit with $K=d$, $\boldsymbol{\theta}^* \in\mathbb{R}^d$ as the vector of arm means, and $\{ \mathbf{a}_i  \}_{i=1}^d$ as the standard basis vectors in $\mathbb{R}^d$. Intuitively, while $H_{\rm BAI}$ quantifies the difficulty of fixed-budget BAI without privacy constraints, $H_{\rm pri}$ accounts for the $\varepsilon$-DP constraint and captures the additional difficulty of BAI under this constraint.

\subsection{Upper Bound on the Error Probability of {\sc DP-BAI}}
In this section, we provide an upper bound on the error probability of  {\sc DP-BAI}.

\begin{theorem}
\label{thm:upper_bound}
Fix $v \in \mathcal{P}$. Let $i^*(v)$ denote the unique best arm of instance $v$. For all sufficiently large $T$, the error probability of $\Pi_{\textsc{DP-BAI}}$ with budget  $T$ and privacy parameter $\varepsilon$ satisfies
\begin{equation}
    \mathbb{P}_v^{\Pi_{\textsc{DP-BAI}}}(\hat{I}_T \ne i^*(v)) \le  \exp \left(- \frac{T^\prime}{65\,M \, H} \right),
    \label{eq:upper-bound-DP-BAI}
\end{equation}
where $M$ and $T^\prime$ are as defined in \eqref{def:param_of_init}. In \eqref{eq:upper-bound-DP-BAI}, $\mathbb{P}_v^{\Pi_{\textsc{DP-BAI}}}$ denotes the probability measure induced by $\Pi_{\textsc{DP-BAI}}$ under the instance $v$.
\end{theorem}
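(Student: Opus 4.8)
The plan is to analyze {\sc DP-BAI} as a successive-elimination procedure and to control, phase by phase, the probability that the best arm $i^*(v)$ is discarded. Writing $E_p$ for the event that $i^*(v)\in\mathcal{A}_p$ but $i^*(v)\notin\mathcal{A}_{p+1}$, a union bound gives $\mathbb{P}_v^{\Pi_{\textsc{DP-BAI}}}(\hat{I}_T\neq i^*(v))\le\sum_{p=1}^M\mathbb{P}_v^{\Pi_{\textsc{DP-BAI}}}(E_p)$, and since the rewards and the Laplacian noise drawn in phase $p$ are independent of the history, each summand can be bounded conditionally on an arbitrary realization of $\mathcal{A}_p$ containing $i^*(v)$. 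Thus it suffices to bound the per-phase failure probability by $\exp(-T'/(cMH))$ for a suitable constant $c$ and then absorb the factor $M$ (together with the polynomial-in-$K$ factors from the union bounds below) into the exponent for $T$ large. This is exactly where the ``sufficiently large $T$'' hypothesis and the definition of $T'$ in \eqref{def:param_of_init} enter: $T'$ reserves the rounding overhead $M_1 d+(M-M_1)\lceil d^2/4\rceil$ so that the per-phase pull schedule is budget-feasible.

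Next, fix a phase $p$ and set $\delta_p \coloneqq \Delta_{(s_{p+1}+1)}/2$. I would first record a counting step: if $i^*(v)$ is eliminated, then at least $s_{p+1}$ arms of $\mathcal{A}_p$ have a larger private mean, and since at most $s_{p+1}-1$ arms of $\mathcal{A}_p\setminus\{i^*(v)\}$ can have gap smaller than $\Delta_{(s_{p+1}+1)}$, at least one ``blocking'' arm $i$ with $\Delta_i\ge\Delta_{(s_{p+1}+1)}=2\delta_p$ must satisfy $\widetilde\mu_i^{(p)}\ge\widetilde\mu_{i^*(v)}^{(p)}$. Hence $E_p$ forces either $\widetilde\mu_{i^*(v)}^{(p)}-\mu_{i^*(v)}\le-\delta_p$ or $\widetilde\mu_i^{(p)}-\mu_i\ge\delta_p$ for some $i\in\mathcal{A}_p$; it therefore suffices to show that, with high probability, every arm of $\mathcal{A}_p$ satisfies $|\widetilde\mu_i^{(p)}-\mu_i|<\delta_p$, and to union-bound over the at most $s_p\le K$ such arms.

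The concentration step is the technical heart. For an arm pulled $n_p$ times, $\widetilde\mu_i^{(p)}-\mu_i=(\hat\mu_i^{(p)}-\mu_i)+\widetilde\xi_i^{(p)}$ splits into a Hoeffding-controlled term (rewards in $[0,1]$, scale $1/\sqrt{n_p}$) and a Laplace term of scale $1/(n_p\varepsilon)$, giving $\mathbb{P}(|\widetilde\mu_i^{(p)}-\mu_i|>\delta_p)\le 2e^{-n_p\delta_p^2/2}+e^{-n_p\varepsilon\delta_p/2}$. For an arm $i$ whose estimate is reconstructed via \eqref{eq:construct_others}, linearity of the means gives $\widetilde\mu_i^{(p)}-\mu_i=\sum_{j\in\mathcal{B}_p}\alpha_{i,j}\,[(\hat\mu_j^{(p)}-\mu_j)+\widetilde\xi_j^{(p)}]$, and the defining property of the {\sc Max-Det} collection (Definition~\ref{def:maxdet}) forces $|\alpha_{i,j}|\le 1$: if some coordinate exceeded $1$ in absolute value, swapping arm $j$ out of $\mathcal{B}_p$ for arm $i$ would strictly increase the absolute determinant, contradicting \eqref{eq:max-det}. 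The crucial point is that I must not bound this weighted sum term by term (that would cost a factor $d_p$ and break the argument); instead I treat it as a single concentrated quantity, using $\sum_{j\in\mathcal{B}_p}\alpha_{i,j}^2\le d_p$, so that its Hoeffding part concentrates at scale $n_p/d_p$ and its Laplace part (a sum of weighted independent Laplaces) has the sub-exponential tail $\exp(-\Omega(\min(n_p^2\varepsilon^2\delta_p^2/d_p,\;n_p\varepsilon\delta_p)))$. This is precisely where the regime test $d_p<\sqrt{s_p}$ pays off: it yields $d_p^2<s_p$, so the reconstruction degrades the effective sample size only by a constant factor relative to the all-pull regime.

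Finally, I would substitute the per-phase pull counts $n_p\ge T'/(Md_p)$ (Max-Det regime) or $n_p\ge T'/(Ms_p)$ (all-pull regime) and convert the exponents into the hardness terms. Whenever $2\le s_{p+1}+1\le d^2\land K$ one has the elementary bounds $1/H_{\rm BAI}\le 4\delta_p^2/(s_{p+1}+1)$ and $1/H_{\rm pri}\le 2\varepsilon\delta_p/(s_{p+1}+1)$, which together with $s_p\le 2(s_{p+1}+1)$ (valid in the halving $g$-schedule \eqref{eq:g-sequence}) match $n_p\delta_p^2$ against $T'/(MH_{\rm BAI})$ and $n_p\varepsilon\delta_p$ against $T'/(MH_{\rm pri})$, so the sum $1/H_{\rm BAI}+1/H_{\rm pri}$ governs the exponent. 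The delicate remaining bookkeeping concerns the first $M_1$ phases governed by \eqref{eq:h-sequence}, where the active-set size can lie above the truncation point $d^2\land K$ at which $H_{\rm BAI},H_{\rm pri}$ are capped; there I would instead exploit monotonicity of $i\mapsto\Delta_{(i)}$ to lower-bound $\delta_p$ by a capped gap and use $n_p\ge T'/(Md)$ together with the geometric ($\lambda$-)reduction of \eqref{def:lambda}, again reproducing $T'/(MH)$. I expect this reconciliation of the two schedules and the truncated hardness, while keeping every constant inside a single factor $65$, to be the most error-prone part of the accounting; the genuine conceptual obstacle, however, is the variance-based control of the reconstruction error via the bounded {\sc Max-Det} coordinates described above.
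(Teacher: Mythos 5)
Your proposal is correct and follows the paper's proof in all of its essential ingredients: the union bound over phases, the split of $\widetilde{\mu}_i^{(p)}-\mu_i$ into a Hoeffding-controlled empirical part and a sub-exponential Laplace part, the bound $|\alpha_{i,j}|\le 1$ on the {\sc Max-Det} coordinates via the determinant-swap (Cramer's rule) argument, the treatment of the reconstructed estimate as a single weighted sum with $\sum_j\alpha_{i,j}^2\le d_p$ rather than a term-by-term bound, and the final matching of $n_p^{\rm eff}\delta_p^2$ and $n_p^{\rm eff}\varepsilon\delta_p$ against $T'/(MH_{\rm BAI})$ and $T'/(MH_{\rm pri})$ using the capped gap $\Delta_{(g_0)}$ in the first $M_1$ phases.

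The one place you genuinely diverge is the per-phase elimination argument. You argue that if $i^*(v)$ is eliminated then at least one ``blocking'' arm with gap at least $\Delta_{(s_{p+1}+1)}$ must beat it, and then union-bound the deviation event over all $s_p\le K$ active arms, leaving a polynomial-in-$K$ prefactor to be absorbed for large $T$. The paper instead applies Markov's inequality to the \emph{count} $N^{\rm sub}$ of high-gap arms (the top $s_p-g_0+1$, resp.\ $s_p-s_{p+2}$, arms by gap) whose private means exceed that of $i^*(v)$, observing that elimination requires $N^{\rm sub}\ge s_{p+1}-g_0+1$ (resp.\ $s_{p+1}-s_{p+2}+1$); the resulting ratio of set sizes is bounded by $2\lambda$ and $6$ via \eqref{eq:h-sequence} and \eqref{eq:s-p-definition}, which is what lets the paper keep an explicit constant ($65$) in the exponent without any $K$-dependent prefactor. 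Your variant is simpler and yields the same asymptotic exponent (indeed it lets you use the gap index $s_{p+1}+1$ rather than $s_{p+2}+1$), but recovering the specific constant in \eqref{eq:upper-bound-DP-BAI} from it requires $T$ large enough to swallow the extra factor of $K$, whereas the paper's counting makes the constant accounting essentially free.
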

This is proved in Appendix~\ref{appndx:proof-of-upper-bound}. Since $M=\Theta(\log d)$ and $T'=\Theta(T)$ (as $T\to\infty$), \eqref{eq:upper-bound-DP-BAI} implies that
\begin{equation}
\mathbb{P}_v^{\Pi_{\textsc{DP-BAI}}}(\hat{I}_T \ne i^*(v)) = \exp\left(-\Omega\Big(\frac{T}{H\log d} \Big)\right).
\end{equation}

\subsection{Lower Bound on the Error Probability}
\label{subsec:lowerbound}
In this section, we derive the first-of-its-kind lower bound on the error probability of fixed-budget BAI under the $\varepsilon$-DP constraint. Towards this, we first describe an {\em auxiliary} version of a generic policy that takes as input three arguments--a generic policy $\pi$, $n \in \mathbb{N}$, and $\iota \in [K]$--and pulls an auxiliary arm (arm $0$) whenever arm $\iota$ is pulled $n$ or more times under $\pi$. We believe that such auxiliary policies are potentially instrumental in deriving lower bounds on error probabilities in other bandit problems. 

\textbf{The Early Stopping Policy:}
Suppose that the set of arms $[K]$ is augmented with an auxiliary arm ({\em arm $0$}) which yields reward $0$ each time it is pulled; recall that the arm rewards are supported in $[0,1]$.
Given a generic policy $\pi$, $n \in \mathbb{N}$ and $\iota \in [K]$, let ${\rm ES}(\pi, n, \iota)$ denote the {\em early stopping} version of $\pi$ with the following sampling and recommendation rules.
\begin{itemize}[leftmargin=*]
    \item \textbf{Sampling rule:} given  a realization $\mathcal{H}_{t-1}= (a_1,x_1,\ldots, a_{t-1},x_{t-1})$, if $\sum_{s=1}^{t-1} \mathbf{1}_{\{a_s = \iota \}} < n,$ then
\begin{align}
\mathbb{P}^{{\rm ES}(\pi, n, \iota)} (A_t \in \mathcal{A} \mid \mathcal{H}_{t-1})  =\mathbb{P}^{\pi} (A_t \in \mathcal{A} \mid  \mathcal{H}_{t-1} )  \quad \forall \, \mathcal{A} \subseteq [K],
\label{eq:sampling-early-stopping}
\end{align}
and if $\sum_{s=1}^{t-1} \mathbf{1}_{\{a_s = \iota \}} \ge n$, then $\mathbb{P}^{{\rm ES}(\pi, n, \iota)} (A_t = 0 \given[\big] \mathcal{H}_{t-1} ) 
= 1$.
That is, as long as arm $\iota$ is pulled for a total of fewer than $n$ times, the sampling rule of ${\rm ES}(\pi, n, \iota)$ is identical to that of $\pi$. Else, ${\rm ES}(\pi, n, \iota)$ pulls arm $\iota$ with certainty.

\item \textbf{Recommendation rule:} Given history $\mathcal{H}_{T} = (a_1,x_1,\ldots, a_{T},x_{T})$, if $\sum_{s=1}^{T} \mathbf{1}_{\{a_s = 0 \}} = 0$, then
\begin{equation}
\mathbb{P}^{{\rm ES}(\pi, n, \iota)} (\hat{I}_T \in \mathcal{A} \mid\mathcal{H}_{T}  ) 
=\mathbb{P}^{\pi} (\hat{I}_T \in \mathcal{A}  \mid \mathcal{H}_{T} )  \quad \forall\, \mathcal{A} \subseteq [K], 
\label{eq:recommendation-early-stopping}
\end{equation}
and if $\sum_{s=1}^{T} \mathbf{1}_{\{a_s = 0 \}} > 0$, then $\mathbb{P}^{{\rm ES}(\pi, n, \iota)} (\hat{I}_T = 0 \mid\mathcal{H}_{T}   ) 
= 1$.
That is, if the auxiliary arm $0$ is not pulled under $\pi$, the recommendation of ${\rm ES}(\pi, n, \iota)$ is consistent with that of $\pi$. Else, ${\rm ES}(\pi, n, \iota)$ recommends arm $0$ as the best arm. 
\end{itemize} 

The next result below provides a ``bridge'' between a policy $\pi$ and its early stopped version.
\begin{lemma} 
\label{lem:change_of_pi}
Fix a problem instance $v\in \mathcal{P}$, policy $\pi$, $n \in \mathbb{N}$, and $\iota \in [K]$. For any $\mathcal{A} \subseteq [K]$ and $E = \{\hat{I}_T \in \mathcal{A}\} \cap \{N_{\iota,T} < n\}$,
\begin{equation}
    \mathbb{P}_v^{\pi} (E) = \mathbb{P}_v^{{\rm ES}(\pi, n, \iota)} (E).
    \label{eq:bridge-between-policy-and-stopped-version}
\end{equation}
\end{lemma}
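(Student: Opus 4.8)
The plan is to show that on the event $E = \{\hat{I}_T \in \mathcal{A}\} \cap \{N_{\iota,T} < n\}$, the early-stopping modification never "triggers", so the two policies induce identical probabilities. The key observation is that the modification to $\pi$ (namely, switching to the auxiliary arm $0$) is activated only once arm $\iota$ has been pulled $n$ or more times, and the recommendation is altered only if arm $0$ is ever pulled. On the event $\{N_{\iota,T} < n\}$, arm $\iota$ is pulled strictly fewer than $n$ times throughout the horizon, so by the sampling rule \eqref{eq:sampling-early-stopping} the two policies agree at every round, and arm $0$ is never pulled; hence by the recommendation rule \eqref{eq:recommendation-early-stopping} the recommendations also agree.

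To make this rigorous, I would proceed by expressing $\mathbb{P}_v^{\pi}(E)$ and $\mathbb{P}_v^{{\rm ES}(\pi, n, \iota)}(E)$ as sums (or integrals, since rewards are continuous) over all trajectories $(a_1, x_1, \ldots, a_T, x_T)$ and terminal recommendations $\hat{I}_T$ that lie in $E$. The probability of any fixed trajectory factorizes, via the chain rule, into a product of the conditional sampling probabilities $\mathbb{P}(A_t = a_t \mid \mathcal{H}_{t-1})$, the reward kernels (which are governed by the instance $v$ and are identical under both policies, since the modification touches only the arm-selection and recommendation mechanisms, not the reward distributions), and the terminal recommendation probability $\mathbb{P}(\hat{I}_T \mid \mathcal{H}_T)$. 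The plan is to argue these factors match term-by-term on any trajectory consistent with $E$.

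The crucial step is the following claim: if a trajectory satisfies $N_{\iota, T} < n$, then at every time $t \in [T]$ we have $\sum_{s=1}^{t-1} \mathbf{1}_{\{a_s = \iota\}} < n$, so the condition in \eqref{eq:sampling-early-stopping} holds at each round and the conditional sampling probabilities coincide; consequently the auxiliary arm $0$ is never selected, giving $\sum_{s=1}^T \mathbf{1}_{\{a_s = 0\}} = 0$, whence \eqref{eq:recommendation-early-stopping} forces the terminal recommendation probabilities to coincide as well. This is essentially immediate because $\sum_{s=1}^{t-1} \mathbf{1}_{\{a_s = \iota\}} \le N_{\iota,T} < n$ for every $t \le T$. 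With this claim in hand, every trajectory-plus-recommendation term contributing to $\mathbb{P}_v^{\pi}(E)$ equals the corresponding term contributing to $\mathbb{P}_v^{{\rm ES}(\pi, n, \iota)}(E)$, and the reverse inclusion of contributing trajectories holds by the same argument, yielding \eqref{eq:bridge-between-policy-and-stopped-version}.

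I do not anticipate a genuine analytic obstacle here; the result is intuitively clear and the content is purely bookkeeping. The one point requiring care is handling the measure-theoretic formalities cleanly: since rewards take values in $[0,1]$ and may be continuous, the "sum over trajectories" should really be an integral against the reward kernels, and one must verify that the event $E$ is measurable and that the factorization of the path measure is valid under both policies. The cleanest way to avoid these technicalities is to define the common probability space carefully and then observe that $E$ depends on $(\hat{I}_T, N_{\iota,T})$, both of which are determined identically under the two policies on $E$; formally, one shows the two path measures agree when restricted to the $\sigma$-algebra generated by histories on which arm $\iota$ is pulled fewer than $n$ times. The substantive "work" is thus confined to stating the trajectory-wise agreement precisely, after which the equality of probabilities follows by integrating this agreement over $E$.
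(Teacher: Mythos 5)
Your proposal is correct and follows essentially the same route as the paper: both rest on the observation that on any trajectory with $N_{\iota,T} < n$ the early-stopping modification never triggers, so the sampling and recommendation kernels of $\pi$ and ${\rm ES}(\pi,n,\iota)$ coincide. The paper packages this as two asserted identities (equality of $\mathbb{P}(\{N_{\iota,T}<n\})$ and of the conditional probability of $\{\hat{I}_T\in\mathcal{A}\}$ given that event, combined via the chain rule), whereas you make the underlying trajectory-wise agreement explicit — your version is, if anything, more detailed than the paper's.
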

In addition, let $\mathcal{X}^{(n,\iota)} \coloneqq \{(x_{i,t})_{i \in [K], t \in [n_i]} :(x_{i,t})_{i \in [K], t \in [T]} \in \mathcal{X} \} \subseteq \mathbb{R}^{n_1} \times \ldots \times \mathbb{R}^{n_K}$, where $n_i = T$ for all $i \ne \iota$ and $n_{\iota}=n$. Notice that   Definition~\ref{defn:varepsilon-differential-privacy} readily extends to any randomised policy that maps  $\mathcal{X}^{(n,\iota)}$ to $\{0,\ldots,K\}^{T+1}$. We then have the following corollary to Lemma~\ref{lem:change_of_pi}.

\begin{corollary} 
If $\pi: \mathcal{X} \to [K]^{T+1}$ meets the $\varepsilon$-DP constraint, then ${\rm ES}(\pi,n,\iota): \mathcal{X}^{(n, \iota)} \to \{0, \ldots, K\}^{T+1}$ also meets the $\varepsilon$-DP constraint.
\end{corollary}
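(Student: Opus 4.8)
The plan is to exhibit ${\rm ES}(\pi,n,\iota)$ as a deterministic post-processing of $\pi$ and invoke the standard fact that differential privacy is immune to post-processing \citep[Proposition 2.1]{dwork2014algorithmic}. The only wrinkle is that $\pi$ and its early-stopped version have different input domains ($\mathcal{X}$ versus $\mathcal{X}^{(n,\iota)}$), so I would first set up a domain-matching argument and then verify a distributional identity linking the two mechanisms.

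First, I would define an extension map $\mathrm{ext}:\mathcal{X}^{(n,\iota)}\to\mathcal{X}$ that pads the missing coordinates of arm $\iota$ (positions $n+1,\ldots,T$) of its input with a common fixed completion, say the all-zeros vector. The crucial observation is that if $\mathbf{x},\mathbf{x}'\in\mathcal{X}^{(n,\iota)}$ are neighbouring, i.e.\ they differ at a single coordinate $(i,t)$ with $t\le n_i$, then padding both inputs identically yields $\mathrm{ext}(\mathbf{x}),\mathrm{ext}(\mathbf{x}')\in\mathcal{X}$ that still differ only at $(i,t)$ and hence remain neighbouring in $\mathcal{X}$; this holds whether the differing arm is $\iota$ or some $i\ne\iota$.

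Next, I would define the deterministic map $\Phi:[K]^{T+1}\to\{0,\ldots,K\}^{T+1}$ that reads a trajectory $(a_1,\ldots,a_T,\hat\imath)$, locates the first time $\tau$ at which arm $\iota$ has been pulled $n$ times, and---only when arm $0$ would actually be triggered, i.e.\ $\tau\le T-1$---overwrites $a_{\tau+1},\ldots,a_T$ and the recommendation with $0$, leaving the trajectory unchanged otherwise. Note that $\Phi$ depends only on the output trajectory, not on the input rewards. The heart of the argument is the distributional identity ${\rm ES}(\pi,n,\iota)(\mathbf{x})\stackrel{d}{=}\Phi(\pi(\mathrm{ext}(\mathbf{x})))$ for every $\mathbf{x}\in\mathcal{X}^{(n,\iota)}$, which I would prove by coupling the internal randomness of the two mechanisms: while arm $\iota$'s pull count stays below $n$, the sampling rule of ${\rm ES}$ coincides with that of $\pi$ by \eqref{eq:sampling-early-stopping}, and the rewards read by the two mechanisms agree because $\mathbf{x}$ and $\mathrm{ext}(\mathbf{x})$ coincide on all coordinates ever consulted; once the count reaches $n$, ${\rm ES}$ deterministically plays arm $0$ exactly as $\Phi$ prescribes, and the padded coordinates of $\mathrm{ext}(\mathbf{x})$ are never read. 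This coupling closely parallels the one underlying Lemma~\ref{lem:change_of_pi}, now carried out in the fixed-rewards (mechanism) view relevant to Definition~\ref{defn:varepsilon-differential-privacy}.

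Finally, I would chain these pieces together. For any neighbouring $\mathbf{x},\mathbf{x}'\in\mathcal{X}^{(n,\iota)}$ and any $\mathcal{S}\subset\{0,\ldots,K\}^{T+1}$, the identity rewrites $\mathbb{P}^{{\rm ES}}({\rm ES}(\mathbf{x})\in\mathcal{S})$ as $\mathbb{P}^{\pi}(\pi(\mathrm{ext}(\mathbf{x}))\in\Phi^{-1}(\mathcal{S}))$; since $\Phi^{-1}(\mathcal{S})\subseteq[K]^{T+1}$ and $\mathrm{ext}(\mathbf{x}),\mathrm{ext}(\mathbf{x}')$ are neighbouring in $\mathcal{X}$, the $\varepsilon$-DP guarantee of $\pi$ supplies the factor $e^{\varepsilon}$, and the identity converts the right-hand side back into $e^{\varepsilon}\,\mathbb{P}^{{\rm ES}}({\rm ES}(\mathbf{x}')\in\mathcal{S})$. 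I expect the main obstacle to be a fully rigorous justification of the distributional identity---in particular, verifying that ${\rm ES}$ never consults arm $\iota$'s padded rewards and correctly handling the boundary case $\tau=T$, where arm $0$ is never actually pulled so that $\Phi$ must act as the identity. Everything else is routine post-processing bookkeeping.
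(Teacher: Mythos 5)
Your proposal is correct, and it is in fact more explicit than anything the paper provides: the authors state this corollary without proof, presenting it as an immediate consequence of Lemma~\ref{lem:change_of_pi}, even though that lemma is phrased in terms of probabilities under a problem instance (random rewards) rather than the fixed-reward mechanism view that Definition~\ref{defn:varepsilon-differential-privacy} requires. Your route -- writing ${\rm ES}(\pi,n,\iota)$ as $\Phi\circ\pi\circ\mathrm{ext}$, checking that $\mathrm{ext}$ maps neighbours of $\mathcal{X}^{(n,\iota)}$ to neighbours of $\mathcal{X}$, and invoking post-processing immunity -- is the natural rigorous argument, and the three ingredients you identify are exactly the ones needed. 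The distributional identity you flag as the main obstacle does hold: the prefix of the trajectory up to the $n$-th pull of arm $\iota$ has the same law under both mechanisms (same conditional sampling kernels, same rewards consulted, since $\mathbf{x}$ and $\mathrm{ext}(\mathbf{x})$ agree on every coordinate read before the stopping time), the suffix is deterministically overwritten by $\Phi$ in exactly the cases where ${\rm ES}$ switches to arm $0$, and what $\pi$ does after the stopping time cannot retroactively change the law of the prefix. Your handling of the boundary case where the $n$-th pull of $\iota$ occurs at time $T$ (so arm $0$ is never pulled and $\Phi$ must be the identity) matches the paper's recommendation rule, which keys on whether arm $0$ was ever pulled. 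The only cosmetic caution is that $\Phi$ must be defined on all of $[K]^{T+1}$, including trajectories in which arm $\iota$ is never pulled $n$ times, where it should also act as the identity; you implicitly cover this. In short, your proof is complete modulo the routine induction establishing the coupling, and it fills a gap the paper leaves implicit.
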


Given the early stopping version of a policy $\pi$, the following lemma provides a ``bridge'' between two problem instances $v, v^\prime \in \mathcal{P}$.
\begin{lemma}
\label{lemma:lemmakarwa2017}
Fix a policy $\pi$, $n \in \mathbb{N}$, $\iota \in [K]$, and $\varepsilon>0$, and suppose that $\mathcal{M}={\rm ES}(\pi,n,\iota)$ satisfies the $\varepsilon$-DP constraint with respect to $\mathcal{X}^{(n,\iota)}$. For any pair of instances $v=((\mathbf{a}_i)_{i \in [K]}, (\nu_i)_{i \in [K]}, \boldsymbol{\theta}^*,\varepsilon)$ and $v'=((\mathbf{a}_i)_{i \in [K]}, (\nu^\prime_i)_{i \in [K]}, \boldsymbol{\theta'}^*,\varepsilon)$, with $\boldsymbol{\theta}^* \neq \boldsymbol{\theta^\prime}^*$, $\nu_\iota \ne \nu_\iota^\prime$, and $\nu_i = \nu_i^\prime$ for all $i\ne \iota$, we have
\begin{equation}
    \mathbb{P}_v^\mathcal{M}\big(\mathcal{M}((X_{i,j})_{i\in[K], j\in [n_i]}) \in \mathcal{S}\big) \le e^{\varepsilon'} \mathbb{P}_{v'}^\mathcal{M}\big(\mathcal{M}((X_{i,j})_{i\in[K], j\in [n_i]}) \in \mathcal{S}\big) \quad \forall\, \mathcal{S} \subseteq \{0,\ldots,K\}^{T+1}
    \label{eq:between-problem-instances}
\end{equation}
where in \eqref{eq:between-problem-instances}, (i) $\varepsilon'=6\varepsilon n\, {\rm TV}(v_\iota,v'_\iota)$, with ${\rm TV}(v_\iota,v_\iota')$ being the total variation distance between the distributions $\nu_\iota$ and $\nu_\iota^\prime$, and (ii) $n_i = T$ for all $i \ne \iota$ and $n_{\iota}=n$.
\end{lemma}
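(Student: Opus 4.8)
The plan is to prove this by a telescoping (hybrid) argument that interpolates between $v$ and $v'$ one sample of arm $\iota$ at a time, thereby turning the ``group-privacy'' factor $e^{\varepsilon'}$ into a product of $n$ single-coordinate factors. The crucial structural fact I would exploit is that $\mathcal{M}={\rm ES}(\pi,n,\iota)$ reads at most $n$ samples of arm $\iota$ (this is exactly why its input space is $\mathcal{X}^{(n,\iota)}$, with $n_\iota=n$), so its output depends on arm $\iota$ only through $X_{\iota,1},\dots,X_{\iota,n}$, and these are the only coordinates whose law differs between $v$ and $v'$. I would then define, for $0\le k\le n$, the hybrid instance $v^{(k)}$ in which $X_{\iota,1},\dots,X_{\iota,k}\sim\nu_\iota'$, $X_{\iota,k+1},\dots,X_{\iota,n}\sim\nu_\iota$, and every other coordinate is distributed as under $v$ (equivalently $v'$). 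Thus $v^{(0)}=v$ and $v^{(n)}=v'$, and consecutive hybrids differ only in the law of the single coordinate $X_{\iota,k}$.

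The core step is a single-coordinate bound relating $v^{(k-1)}$ and $v^{(k)}$. Fixing $k$, I would let $\phi(x)$ denote the probability that $\mathcal{M}\in\mathcal{S}$ conditioned on $X_{\iota,k}=x$, with all remaining coordinates integrated against their (common) laws, so that $\mathbb{P}_{v^{(k-1)}}^{\mathcal{M}}(\mathcal{M}\in\mathcal{S})=\int\phi\,d\nu_\iota$ and $\mathbb{P}_{v^{(k)}}^{\mathcal{M}}(\mathcal{M}\in\mathcal{S})=\int\phi\,d\nu_\iota'$. Since two inputs that differ only in the value of $X_{\iota,k}$ are neighbouring in $\mathcal{X}^{(n,\iota)}$, the $\varepsilon$-DP hypothesis on $\mathcal{M}$ (Definition~\ref{defn:varepsilon-differential-privacy}) yields $\phi(x)\le e^\varepsilon\phi(y)$ for all $x,y$ after integrating out the common coordinates; hence $\phi\in[m,e^\varepsilon m]$ with $m:=\inf_x\phi(x)$. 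Writing $\int\phi\,d\nu_\iota-\int\phi\,d\nu_\iota'=\int(\phi-m)\,d(\nu_\iota-\nu_\iota')$ and bounding $0\le\phi-m\le(e^\varepsilon-1)m$ against the total variation distance $\gamma:={\rm TV}(\nu_\iota,\nu_\iota')$ via its dual characterisation, I obtain
\begin{equation}
\mathbb{P}_{v^{(k-1)}}^{\mathcal{M}}(\mathcal{M}\in\mathcal{S}) \le \big(1+(e^\varepsilon-1)\gamma\big)\,\mathbb{P}_{v^{(k)}}^{\mathcal{M}}(\mathcal{M}\in\mathcal{S}),
\end{equation}
where I use $\mathbb{P}_{v^{(k)}}^{\mathcal{M}}(\mathcal{M}\in\mathcal{S})\ge m$ in the denominator.

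Telescoping over $k=1,\dots,n$ then gives $\mathbb{P}_v^{\mathcal{M}}(\mathcal{M}\in\mathcal{S})\le(1+(e^\varepsilon-1)\gamma)^n\,\mathbb{P}_{v'}^{\mathcal{M}}(\mathcal{M}\in\mathcal{S})\le\exp\!\big((e^\varepsilon-1)\,n\gamma\big)\,\mathbb{P}_{v'}^{\mathcal{M}}(\mathcal{M}\in\mathcal{S})$, using $1+u\le e^u$. It then remains to absorb the elementary estimate $e^\varepsilon-1\le 6\varepsilon$ (valid on the range of $\varepsilon$ of interest, combined if needed with a crude $\int|\phi-m|\,|d(\nu_\iota-\nu_\iota')|\le 2(e^\varepsilon-1)m\gamma$ bound) to identify the exponent with $\varepsilon'=6\varepsilon n\,{\rm TV}(\nu_\iota,\nu_\iota')$, which is the claimed inequality~\eqref{eq:between-problem-instances}.

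The main obstacle is that a naive group-privacy argument would couple the $n$ samples of arm $\iota$ and incur a factor $e^{W\varepsilon}$, where $W$ is the random Hamming distance between the two coupled sample vectors; since $W$ is statistically correlated with the event $\{\mathcal{M}\in\mathcal{S}\}$, one cannot cleanly take the expectation $\mathbb{E}[e^{W\varepsilon}\,\mathbf{1}_{\{\mathcal{M}\in\mathcal{S}\}}]$. The telescoping device sidesteps this correlation entirely by never conditioning on a random number of discrepancies: it perturbs the law of exactly one coordinate per step and controls each step through the deterministic $e^\varepsilon$-oscillation of $\phi$ together with the total variation distance. The remaining care is bookkeeping, namely verifying that consecutive hybrids are genuine single-coordinate perturbations (so $\varepsilon$-DP on $\mathcal{X}^{(n,\iota)}$ applies) and that the truncation at $n$ pulls of arm $\iota$ is precisely what keeps the exponent proportional to $n$ rather than to $T$.
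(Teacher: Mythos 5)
Your argument is correct, but it is not the route the paper takes: the paper supplies no proof of this lemma at all, deferring entirely to the coupling-and-group-privacy argument of \citet[Lemma 6.1]{karwa2017finite}, in which one maximally couples $\nu_\iota^{\otimes n}$ with $\nu_\iota'^{\otimes n}$ so that the coupled sample vectors differ in a ${\rm Binomial}(n,{\rm TV}(\nu_\iota,\nu'_\iota))$ number of positions, pays $e^{\varepsilon}$ per differing position via group privacy, and then handles the correlation between the random Hamming distance and the mechanism's output by conditioning on the set of differing indices --- which is precisely where the slack in the constant $6$ originates. Your coordinate-wise hybrid sidesteps that bookkeeping entirely: the single-step inequality $\int\phi\,d\nu_\iota\le\big(1+(e^{\varepsilon}-1)\,{\rm TV}(\nu_\iota,\nu'_\iota)\big)\int\phi\,d\nu'_\iota$, obtained from the $e^{\varepsilon}$-oscillation of $\phi$ forced by the $\varepsilon$-DP hypothesis on $\mathcal{X}^{(n,\iota)}$ together with the dual form of total variation, is exactly right, and telescoping it $n$ times is clean, self-contained, and in fact yields the sharper exponent $n(e^{\varepsilon}-1)\,{\rm TV}(\nu_\iota,\nu'_\iota)$; your observation that the early-stopping truncation at $n$ pulls of arm $\iota$ is what makes the exponent scale with $n$ rather than $T$ is also the correct structural point. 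The one step you should make explicit rather than gesture at is the final absorption $e^{\varepsilon}-1\le 6\varepsilon$: this holds only for $\varepsilon$ below roughly $2.9$, so your proof establishes the lemma only on a bounded range of $\varepsilon$. This is, however, a defect of the statement rather than of your argument --- a mechanism saturating the $e^{\varepsilon}$-oscillation against distributions of small total variation gives a ratio approaching $1+{\rm TV}\cdot(e^{\varepsilon}-1)$, which exceeds $e^{6\varepsilon\,{\rm TV}}$ once $\varepsilon$ is large --- and in every invocation of the lemma in the paper one has $\varepsilon<1$, where $e^{\varepsilon}-1\le(e-1)\varepsilon$ and your bound is strictly stronger than the one claimed.
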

The proof of Lemma \ref{lemma:lemmakarwa2017} follows exactly along the lines of the proof of \citet[Lemma 6.1]{karwa2017finite} and is omitted. Leveraging Lemma~\ref{lemma:lemmakarwa2017} in conjunction with Lemma~\ref{lem:change_of_pi} provides us with a   {\em change-of-measure} technique, facilitating the transition from $\mathbb{P}_v^\pi$ to $\mathbb{P}_{v'}^\pi$ under any given policy $\pi$. This change-of-measure technique serves as the foundation that enables us to derive the subsequent minimax lower bound on the error probability.

\begin{definition}
A policy $\pi$ for fixed-budget BAI is said to be {\em consistent} if 
\begin{equation}
    \lim_{T \rightarrow +\infty} \mathbb{P}_v^\pi \big( \hat{I}_T \ne i^*(v) \big) = 0, \quad \forall \, v \in \mathcal{P}.
    \label{eq:consistent-policy}
\end{equation}
\end{definition}
\begin{theorem}[Lower Bound]
\label{thm:lower_bound}
Fix any  $\beta_1,\beta_2,\beta_3 \in [0,1]$ with $\beta_1 + \beta_2 + \beta_3 < 3$, a consistent policy $\pi$, and a constant $c>0$. For all sufficiently large $T$, there exists an instance $v\in \mathcal{P}$ such that   
\begin{equation}
\mathbb{P}_v^{\pi} \big(\hat{I}_T \ne i^*(v)\big) > \exp\bigg(-\frac{T}{c(\log  d)^{\beta_1}(H_{\rm BAI}(v)^{\beta_2}+H_{\rm pri}(v)^{\beta_3})} \bigg).
\label{eq:lower-bound-on-error-probability}
\end{equation} 
Consequently,
\begin{equation}
\inf_{\pi \text{ consistent}} \,  \liminf_{T \rightarrow +\infty}\,  \sup_{v\in \mathcal{P}} \,  \left\{ \mathbb{P}_v^{\pi} \big(\hat{I}_T \ne i^*(v)\big) \cdot \exp\bigg( \frac{T}{c(\log  d)^{\beta_1} (H_{\rm BAI}(v)^{\beta_2}+H_{\rm pri}(v)^{\beta_3})} \bigg) \right\} \ge 1,
\label{eq:minimax-lower-bound}
\end{equation} 
for any  $c>0$ and $\beta_1,\beta_2,\beta_3 \in [0,1]$ with $\beta_1 + \beta_2 + \beta_3 < 3$.
\end{theorem}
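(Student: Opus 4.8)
The plan is to first reduce the minimax statement \eqref{eq:minimax-lower-bound} to the per-instance bound \eqref{eq:lower-bound-on-error-probability}. This reduction is routine: \eqref{eq:lower-bound-on-error-probability} is asserted for \emph{every} consistent $\pi$, and it exhibits, for each sufficiently large $T$, an instance $v$ with $\mathbb{P}_v^\pi(\hat{I}_T\neq i^*(v))\cdot\exp(T/(c(\log d)^{\beta_1}(H_{\rm BAI}(v)^{\beta_2}+H_{\rm pri}(v)^{\beta_3})))>1$; taking $\sup_{v}$, then $\liminf_{T}$, then $\inf_{\pi}$ yields \eqref{eq:minimax-lower-bound}. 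Thus the entire task is the construction of a single hard instance for fixed $\pi,c,\beta_1,\beta_2,\beta_3$ and large $T$. Write $\mathcal{E}(w):=\mathbb{P}_w^\pi(\hat{I}_T\neq i^*(w))$ for the error probability under an instance $w$.

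For the construction I would fix a base instance $v$ with a convenient linear realisation (e.g.\ $\boldsymbol{\theta}^*$ together with standard-basis-type feature vectors so that $m:=d^2\wedge K$ arms are ``competitive''), a unique best arm $i^*$, and ordered gaps $\Delta_{(2)}\le\cdots\le\Delta_{(m)}$ whose sizes---along with $d$ and $\varepsilon$---are treated as free tuning parameters. For an index $i$ and a competitive arm $\iota$ among the $i$ smallest-gap arms, I would pair $v$ with an alternative $v'$ that agrees with $v$ except that $\nu_\iota$ is shifted upward so that $\iota$ becomes the unique best arm of $v'$, choosing $[0,1]$-supported rewards for which both ${\rm KL}(\nu_\iota,\nu'_\iota)$ and ${\rm TV}(\nu_\iota,\nu'_\iota)$ are controlled by the mean shift (of order $\Delta_{(i)}^2$ and $\Delta_{(i)}$, respectively). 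A pigeonhole over $\sum_j\mathbb{E}_v[N_{j,T}]=T$ selects $\iota$ with $\mathbb{E}_v[N_{\iota,T}]\lesssim T/i$.

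The heart of the proof is two complementary change-of-measure bounds on this instance. (i) A standard information-theoretic bound (Bretagnolle--Huber with the divergence decomposition, whose KL equals $\mathbb{E}_v[N_{\iota,T}]\,{\rm KL}(\nu_\iota,\nu'_\iota)$) gives $\max\{\mathcal{E}(v),\mathcal{E}(v')\}\gtrsim\exp(-CT/(i/\Delta_{(i)}^2))$, and optimising the index $i$ produces the $H_{\rm BAI}$ term. (ii) The privacy bound is the crux and is where the supplied lemmas enter. I would fix a threshold $n\sim T/i$ and consider $E=\{\hat{I}_T=i^*\}\cap\{N_{\iota,T}<n\}$. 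By Lemma~\ref{lem:change_of_pi} the probability of $E$ is unchanged on replacing $\pi$ by ${\rm ES}(\pi,n,\iota)$; by the corollary following Lemma~\ref{lem:change_of_pi} this early-stopped policy is $\varepsilon$-DP; hence Lemma~\ref{lemma:lemmakarwa2017} transfers $E$ between $v$ and $v'$ with the \emph{bounded} factor $e^{\varepsilon'}$, where $\varepsilon'=6\varepsilon n\,{\rm TV}(\nu_\iota,\nu'_\iota)\lesssim\varepsilon(T/i)\Delta_{(i)}$. Combining this with a Markov bound $\mathbb{P}_v(N_{\iota,T}\ge n)\le\mathbb{E}_v[N_{\iota,T}]/n\le\tfrac12$ and consistency ($\mathbb{P}_v(\hat{I}_T=i^*)\to1$) gives $\mathcal{E}(v')\ge e^{-\varepsilon'}/4\gtrsim\exp(-C'T/(i/(\varepsilon\Delta_{(i)})))$, and optimising over $i$ produces the $H_{\rm pri}$ term. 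The early-stopping cap is precisely what keeps $\varepsilon'$ proportional to $n$ rather than to $T$, and is indispensable here.

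Finally I would combine the two bounds: on whichever of $v,v'$ is worse (they have matching hardness up to relabelling), $\mathcal{E}\gtrsim\max\{\exp(-CT/H_{\rm BAI}),\exp(-C'T/H_{\rm pri})\}=\exp(-\min\{CT/H_{\rm BAI},C'T/H_{\rm pri}\})$. It therefore suffices to verify, for the constructed instance, that $c(\log d)^{\beta_1}(H_{\rm BAI}^{\beta_2}+H_{\rm pri}^{\beta_3})<\max\{H_{\rm BAI}/C,H_{\rm pri}/C'\}$. The strict constraint $\beta_1+\beta_2+\beta_3<3$ forces at least one exponent strictly below $1$, leaving a polynomial margin that I would exploit by tuning the free parameters: shrinking the gaps to inflate $H_{\rm BAI}$ when $\beta_2<1$, shrinking $\varepsilon$ to make $H_{\rm pri}$ dominate $H_{\rm BAI}$ when $\beta_3<1$, or choosing $d$ to tame $(\log d)^{\beta_1}$ when $\beta_1<1$. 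The hard part will be this last step---handling all admissible $(\beta_1,\beta_2,\beta_3)$ and an arbitrary $c$ simultaneously---together with the fact that pushing $\mathcal{E}$ up to the target level forces the gaps and $\varepsilon$ to depend on $T$, so that the fixed-$v$ notion of consistency must be invoked through a fixed base instance while the change of measure is carried out to a $T$-dependent alternative; aligning the divergence/TV constants and the pigeonhole index $i$ with the two maxima defining $H_{\rm BAI}$ and $H_{\rm pri}$ is the remaining bookkeeping.
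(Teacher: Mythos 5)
Your outline of the privacy half is essentially the paper's argument: the paper also fixes $n_\iota$ at a constant multiple of $\mathbb{E}_v[N_{\iota,T}]$, uses Lemma~\ref{lem:change_of_pi} to pass to ${\rm ES}(\pi,n_\iota,\iota)$, invokes the $\varepsilon$-DP group-privacy transfer of Lemma~\ref{lemma:lemmakarwa2017} with $\varepsilon'=6\varepsilon n_\iota\,{\rm TV}(\nu_\iota,\nu'_\iota)$, and combines Markov's inequality with consistency to get $\mathcal{E}(v')\ge \tfrac14 e^{-\varepsilon'}$. The paper also splits the theorem into the three boundary cases ($\beta_1<1$, $\beta_2<1$, $\beta_3<1$ with the other two exponents equal to $1$), which is the clean version of your ``tune the free parameters'' step.

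There is, however, a genuine gap. Both of your change-of-measure bounds are two-point arguments and therefore deliver exponents of the form $T/H_{\rm BAI}$ and $T/H_{\rm pri}$ \emph{without} a $\log d$ factor in the denominator. That factor is indispensable in the case $\beta_1\in[0,1)$, $\beta_2=\beta_3=1$: there you must show $D_{\rm achieved}>c(\log d)^{\beta_1}\bigl(H_{\rm BAI}+H_{\rm pri}\bigr)$ for an \emph{arbitrary} $c>0$, and since both hardness terms appear with exponent $1$ on the right, inflating the gaps or shrinking $\varepsilon$ inflates both sides equally and buys nothing; with $D_{\rm achieved}=\max\{H_{\rm BAI}/C,\,H_{\rm pri}/C'\}$ the requirement collapses to $1>2Cc(\log d)^{\beta_1}$, which fails once $c$ exceeds a fixed constant (note $\log d>1$ on the hard instances). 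The paper escapes this only because its bounds carry an extra $\log K$ in the exponent's denominator: for the BAI term it imports the Carpentier--Locatelli minimax bound $\exp(-401T/(\log(K)H_{\rm BAI}))$, and for the privacy term it runs the early-stopping/group-privacy argument simultaneously against $K$ alternatives with geometrically spaced gaps $\gamma_i=\gamma_{i-1}K$, extracting the $\log K$ from a weighted pigeonhole that produces the harmonic sum $\sum_{i=2}^{K}1/i\ge\log K-\log 2$. The surplus $(\log d)^{1-\beta_1}$ is then what absorbs the arbitrary constant $c$. Your single-pair construction cannot produce this factor, so the case $\beta_1<1$ (and hence the full theorem) does not follow from your argument as written; you would need to replace both two-point bounds with multi-instance versions.
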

Theorem~\ref{thm:lower_bound}, proved in Appendix~\ref{app:prf_lwr_bd}, implies that for any chosen $\beta \in [0,1)$ (arbitrarily close to $1$), there {\em does not exist} a consistent policy $\pi$ with an upper bound  on its error probability assuming any one of the following forms for {\em all} instances $ v\in\mathcal{P}$: $\exp\left(-\Omega\left(\frac{T}{(\log d)^\beta (H_{\rm BAI}(v) +H_{\rm pri}(v))}\right)\right) $, $\exp\left(-\Omega  \left(\frac{T}{ (\log d) (H_{\rm BAI}(v)^\beta +H_{\rm pri}(v) )  }\right)\right) $, or  $\exp\left(-\Omega  \left(\frac{T}{ (\log d) (H_{\rm BAI}(v) +H_{\rm pri}(v)^\beta )  }\right)\right) $.  In this sense, the dependencies of the upper bound in~\eqref{eq:upper-bound-DP-BAI} on $\log d$, $H_{\rm BAI}(v)$, and $H_{\rm pri}(v)$ are ``tight''. Also, in this precise sense,  none of these terms can be improved upon in general. 


\begin{remark}
    It is pertinent to highlight that the upper bound in \eqref{eq:upper-bound-DP-BAI} applies to {\em any} problem instance, whereas the lower bound in \eqref{eq:lower-bound-on-error-probability} is a {\em minimax} result that is applicable to {\em one or more} hard instances. An ongoing quest in fixed-budget BAI is to construct a policy with provably matching error probability upper and lower bounds for {\em all} problem instances.
\end{remark}

\section{Numerical Study}
\label{sec:experiment}
This section presents a numerical evaluation of our proposed {\sc DP-BAI} policy on synthetic data, and compares it with {\sc Baseline}, an algorithm which follows DP-BAI but for Lines 6 to 13 in Algorithm~\ref{alg:client}, i.e., {\sc Baseline} does not construct {\sc Max-Det} collections. We note that {\sc Baseline} is $\varepsilon$-DP for any $\varepsilon>0$, and bears similarities with {\sc Sequential Halving}~\citep{karnin2013almost} when $\varepsilon\rightarrow +\infty$ (i.e.,  non-private algorithm).
However, because it does not exploit the linear structure on the arm means, we will see that it performs poorly vis-\`a-vis {\sc DP-BAI}. In addition, we compare {\sc DP-BAI} with the state-of-the-art {\sc OD-LinBAI}~\citep{yang2022minimax} algorithm for fixed-budget best arm identification, which is a non-private algorithm and serves as an upper bound in performance (in terms of the error probability) of our algorithm. Also, we consider an $\varepsilon$-DP version of {\sc OD-LinBAI} which we call {\sc DP-OD}. A more comprehensive description of the {\sc DP-OD} algorithm is presented in Appendix~\ref{app:compare_to_DP_ODLinBAI}. 

Our synthetic instance is constructed as follows. We set $K=30$, $d=2$, and $\theta^*=[0.045\; 0.5]^\top$, $\mathbf{a}_1=[0 \ 1]^\top$, $\mathbf{a}_2=[0 \ 0.9]^\top$, $\mathbf{a}_3=[10 \ 0]^\top$, and $\mathbf{a}_i=[1\  \omega_i]^\top$ for all $i \in \{4,\ldots,30\}$, where $\omega_i$ is randomly generated from a uniform distribution on the interval $[0,0.8]$. Clearly, $\mu_1=0.5$, $\mu_2=\mu_3=0.45$, and $\mu_i=\omega_i/2+0.045$ for all $i \in \{4,\ldots,30\}$, thereby implying that arm $1$ is the best arm. The sub-optimality gaps are given by $\Delta_2=\Delta_3=0.05$ and $\Delta_i>0.05$ for all $i \in \{4,\ldots,30\}$; thus, arms $2$ and $3$ exhibit the smallest gaps.  In addition, we set $\nu_i$, the reward distribution of arm $i$, to be the uniform distribution supported on $[0, 2\mu_i]$ for all $i \in [K]$.

\begin{figure*}[t]
    \begin{subfigure}{0.49\textwidth}
        \centering
        \includegraphics[width=\textwidth]{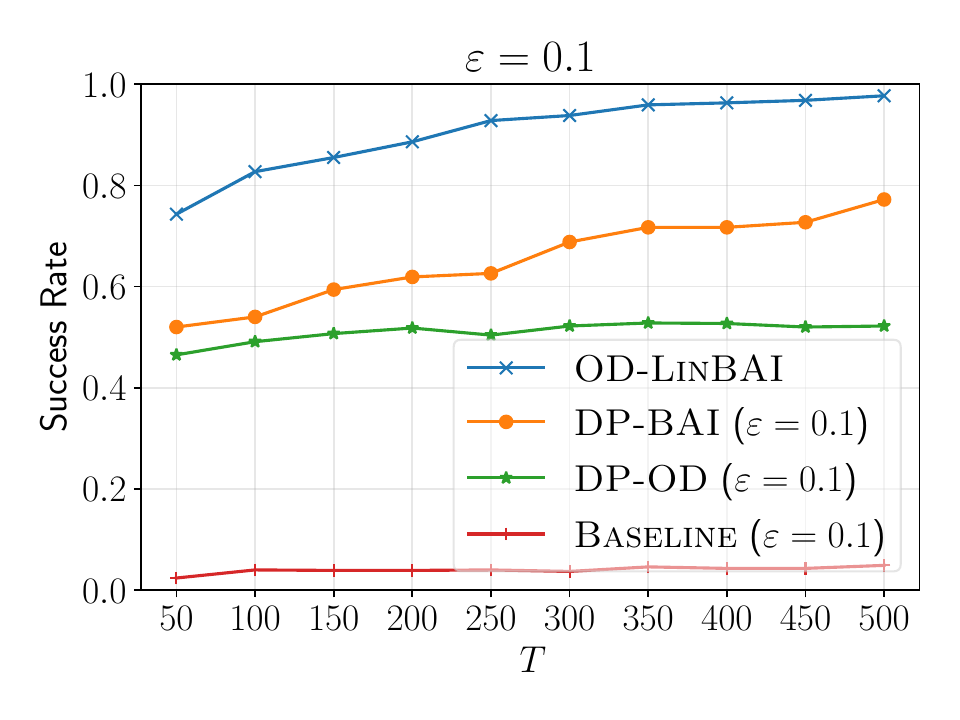}
    \end{subfigure}
    \begin{subfigure}{0.49\textwidth}
        \centering
        \includegraphics[width=\textwidth]{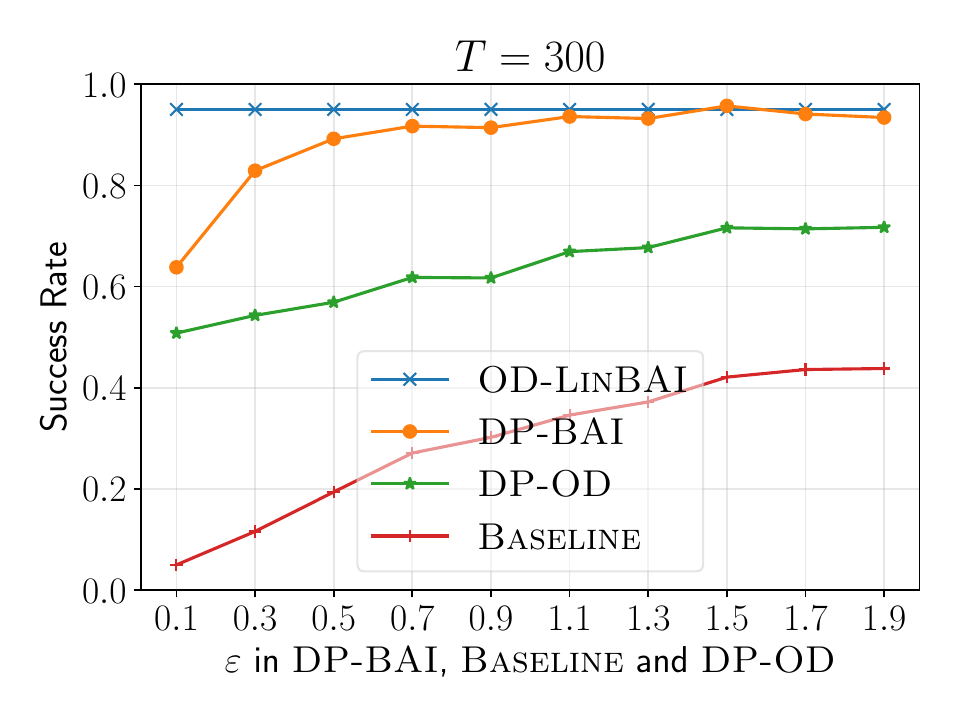}
    \end{subfigure}
    \caption{ Comparison of {\sc DP-BAI} to {\sc Baseline}, {\sc OD-LinBAI} and {\sc DP-OD} for different  values of $\varepsilon$. Note that $\varepsilon$ is not applicable to {\sc OD-LinBAI}.} \vspace{-.1in}
    \label{fig:fig1}
\end{figure*}

We run experiments with several choices for the budget $T$ and the privacy parameter $\varepsilon$, conducting $1000$ independent trials for each pair of $(T, \varepsilon)$ and reporting the fraction of trials in which the best arm is successfully identified.

The experimental results are shown in Figure~\ref{fig:fig1} for varying $T$ and $\varepsilon$ values respectively. As the results demonstrate, the {\sc DP-BAI} policy significantly outperforms {\sc Baseline} and {\sc DP-OD}, demonstrating that the utility of the \textsc{Max-Det} collection in exploiting the linear structure of the arm means. We also observe that as $\varepsilon\to+\infty$ (i.e., privacy requirement vanishes), the performances of {\sc DP-BAI} and the non-private state-of-the-art {\sc OD-LinBAI} algorithm are similar.

\section{Conclusions and Future Work}
This work has taken a first step towards understanding the effect of imposing a differential privacy constraint on the task of fixed-budget BAI in bandits with linearly structured mean rewards. 
Our contributions include the development and comprehensive analysis of a policy, namely {\sc DP-BAI}, which exhibits exponential decay in error probability with respect to the budget $T$, and demonstrates a dependency on the dimensionality of the arm vectors $d$ and a composite hardness parameter, which encapsulates contributions from both the standard fixed-budget BAI task and the imposed differential privacy stipulation. A distinguishing aspect in the design of this policy is the critical utilization of the {\sc Max-Det} collection, instead of existing tools like the G-optimal designs~\citep{yang2022minimax} and $\mathcal{XY}$-adaptive allocations~\citep{soare2014best}. Notably, we establish a minimax lower bound that underlines the inevitability of certain terms in the exponent of the error probability of {\sc DP-BAI}.

Some interesting directions for future research include extending our work to incorporate generalized linear bandits \citep{azizi} and neural contextual bandits \citep{zhou20neural}. Additionally, we aim to tackle the unresolved question brought forth post Theorem~\ref{thm:lower_bound}: does there exist an efficient fixed-budget BAI policy respecting the $\varepsilon$-DP requirement, whose error probability upper bound approximately matches   a {\em problem-dependent} lower bound? 

\subsection*{Acknowledgements} This research/project is supported by the National Research Foundation Singapore and DSO National Laboratories under the AI Singapore Programme (AISG Award No: AISG2-RP-2020-018) and the Singapore Ministry of Education Academic Research Fund Tier 2 under grant number A-8000423-00-00.

\newpage
\bibliographystyle{apalike}
\bibliography{references}

\newpage
\appendix
\begin{center}
{\Large\bf Supplementary Material for   \\
``Fixed-Budget Differentially Private Best Arm Identification'' }
\end{center}

\section{Useful facts}
In this section, we collate some useful facts that will be used in the subsequent proofs. 
\begin{lemma}[Hoeffding's inequality]
\label{lemma:hoffeding}
Let $X_1, \ldots, X_n$ be independent random variables such that $a_i \leq X_i \leq b_i$ almost surely for all $i \in [n]$, for some fixed constants $a_i \leq b_i$, $i \in [n]$. Let
$$
S_n=X_1+\cdots+X_n .
$$
Then, for all ${\epsilon>0}$,
\begin{align}
    \mathbb{P}\left(S_n-\mathbb{E}\left[S_n\right] \geq \epsilon\right) &\leq \exp \left(-\frac{2 \epsilon^2}{\sum_{i=1}^n\left(b_i-a_i\right)^2}\right), \label{eq:Hoeffding-1} \\
    \mathbb{P}\left( \left \lvert S_n-\mathbb{E}\left[S_n\right] \right \rvert \geq \epsilon\right) &\leq 2 \exp \left(-\frac{2 \epsilon^2}{\sum_{i=1}^n\left(b_i-a_i\right)^2}\right). \label{eq:Hoeffding-2}
\end{align}
\end{lemma}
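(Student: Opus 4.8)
The plan is to prove the one-sided tail bound \eqref{eq:Hoeffding-1} via the Chernoff (exponential Markov) method, and then obtain the two-sided bound \eqref{eq:Hoeffding-2} by symmetry and a union bound. First I would fix $s>0$ and write, using $\mathbb{P}(Z \ge \epsilon) = \mathbb{P}(e^{sZ} \ge e^{s\epsilon}) \le e^{-s\epsilon}\,\mathbb{E}[e^{sZ}]$ with $Z = S_n - \mathbb{E}[S_n]$, the inequality $\mathbb{P}(S_n - \mathbb{E}[S_n] \ge \epsilon) \le e^{-s\epsilon}\,\mathbb{E}[\exp(s(S_n-\mathbb{E}[S_n]))]$. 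Since the $X_i$ are independent, the centred increments $X_i - \mathbb{E}[X_i]$ are independent, so the moment-generating function factorises as $\mathbb{E}[\exp(s(S_n-\mathbb{E}[S_n]))] = \prod_{i=1}^n \mathbb{E}[\exp(s(X_i-\mathbb{E}[X_i]))]$. This reduces the problem to bounding the moment-generating function of a single bounded, centred random variable.

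The key ingredient is Hoeffding's lemma: if $Y$ satisfies $\mathbb{E}[Y]=0$ and $Y\in[a,b]$ almost surely, then $\mathbb{E}[e^{sY}] \le \exp(s^2(b-a)^2/8)$ for all $s\in\mathbb{R}$. I expect this to be the main obstacle, and I would prove it by analysing the cumulant generating function $\psi(s) \coloneqq \log \mathbb{E}[e^{sY}]$. One checks that $\psi(0)=0$ and $\psi'(0)=\mathbb{E}[Y]=0$, while $\psi''(s)$ equals the variance of $Y$ under the exponentially tilted probability measure $\mathrm{d}\mathbb{Q}_s \propto e^{sY}\,\mathrm{d}\mathbb{P}$. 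Because $Y$ remains supported on $[a,b]$ under any such tilt, Popoviciu's variance inequality gives $\psi''(s) \le (b-a)^2/4$ uniformly in $s$. A second-order Taylor expansion of $\psi$ about $0$ with Lagrange remainder then yields $\psi(s) \le s^2(b-a)^2/8$, which is exactly the claim.

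Finally, I would apply Hoeffding's lemma with $Y = X_i - \mathbb{E}[X_i]$, for which the range satisfies $b-a = b_i - a_i$, to obtain $\mathbb{P}(S_n-\mathbb{E}[S_n]\ge\epsilon) \le \exp(-s\epsilon + (s^2/8)\sum_{i=1}^n(b_i-a_i)^2)$. Optimising the exponent over $s>0$, which is minimised at $s^\star = 4\epsilon / \sum_{i=1}^n(b_i-a_i)^2$, produces precisely the bound in \eqref{eq:Hoeffding-1}. For \eqref{eq:Hoeffding-2}, I would apply \eqref{eq:Hoeffding-1} to the variables $-X_i$ (which lie in $[-b_i,-a_i]$, so each range $b_i-a_i$ is unchanged) to control the lower tail $\mathbb{P}(S_n-\mathbb{E}[S_n]\le-\epsilon)$, and combine the two one-sided estimates via $\mathbb{P}(|S_n-\mathbb{E}[S_n]|\ge\epsilon) \le \mathbb{P}(S_n-\mathbb{E}[S_n]\ge\epsilon) + \mathbb{P}(S_n-\mathbb{E}[S_n]\le-\epsilon)$, which accounts for the factor of $2$.
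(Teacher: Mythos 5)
Your proof is correct. The paper states Lemma~\ref{lemma:hoffeding} as a classical fact in its ``Useful facts'' section and provides no proof of its own, so there is no internal argument to compare against; your derivation --- the Chernoff bound, factorisation of the moment-generating function by independence, Hoeffding's lemma via the cumulant generating function $\psi$ (with $\psi''(s)$ identified as the variance of $Y$ under the exponential tilt and bounded by $(b-a)^2/4$ using Popoviciu's inequality), optimisation at $s^\star = 4\epsilon/\sum_{i=1}^n (b_i-a_i)^2$ yielding the exponent $-2\epsilon^2/\sum_{i=1}^n(b_i-a_i)^2$, and the reflection-plus-union-bound step for \eqref{eq:Hoeffding-2} --- is the standard, complete, and sound proof of the stated result.
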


\begin{definition}[Sub-exponential random variable]
Let $\tau \in \mathbb{R}$ and $b>0$. A random variable $X$ is said to be {\em sub-exponential} with parameters $(\tau^2, b)$ if
\begin{equation}
    \mathbb{E} \left[\exp(\lambda X)\right] \leq \exp\Big(\frac{\lambda^2 \tau^2}{2}\Big) \qquad \forall \, \lambda: |\lambda|<\frac{1}{b}.
    \label{eq:sub-exponential-rv}
\end{equation}
\end{definition}
\begin{lemma}[Linear combination of sub-exponential random variables]
\label{lemma:sub-exponential-combination}
Let $X_1, \ldots, X_n$ be independent, zero-mean sub-exponential random variables, where $X_i$ is $\left(\tau_i^2, b_i\right)$-sub-exponential. Then, for any $a_1,\dots,a_n \in \mathbb{R}$, the random variable $\sum_{i=1}^n a_i X_i$ is $(\tau^2, b)$-sub-exponential, where $\tau^2=\sum_{i=1}^n a_i^2 \tau_i^2$ and $b=\max _i b_i\left\lvert a_i\right\rvert$. 
\end{lemma}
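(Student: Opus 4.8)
The plan is to work directly with the moment generating function (MGF) characterization of sub-exponentiality in \eqref{eq:sub-exponential-rv}, exploiting the independence hypothesis to factorize the MGF of the linear combination. Writing $S \coloneqq \sum_{i=1}^n a_i X_i$, the first step is to observe that, by the mutual independence of $X_1,\ldots,X_n$, for every $\lambda \in \mathbb{R}$ the MGF of $S$ factorizes as
\begin{equation}
\mathbb{E}\left[\exp(\lambda S)\right] = \mathbb{E}\left[\prod_{i=1}^n \exp(\lambda a_i X_i)\right] = \prod_{i=1}^n \mathbb{E}\left[\exp(\lambda a_i X_i)\right].
\label{eq:mgf-factorize-proposal}
\end{equation}
This reduces the problem to controlling each scalar factor $\mathbb{E}[\exp(\lambda a_i X_i)]$ separately.

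Next I would apply the sub-exponential bound to each factor by identifying the dummy variable in \eqref{eq:sub-exponential-rv} with $\mu \coloneqq \lambda a_i$. Since $X_i$ is $(\tau_i^2, b_i)$-sub-exponential, we have $\mathbb{E}[\exp(\mu X_i)] \le \exp(\mu^2 \tau_i^2 / 2)$ whenever $|\mu| < 1/b_i$, which translates into
\begin{equation}
\mathbb{E}\left[\exp(\lambda a_i X_i)\right] \le \exp\left(\frac{\lambda^2 a_i^2 \tau_i^2}{2}\right) \qquad \text{whenever } |\lambda a_i| < \frac{1}{b_i}.
\label{eq:per-factor-bound-proposal}
\end{equation}
The condition $|\lambda a_i| < 1/b_i$ is equivalent to $|\lambda| < 1/(b_i |a_i|)$ for indices with $a_i \neq 0$, and is satisfied vacuously (for all $\lambda$) when $a_i = 0$. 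Requiring \eqref{eq:per-factor-bound-proposal} to hold simultaneously for every $i \in [n]$ therefore amounts to $|\lambda| < \min_{i : a_i \neq 0} 1/(b_i |a_i|) = 1/\max_i (b_i|a_i|) = 1/b$, which is precisely the admissible range dictated by the claimed parameter $b = \max_i b_i |a_i|$.

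Combining \eqref{eq:mgf-factorize-proposal} and \eqref{eq:per-factor-bound-proposal}, for all $\lambda$ with $|\lambda| < 1/b$ we obtain
\begin{equation}
\mathbb{E}\left[\exp(\lambda S)\right] \le \prod_{i=1}^n \exp\left(\frac{\lambda^2 a_i^2 \tau_i^2}{2}\right) = \exp\left(\frac{\lambda^2}{2}\sum_{i=1}^n a_i^2 \tau_i^2\right) = \exp\left(\frac{\lambda^2 \tau^2}{2}\right),
\label{eq:final-combine-proposal}
\end{equation}
with $\tau^2 = \sum_{i=1}^n a_i^2 \tau_i^2$, which is exactly the defining inequality \eqref{eq:sub-exponential-rv} for $S$ being $(\tau^2, b)$-sub-exponential. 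The argument is essentially routine given the MGF definition, so there is no deep obstacle; the only point demanding care is the bookkeeping in the third step, namely verifying that the intersection of the per-factor admissible intervals for $\lambda$ collapses to the single threshold $1/b$ with $b = \max_i b_i|a_i|$, including the edge case $a_i = 0$ (where the factor contributes $1 = \exp(0)$ and does not tighten the range). Establishing the valid range of $\lambda$ correctly is the crux that makes the stated parameters sharp.
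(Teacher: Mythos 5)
Your proof is correct. Note that the paper itself offers no proof of this lemma: it is stated without proof as a standard fact in the ``Useful facts'' appendix, and your argument---factorizing the moment generating function via independence, applying the defining bound \eqref{eq:sub-exponential-rv} to each factor with $\mu = \lambda a_i$, and intersecting the per-factor admissible intervals to obtain the threshold $|\lambda| < 1/b$ with $b = \max_i b_i |a_i|$ (including the $a_i = 0$ edge case)---is precisely the canonical proof of this result, with no gaps.
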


\begin{lemma}[Tail bounds of sub-exponential random variables]
\label{lemma:sub-exponential-concentration}
Suppose that $X$ is sub-exponential with parameters $\left(\tau^2, b\right)$. Then,
\begin{align}
 \mathbb{P}(X-\mathbb{E} [X] \geq \epsilon) \leq 
 \begin{cases}
     \displaystyle\exp \left(-\frac{\epsilon^2}{2 \tau^2}\right), & \displaystyle0 \leq \epsilon \leq \frac{\tau^2}{b},\\
     \displaystyle\exp \left(-\frac{\epsilon}{2 b}\right), & \displaystyle\epsilon > \frac{\tau^2}{b}. 
 \end{cases}
 \label{eq:sub-exponential-concentration}
\end{align}
\end{lemma}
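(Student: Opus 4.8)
The plan is to establish the two-sided bound by the standard Chernoff (exponential Markov) argument, combining the moment generating function (MGF) bound \eqref{eq:sub-exponential-rv} with an optimisation of the resulting exponent over a free parameter $\lambda$, subject to the admissibility constraint $|\lambda| < 1/b$. Before starting, I would record one preliminary observation: the MGF condition \eqref{eq:sub-exponential-rv} held in a neighbourhood of $\lambda = 0$ forces $\mathbb{E}[X] = 0$. Indeed, the map $\lambda \mapsto \mathbb{E}[\exp(\lambda X)] - \exp(\lambda^2\tau^2/2)$ is nonpositive on both sides of the origin and vanishes at the origin, so its derivative there, namely $\mathbb{E}[X]$, must be zero. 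Consequently $X - \mathbb{E}[X] = X$, and it suffices to bound $\mathbb{P}(X \ge \epsilon)$.

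Next, for any $\lambda \in [0, 1/b)$ I would apply Markov's inequality to $e^{\lambda X}$ and then invoke \eqref{eq:sub-exponential-rv}:
\begin{equation*}
\mathbb{P}(X \ge \epsilon) = \mathbb{P}\big(e^{\lambda X} \ge e^{\lambda \epsilon}\big) \le e^{-\lambda \epsilon}\, \mathbb{E}\big[e^{\lambda X}\big] \le \exp\Big(\tfrac{\lambda^2 \tau^2}{2} - \lambda \epsilon\Big) =: \exp\big(g(\lambda)\big).
\end{equation*}
It then remains to minimise the quadratic $g(\lambda) = \lambda^2 \tau^2/2 - \lambda \epsilon$ over the admissible range $\lambda \in [0, 1/b)$, and the two branches of \eqref{eq:sub-exponential-concentration} correspond exactly to whether or not the unconstrained minimiser is admissible.

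To carry out this one-dimensional optimisation, I would compute the unconstrained minimiser $\lambda^\star = \epsilon/\tau^2$ and split into two regimes. If $0 \le \epsilon \le \tau^2/b$, then $\lambda^\star \in [0, 1/b]$ is admissible and substitution yields $g(\lambda^\star) = -\epsilon^2/(2\tau^2)$, giving the first branch. If instead $\epsilon > \tau^2/b$, then $\lambda^\star > 1/b$, so $g$ is strictly decreasing on the entire admissible interval; its infimum is approached as $\lambda \uparrow 1/b$, producing the value $\tau^2/(2b^2) - \epsilon/b$. A short algebraic check shows $\tau^2/(2b^2) - \epsilon/b \le -\epsilon/(2b)$ exactly when $\epsilon \ge \tau^2/b$ (multiply through by $2b$ to obtain $\tau^2/b \le \epsilon$), which delivers the second branch.

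The one delicate point — and the main obstacle — is the \emph{strict} admissibility constraint $|\lambda| < 1/b$ in \eqref{eq:sub-exponential-rv}: in the heavy-tail regime $\epsilon > \tau^2/b$ the optimal $\lambda$ sits on (or beyond) the boundary $\lambda = 1/b$, so one cannot simply evaluate $g$ at $1/b$. The fix is a continuity/limiting argument: each bound $\mathbb{P}(X \ge \epsilon) \le \exp(g(\lambda))$ holds for every $\lambda < 1/b$, and since $g$ is continuous, taking $\lambda \uparrow 1/b$ transfers the bound to the limiting exponent $g(1/b)$. A symmetric, minor point arises at the boundary $\epsilon = \tau^2/b$ of the first regime, where $\lambda^\star = 1/b$ and the same limiting argument applies. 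Everything else — the Markov step, the MGF substitution, and the quadratic minimisation — is routine.
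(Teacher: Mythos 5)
Your proof is correct. The paper itself states this lemma without proof, as a standard fact collected in its ``Useful facts'' section, so there is no internal argument to compare against; your Chernoff-bound derivation is exactly the canonical textbook proof (optimise $g(\lambda)=\lambda^2\tau^2/2-\lambda\epsilon$ over $\lambda\in[0,1/b)$, with the two regimes determined by whether $\lambda^\star=\epsilon/\tau^2$ is admissible). Two details you handled that are worth affirming: (i) your opening observation that the paper's Definition of sub-exponentiality, being stated for $X$ rather than $X-\mathbb{E}[X]$, forces $\mathbb{E}[X]=0$ (the MGF is finite, hence differentiable, near $0$, and $f(\lambda)=\mathbb{E}[e^{\lambda X}]-e^{\lambda^2\tau^2/2}$ has a maximum at $0$, so $f'(0)=\mathbb{E}[X]=0$) correctly reconciles the definition with the centred tail bound in the statement; and (ii) the limiting argument $\lambda\uparrow 1/b$ is the right way to deal with the strict constraint $|\lambda|<1/b$, both at the boundary $\epsilon=\tau^2/b$ and in the heavy-tail regime, where your final check that $\tau^2/(2b^2)-\epsilon/b\le-\epsilon/(2b)$ precisely when $\epsilon\ge\tau^2/b$ is what yields the clean second branch.
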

{\color{black}
\section{An Example of the Policy-Specific Notations in Sec.~\ref{sec:dpbai}}

Consider $d=16$, $K=10,000$. Then, we  have $g_0=64$, $h_0=9936$, $M_1=4$, $M=10$ and $\lambda \approx 27.65$. In the following table, we display the values of $s_p$ for $p=1,\dots,M_1$.
\begin{center}
\begin{tabular}{ |c | c | c| }\hline
 $p$ & $s_p$  &$ (s_{p-1}-g_0)/(s_{p}-g_0)$ \\
  \hline 
 1 & 1000 & N.A.  \\ \hline 
 2 & 423 & $\approx 27.65$ \\\hline
3 &  77  & $\approx 27.68$\\ \hline
$4$ $(=M_1)$ & 64 & $\approx 27.62$ \\ \hline
\end{tabular}
\end{center}
For  $p=M_1+1,\dots,M$, we simply have $s_p=2^{10-p}$, i.e., $s_5=32, s_6=16,\ldots,s_{10}=1$.

Notice that the values in the third column of the above table (apart from the first row which is not applicable) are nearly equal to the value of $\lambda$. Based on the above table, we may bifurcate the operations of our algorithm into two distinct stages. Roughly speaking, in the first stage (i.e., phases $1$ to $M_1$), the algorithm aims to reduce the number of arms to a predetermined quantity (i.e., $g_0$) as quickly as possible. Following this, in the second stage (i.e., phase $M_1+1$ to $M$), the algorithm iteratively halves the number of arms. This process is designed to gradually concentrate arm pulls on those arms with small sub-optimality gaps (including the best arm) as much as possible, thereby improving the accuracy of estimating the mean of the best arm and encouraging effective identification of the best arm.

\section{Equivalence of Our Notion of DP with the Notion of Table-DP of \texorpdfstring{\citet{azize2023interactive}}{azize2023interactive}}

In this section, we discuss the notion of {\em table-DP} introduced in the recent work by~\citet[Section 2]{azize2023interactive}, and we will show that our definition of differential privacy in Definition~\ref{defn:varepsilon-differential-privacy} is equivalent to the notion of table-DP.
Firstly, to maintain notational consistency with~\citet{azize2023interactive}, we introduce a {\em dual decision maker} who obtains the reward $X_{A_t,t}$ rather than the reward $X_{A_t,N_{A_t,t}}$ upon pulling arm $A_t$  at time step $t$; see Section~\ref{sec:notations} for other relevant notations. Then, for any policy $\pi$, we write $D(\pi)$ to denote its {\em dual policy} obtained by substituting its decision maker with the dual decision maker. Clearly, in the non-private setting, any policy is equivalent to its dual policy. We demonstrate below that the same conclusion holds under DP considerations too. To begin, we formally define DP for the dual policy.
\begin{definition}[Column-neighbouring]
We say that $\mathbf{x}, \mathbf{x}^\prime \in \mathcal{X}$ are {\em column-neighbouring} if they differ in exactly one column, i.e., there exists $t \in [T]$ such that $\mathbf{x}_{\cdot,t} \neq \mathbf{x}_{\cdot,t}^\prime$ and $\mathbf{x}_{\cdot,s} = \mathbf{x}_{\cdot,s}^\prime$ for all $s \neq t$, where $\mathbf{x}_{\cdot,s}\coloneqq(\mathbf{x}_{i,s})_{i=1}^K $ and $\mathbf{x}^\prime_{\cdot,s}\coloneqq(\mathbf{x}^\prime_{i,s})_{i=1}^K $ . 
\end{definition}

\begin{definition}[Table-DP]
\label{defn:table-varepsilon-differential-privacy}
Given any $\varepsilon>0$ and a randomised policy $\mathcal{M}:\mathcal{X} \to [K]^{T+1}$, we say that the dual policy $D(\mathcal{M})$ satisfies {\em $\varepsilon$-table-DP} if, for any pair of column-neighbouring $\mathbf{x}, \mathbf{x}' \in \mathcal{X}$,
\begin{equation*}
    \mathbb{P}^{D(\mathcal{M})}\left(D(\mathcal{M})\left(\mathbf{x}\right) \in \mathcal{S} \right) \le e^\varepsilon \, \mathbb{P}^{D(\mathcal{M})}\left(D(\mathcal{M})(\mathbf{x}')\in \mathcal{S}\right) \quad \forall\, \mathcal{S} \subset [K]^{T+1}.
    \label{eq:varepsilon-table-DP}
\end{equation*}
\end{definition}
For any $\mathbf{z}=(z_t)_{t=1}^{T+1} \in [K]^{T+1}$, we denote $[\mathbf{z}]_i^j\coloneqq (z_t)_{t=i}^{j} \in [K]^{j-i+1}$. Then, we have the following alternative characterization of table-DP.

\begin{corollary}
\label{defn:table-varepsilon-differential-privacy-equivalent}
Given any $\varepsilon>0$ and a randomised policy $\mathcal{M}:\mathcal{X} \to [K]^{T+1}$, the dual policy $D(\mathcal{M})$ satisfies {\em $\varepsilon$-table-DP} if and only if for any $t\in[T]$ and any pair of column-neighbouring $\mathbf{x}, \mathbf{x}' \in \mathcal{X}$ with $\mathbf{x}_{\cdot,t} \neq \mathbf{x}_{\cdot,t}^\prime$,
\begin{equation*}\mathbb{P}^{D(\mathcal{M})}\left([D(\mathcal{M})\left(\mathbf{x}\right)]_{t+1}^{T+1} \in \mathcal{S} \right) \le e^\varepsilon \, \mathbb{P}^{D(\mathcal{M})} \left(\left[D(\mathcal{M})(\mathbf{x}') \right]_{t+1}^{T+1}\in \mathcal{S} \right) \quad \forall\, \mathcal{S} \subset [K]^{T-t+1}.
    \label{eq:varepsilon-table-DP-corollary}
\end{equation*} 
\end{corollary}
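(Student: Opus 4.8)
The plan is to exploit the sequential product structure of the dual policy $D(\mathcal{M})$ and to convert the event-based inequalities of Definition~\ref{defn:table-varepsilon-differential-privacy} into pointwise likelihood-ratio statements, which is legitimate because the output space $[K]^{T+1}$ is finite (pure $\varepsilon$-DP for a finite range is equivalent to the pmf ratio being bounded by $e^\varepsilon$ pointwise). Writing the law of an output sequence $\mathbf{z}=(z_1,\dots,z_{T+1})$ under input $\mathbf{x}$ as a product of the sampling kernels and the terminal recommendation kernel, and recalling that the dual decision maker observes at step $s$ the reward $x_{z_s,s}$ drawn from \emph{column} $s$, the factor governing $z_s$ depends on $\mathbf{x}$ only through columns $1,\dots,s-1$. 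The first step is to record this precisely: each $z_s$ is a measurable function of columns $1,\dots,s-1$ and the internal randomness of $D(\mathcal{M})$, so $z_s$ is influenced by column $t$ only when $s\ge t+1$.

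Two consequences follow for any column-neighbouring pair $\mathbf{x},\mathbf{x}'$ differing exactly in column $t$. First, the prefix $[\mathbf{z}]_1^{t}$ has \emph{identical} law under $\mathbf{x}$ and $\mathbf{x}'$, since its generating kernels see only columns $1,\dots,t-1$, which the two inputs share. Second, in the factorised likelihood the first $t$ factors coincide for $\mathbf{x}$ and $\mathbf{x}'$, so the per-sequence likelihood ratio collapses onto the suffix:
\[
\frac{\mathbb{P}^{D(\mathcal{M})}\big(D(\mathcal{M})(\mathbf{x})=\mathbf{z}\big)}{\mathbb{P}^{D(\mathcal{M})}\big(D(\mathcal{M})(\mathbf{x}')=\mathbf{z}\big)}
=\frac{\mathbb{P}^{D(\mathcal{M})}\big([D(\mathcal{M})(\mathbf{x})]_{t+1}^{T+1}=[\mathbf{z}]_{t+1}^{T+1}\,\big|\,[\mathbf{z}]_1^{t}\big)}{\mathbb{P}^{D(\mathcal{M})}\big([D(\mathcal{M})(\mathbf{x}')]_{t+1}^{T+1}=[\mathbf{z}]_{t+1}^{T+1}\,\big|\,[\mathbf{z}]_1^{t}\big)}.
\]
The forward direction is then immediate: given a suffix event $\mathcal{S}\subset[K]^{T-t+1}$, apply Definition~\ref{defn:table-varepsilon-differential-privacy} to the lifted event $[K]^{t}\times\mathcal{S}$ and marginalise over the shared-law prefix to obtain the claimed suffix inequality.

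For the backward direction I would condition on the prefix: decomposing $\mathbb{P}^{D(\mathcal{M})}(D(\mathcal{M})(\mathbf{x})\in\mathcal{S})$ over the value $\mathbf{a}$ of $[\mathbf{z}]_1^{t}$, the prefix weights are common to $\mathbf{x}$ and $\mathbf{x}'$, so it suffices to compare, for each fixed $\mathbf{a}$, the conditional suffix laws under the two inputs and then re-average. The main obstacle is exactly here. The stated hypothesis controls the \emph{marginal} suffix law (averaged over the random prefix), whereas table-DP is equivalent, through the pointwise characterisation above, to control of the \emph{conditional} suffix law for every prefix; a bound on a mixture does not in general transfer to its components. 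Closing this gap is the crux, and I expect it to hinge on the structural fact from the first step that the prefix is a function of columns other than $t$ and therefore carries no information about the altered column $t$, so that conditioning on the prefix does not re-expose the sensitive coordinate. In particular, when the dual policy's arm schedule is a deterministic function of the observed history the prefix is degenerate and the marginal and conditional suffix laws coincide, making the passage automatic; the general randomized case is where the delicate argument is needed. Once the conditional suffix inequality is secured, re-averaging against the common prefix weights recovers table-DP and completes the equivalence.
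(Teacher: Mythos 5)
The paper states this corollary without proof, so there is no official argument to compare your attempt against; judging it on its own terms, the forward implication is fine (take $\mathcal{S}' = [K]^{t}\times\mathcal{S}$ in Definition~\ref{defn:table-varepsilon-differential-privacy}; no marginalisation over the prefix is even needed), but the backward implication contains a genuine gap that you correctly flag and then do not close. Your hope that the gap closes because the prefix ``carries no information about the altered column'' is misplaced. The prefix $[\mathbf{z}]_1^{t}=(A_1,\dots,A_t)$ does have a common law under $\mathbf{x}$ and $\mathbf{x}'$, but it contains $A_t$, and the suffix depends on column $t$ precisely through the observed entry $x_{A_t,t}$. Conditioning on the prefix therefore \emph{does} re-expose the sensitive column: it tells the adversary which coordinate of column $t$ drove the subsequent outputs, and the marginal suffix law can average this dependence away while the conditional suffix laws remain completely discrepant.

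A concrete obstruction: take $K=2$, $T=1$, let $A_1$ be uniform on $\{1,2\}$ independently of the data, and let $\hat{I}_1=1$ if $x_{A_1,1}>1/2$ and $\hat{I}_1=2$ otherwise. For the column-neighbouring pair $\mathbf{x}$ with $(x_{1,1},x_{2,1})=(1,0)$ and $\mathbf{x}'$ with $(x'_{1,1},x'_{2,1})=(0,1)$, the marginal law of the suffix $\hat{I}_1$ is uniform on $\{1,2\}$ under both inputs, so the displayed suffix inequality holds with $\varepsilon=0$; yet $\mathbb{P}_{\mathbf{x}}\big((A_1,\hat{I}_1)=(1,1)\big)=1/2$ while $\mathbb{P}_{\mathbf{x}'}\big((A_1,\hat{I}_1)=(1,1)\big)=0$, so table-DP fails for every finite $\varepsilon$. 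Hence the passage from the mixture bound to the componentwise bound is not merely ``delicate''; with the hypothesis as you (and the corollary) state it, it is impossible. The equivalence does go through if the suffix inequality is instead required to hold \emph{conditionally on every realization of the prefix} $[\mathbf{z}]_1^{t}$: then the joint pmf factorises into the common prefix pmf times the conditional suffix pmf, and your likelihood-ratio identity finishes the proof in one line. That strengthened hypothesis is what your argument actually needs, and you should either adopt it or explain why the marginal version suffices for the specific mechanisms to which the corollary is later applied.
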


In the following, we  demonstrate that the notions of DP in  Definition~\ref{defn:table-varepsilon-differential-privacy} and Definition~\ref{defn:varepsilon-differential-privacy} are equivalent after some slight modifications in notations. 
\begin{proposition}
\label{prop:pi_to_d_pi}
Fix any policy $\pi$. If $\pi$ satisfies $\varepsilon$-DP, then $D(\pi)$ satisfies $\varepsilon$-table-DP.
\end{proposition}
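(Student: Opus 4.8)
The plan is to reduce $\varepsilon$-table-DP of $D(\pi)$ to the already-assumed $\varepsilon$-DP of $\pi$ by exhibiting a correspondence between the two notions of neighbouring inputs. The key observation is that the dual decision maker receives $X_{A_t,t}$ (indexed by the wall-clock time $t$) rather than $X_{A_t,N_{A_t,t}}$ (indexed by the number of pulls). Thus, whereas a single flipped entry in the ``pull-indexed'' table $\mathbf{x}=(x_{i,t})$ corresponds to the standard neighbouring relation of Definition~\ref{defn:varepsilon-differential-privacy}, in the dual viewpoint the rewards actually consumed at each time step are read off column-by-column. I would therefore first set up, for a fixed realisation of the arm-selection sequence $(A_1,\dots,A_T)$, an explicit bijection between the reward entries that $\pi$ reads and those that $D(\pi)$ reads, making precise that the column-neighbouring relation for the dual viewpoint is the image of the single-entry neighbouring relation under this re-indexing.

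Next I would fix a pair of column-neighbouring inputs $\mathbf{x},\mathbf{x}'\in\mathcal{X}$ differing only in column $t_0$, i.e.\ $\mathbf{x}_{\cdot,t}=\mathbf{x}'_{\cdot,t}$ for all $t\neq t_0$. The crucial point is that, because only one arm is pulled at each time step, the dual decision maker actually \emph{consumes} only one entry of column $t_0$ (namely $x_{A_{t_0},t_0}$), and the remaining $K-1$ entries of that column are never observed. Hence, along any sample path, $D(\pi)$ run on $\mathbf{x}$ and on $\mathbf{x}'$ either reads identical rewards (if the consumed entry of column $t_0$ happens to agree) or differs in exactly the single reward consumed at time $t_0$. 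I would then translate this single differing reward back into the pull-indexed representation: it corresponds to a single entry $x_{i,n}$ versus $x'_{i,n}$ with $i=A_{t_0}$ and $n=N_{i,t_0}$, so the pull-indexed tables induced along this path are neighbouring in the sense required by Definition~\ref{defn:varepsilon-differential-privacy}. Applying the assumed $\varepsilon$-DP guarantee of $\pi$ to this induced neighbouring pair, and integrating (or summing) over the randomness in the arm selections, yields the desired inequality $\mathbb{P}^{D(\pi)}(D(\pi)(\mathbf{x})\in\mathcal{S})\le e^{\varepsilon}\,\mathbb{P}^{D(\pi)}(D(\pi)(\mathbf{x}')\in\mathcal{S})$ for every $\mathcal{S}\subset[K]^{T+1}$, which is exactly $\varepsilon$-table-DP.

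The main obstacle I anticipate is handling the coupling between the (random) arm-selection sequence and the place where $\mathbf{x}$ and $\mathbf{x}'$ differ. Because the arm pulled at time $t_0$ is itself a function of the earlier rewards, and those earlier rewards are identical under $\mathbf{x}$ and $\mathbf{x}'$ (the two tables agree on all columns $t\neq t_0$), the distribution of the history up to time $t_0$ is identical under both inputs; this is what lets me condition on a common prefix and localise the discrepancy to a single consumed reward. Making this conditioning argument rigorous — showing that the induced neighbouring relation holds path-by-path and that the privacy guarantee of $\pi$ may be invoked after fixing the prefix — is the delicate step. I would formalise it using the alternative characterisation of table-DP in Corollary~\ref{defn:table-varepsilon-differential-privacy-equivalent}, which isolates the conditional law of the outputs $[D(\mathcal{M})(\mathbf{x})]_{t_0+1}^{T+1}$ after the differing column and thereby sidesteps the need to track the (identically distributed) prefix explicitly.
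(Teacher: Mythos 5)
Your proposal is correct and follows essentially the same route as the paper: both hinge on the observation that only one entry of the differing column is ever consumed, so column-neighbouring time-indexed tables induce pull-indexed tables that are neighbouring (or identical) in the sense of Definition~\ref{defn:varepsilon-differential-privacy}, after which the $\varepsilon$-DP of $\pi$ is invoked. The paper executes your "condition on a common prefix and localise the discrepancy" step by fixing a single output sequence $\mathbf{z}\in[K]^{T+1}$ (which makes the re-indexing $n_{i,t}=\sum_{s\le t}\mathbf{1}_{\{z_s=i\}}$ deterministic), proving the $e^{\varepsilon}$ bound pointwise on singletons, and summing over $\mathbf{z}\in\mathcal{S}$ — a cleaner formalisation than routing through Corollary~\ref{defn:table-varepsilon-differential-privacy-equivalent}, but the same argument.
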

\begin{proof}
Fix any $\mathbf{z}=(z_t)_{t=1}^{T+1} \in [K]^{T+1}$, and any pair of column-neighbouring $\mathbf{x}^D, \mathbf{x}'^D \in \mathcal{X}$. For $i\in[K]$ and $t\in[T]$, we let 
\begin{equation}
\label{eq:construction_n}
n_{i,t}\coloneqq \sum_{t=1}^T \mathbf{1}_{\{z_t=i\}}.
\end{equation}
Notice that $n_{i,t}$ is equal to $N_{i,t}$ if $A_\tau=z_\tau$ for all $ \tau \le t$. 
Then, we construct $\mathbf{x}, \mathbf{x}^\prime \in \mathcal{X}$ by defining for all $t\in [T]$
\begin{equation}
\label{eq:construction_neighboring}
\mathbf{x}_{z_t,n_{z_t,t}} \coloneqq \mathbf{x}^D_{z_t,t} \quad \mbox{and}\quad
\mathbf{x}'_{z_t,n_{z_t,t}} \coloneqq \mathbf{x}'^D_{z_t,t}, 
\end{equation}
and for all  $(i,j) \in [K]\times [T] \setminus \{(z_t,n_{z_t,t}) \given[\big] t\in[T]\}$,
\begin{equation*}
\mathbf{x}_{i,j} \coloneqq 0 \quad \mbox{and}\quad
\mathbf{x}'_{i,j} \coloneqq 0. 
\end{equation*}
Hence, the fact that $\mathbf{x}^D, \mathbf{x}'^D \in \mathcal{X}$ are column-neighbouring, implies that either $\mathbf{x}$ and $\mathbf{x}'$ are neighbouring or $\mathbf{x} =\mathbf{x}'$. That is, by the assumption that $\pi$ satisfies $\varepsilon$-DP, we have
\begin{equation}
\label{eq:dp_to_table_final1}
\mathbb{P}^\pi\big(\pi(\mathbf{x}) = \mathbf{z} \big) \le  e^{\varepsilon}\mathbb{P}^\pi\big(\pi(\mathbf{x}') = \mathbf{z} \big).
\end{equation}
In addition, by~\eqref{eq:construction_n}~ and \eqref{eq:construction_neighboring} we obtain that
\begin{align}
\mathbb{P}^\pi\big(\pi(\mathbf{x}) &= \mathbf{z} \big) = \mathbb{P}^{D(\pi)}\big(D(\pi)(\mathbf{x}^D) = \mathbf{z} \big)  \quad \mbox{and} \nonumber \\
\mathbb{P}^\pi\big(\pi(\mathbf{x}')& = \mathbf{z} \big) = \mathbb{P}^{D(\pi)}\big(D(\pi)(\mathbf{x}'^D) = \mathbf{z} \big).  \label{eq:dp_to_table_final2}
\end{align}
Finally, combining~\eqref{eq:dp_to_table_final1} and~\eqref{eq:dp_to_table_final2}, we have
$$
\mathbb{P}^{D(\pi)}\big(D(\pi)(\mathbf{x}^D) = \mathbf{z} \big) \le  e^{\varepsilon}\mathbb{P}^{D(\pi)}\big(D(\pi)(\mathbf{x}'^D) = \mathbf{z} \big),
$$
which completes the desired proof.
\end{proof}

\begin{proposition}
\label{prop:d_pi_to_pi}
Fix any policy $\pi$. If $D(\pi)$ satisfies $\varepsilon$-table-DP, then $\pi$ satisfies $\varepsilon$-DP.
\end{proposition}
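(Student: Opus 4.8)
The plan is to run the construction from the proof of Proposition~\ref{prop:pi_to_d_pi} in reverse: starting from a neighbouring pair in the pull-count indexing, I would build a column-neighbouring pair in the time indexing and invoke the $\varepsilon$-table-DP guarantee of $D(\pi)$. Since the output space $[K]^{T+1}$ is discrete, the set inequality in \eqref{eq:varepsilon-differential-privacy-definition} follows by summing the point-mass inequalities $\mathbb{P}^\pi(\pi(\mathbf{x}) = \mathbf{z}) \le e^\varepsilon\, \mathbb{P}^\pi(\pi(\mathbf{x}') = \mathbf{z})$ over $\mathbf{z} \in \mathcal{S}$, so it suffices to establish these point inequalities for every fixed output $\mathbf{z} = (z_t)_{t=1}^{T+1}$.

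First I would fix a neighbouring pair $\mathbf{x}, \mathbf{x}' \in \mathcal{X}$ differing only at a single location $(i_0, j_0)$, and fix an output $\mathbf{z}$. For this fixed $\mathbf{z}$ the arm-pull sequence $z_1, \ldots, z_T$ is deterministic, so the pull-count map $k(t) \coloneqq \sum_{s=1}^{t} \mathbf{1}_{\{z_s = z_t\}}$ (which coincides with $n_{z_t,t}$ of \eqref{eq:construction_n}) is well-defined, and $t \mapsto (z_t, k(t))$ is injective on $[T]$. Reversing \eqref{eq:construction_neighboring}, I would define the dual inputs $\mathbf{x}^D, \mathbf{x}'^D \in \mathcal{X}$ by $\mathbf{x}^D_{z_t, t} \coloneqq \mathbf{x}_{z_t, k(t)}$ and $\mathbf{x}'^D_{z_t, t} \coloneqq \mathbf{x}'_{z_t, k(t)}$ for all $t \in [T]$, and set every remaining entry (those at $(i,t)$ with $i \neq z_t$) to $0$; injectivity makes the assignment consistent and all entries lie in $[0,1]$.

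The key structural observation to verify next is that $\mathbf{x}^D$ and $\mathbf{x}'^D$ are either column-neighbouring or identical. Indeed, the $j_0$-th pull of arm $i_0$ occurs at a \emph{unique} time $t_0$ along the fixed path $\mathbf{z}$ (if it occurs at all); the two dual tables agree in every column $t$ with $(z_t, k(t)) \neq (i_0, j_0)$ and can differ only in column $t_0$, so they differ in at most one column. This is the step I expect to be the crux: it hinges precisely on the fact that, under time-indexing, the single perturbed pull-count location maps to a single column, so a one-location perturbation cannot spread across several time steps.

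Finally, along the fixed path $\mathbf{z}$ the reward consumed by $D(\pi)$ at time $t$ under input $\mathbf{x}^D$ is $\mathbf{x}^D_{z_t, t} = \mathbf{x}_{z_t, k(t)}$, which is exactly the reward consumed by $\pi$ at time $t$ under input $\mathbf{x}$; hence, as in \eqref{eq:dp_to_table_final2}, $\mathbb{P}^\pi(\pi(\mathbf{x}) = \mathbf{z}) = \mathbb{P}^{D(\pi)}(D(\pi)(\mathbf{x}^D) = \mathbf{z})$, and likewise for the primed versions. Applying the $\varepsilon$-table-DP hypothesis (Definition~\ref{defn:table-varepsilon-differential-privacy} with $\mathcal{S} = \{\mathbf{z}\}$) to the column-neighbouring pair $\mathbf{x}^D, \mathbf{x}'^D$—the inequality being trivial when they coincide—yields $\mathbb{P}^\pi(\pi(\mathbf{x}) = \mathbf{z}) \le e^\varepsilon\, \mathbb{P}^\pi(\pi(\mathbf{x}') = \mathbf{z})$. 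Summing over $\mathbf{z} \in \mathcal{S}$ then establishes \eqref{eq:varepsilon-differential-privacy-definition} for $\pi$, completing the proof.
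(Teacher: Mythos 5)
Your proposal is correct and follows essentially the same route as the paper's proof of Proposition~\ref{prop:d_pi_to_pi}: fix an output $\mathbf{z}$, use the pull-count map along $\mathbf{z}$ to build dual inputs $\mathbf{x}^D,\mathbf{x}'^D$ that are column-neighbouring or identical, transfer the point-mass probabilities between $\pi$ and $D(\pi)$, apply the table-DP hypothesis, and sum over $\mathcal{S}$. Your version is in fact slightly more careful than the paper's (the injectivity of $t\mapsto(z_t,k(t))$ and the correct running-count definition, where the paper's displayed formula for $n_{i,t}$ contains a typo), but the argument is the same.
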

\begin{proof}
Fix any $\mathbf{z}=(z_i)_{i=1}^{T+1} \in [K]^{T+1}$, and any pair of neighbouring $\mathbf{x}, \mathbf{x}' \in \mathcal{X}$. Again, for $i\in[K]$ and $t\in[T]$ we let 
\begin{equation}
\label{eq:new_construction_n}
n_{i,t}\coloneqq \sum_{t=1}^T \mathbf{1}_{\{z_t=i\}}.
\end{equation}

Then, we construct $\mathbf{x}^D, \mathbf{x}'^D \in \mathcal{X}$ by defining for all $t\in [T]$
\begin{equation}
\label{eq:new_construction_neighboring}
\mathbf{x}^D_{z_t,t} \coloneqq \mathbf{x}_{z_t,n_{z_t,t}}  \quad \mbox{and}\quad
\mathbf{x}'^D_{z_t,t} \coloneqq \mathbf{x}'_{z_t,n_{z_t,t}},\end{equation}
and for all  $(i,j) \in [K]\times [T] \setminus \{(z_t,t) \given[\big] t\in[T]\}$,
\begin{equation*}
\mathbf{x}^D_{i,j} \coloneqq 0  \quad \mbox{and}\quad
\mathbf{x}'^D_{i,j} \coloneqq 0. 
\end{equation*}
Hence, the fact that $\mathbf{x}, \mathbf{x}' \in \mathcal{X}$ are neighbouring, implies that either $\mathbf{x}^D$ and $\mathbf{x}'^D$ are column-neighbouring or $\mathbf{x}^D=\mathbf{x}'^D$. That is, by the assumption that $D(\pi)$ satisfies $\varepsilon$-table-DP, we have
\begin{equation}
\label{eq:new_dp_to_table_final1}
\mathbb{P}^{D(\pi)}\big(D(\pi)(\mathbf{x}^D) = \mathbf{z} \big) \le  e^{\varepsilon}\mathbb{P}^{D(\pi)}\big(D(\pi)(\mathbf{x}'^D) = \mathbf{z} \big).
\end{equation}
In addition, by~\eqref{eq:new_construction_n} and \eqref{eq:new_construction_neighboring} we obtain
\begin{align}
\mathbb{P}^\pi\big(\pi(\mathbf{x}) &= \mathbf{z} \big) = \mathbb{P}^{D(\pi)}\big(D(\pi)(\mathbf{x}^D) = \mathbf{z} \big)   \quad \mbox{and} \nonumber \\
\mathbb{P}^\pi\big(\pi(\mathbf{x}') &= \mathbf{z} \big) = \mathbb{P}^{D(\pi)}\big(D(\pi)(\mathbf{x}'^D) = \mathbf{z} \big).  \label{eq:new_dp_to_table_final2}
\end{align}
Finally, combining~\eqref{eq:new_dp_to_table_final1} and~\eqref{eq:new_dp_to_table_final2}, we have
$$
\mathbb{P}^\pi\big(\pi(\mathbf{x}) = \mathbf{z} \big) \le  e^{\varepsilon}\mathbb{P}^\pi\big(\pi(\mathbf{x}') = \mathbf{z} \big),
$$
which completes the proof.
\end{proof}
Combining Propositions~\ref{prop:pi_to_d_pi} and~\ref{prop:d_pi_to_pi}, we obtain the following corollary.
\begin{corollary}
A policy $\pi$ satisfies $\varepsilon$-DP if and only if $D(\pi)$ satisfies $\varepsilon$-table-DP.
\end{corollary}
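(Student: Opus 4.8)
The plan is to obtain the equivalence as an immediate consequence of the two preceding propositions by treating the biconditional as a pair of implications. For the ``only if'' direction, I would fix an arbitrary policy $\pi$, assume that $\pi$ satisfies $\varepsilon$-DP, and invoke Proposition~\ref{prop:pi_to_d_pi} verbatim to conclude that its dual $D(\pi)$ satisfies $\varepsilon$-table-DP. For the ``if'' direction, I would assume instead that $D(\pi)$ satisfies $\varepsilon$-table-DP and invoke Proposition~\ref{prop:d_pi_to_pi} verbatim to conclude that $\pi$ satisfies $\varepsilon$-DP. Since $\pi$ is an arbitrary but fixed policy in both propositions, conjoining the two implications yields the stated equivalence, and the proof is complete.

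Conceptually, the reason the two propositions dovetail into a clean biconditional is that they are built on mutually inverse change-of-index constructions. In Proposition~\ref{prop:pi_to_d_pi}, a column-neighbouring pair $(\mathbf{x}^D, \mathbf{x}'^D)$ in the dual (time-indexed) table is lifted, via the pull-count reindexing $n_{i,t}=\sum_{s\le t}\mathbf{1}_{\{z_s=i\}}$ induced by a fixed output $\mathbf{z}$, to a pair $(\mathbf{x},\mathbf{x}')$ that is either neighbouring or identical in the primal (pull-indexed) table, while Proposition~\ref{prop:d_pi_to_pi} performs exactly the reverse lift. In both directions the crucial identity is the distributional equality $\mathbb{P}^{\pi}(\pi(\mathbf{x})=\mathbf{z}) = \mathbb{P}^{D(\pi)}(D(\pi)(\mathbf{x}^D)=\mathbf{z})$, which holds because conditioning on the realised action sequence $\mathbf{z}$ forces $N_{i,t}=n_{i,t}$ and makes the two indexing schemes agree on the rewards that are actually observed. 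Thus the substantive content of the equivalence lives entirely in this reindexing correspondence, which the two propositions have already verified in both orientations.

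Accordingly, I expect no genuine obstacle at the level of the corollary itself; the only care needed is to confirm that the quantifiers match across the two propositions, namely that both fix an arbitrary policy $\pi$ and range over all outputs $\mathbf{z}\in[K]^{T+1}$ together with all relevant (column-)neighbouring pairs. Since this is the case, the two implications compose without any gap, and the biconditional follows.
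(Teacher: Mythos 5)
Your proposal is correct and matches the paper's argument exactly: the paper obtains this corollary by directly combining Propositions~\ref{prop:pi_to_d_pi} and~\ref{prop:d_pi_to_pi}, one for each direction of the biconditional, just as you do. The additional commentary on the mutually inverse reindexing constructions is accurate but not needed for the corollary itself.
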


This proves the equivalence betweeen our notion of DP in Definition~\ref{defn:varepsilon-differential-privacy} and the notion of table-DP appearing in the work of \citet{azize2023interactive}. }

\section{Extension of our Results to \texorpdfstring{$(\varepsilon,\delta)$}{epsilon,delta}-Differential Privacy} \label{app:eps_delta}
\begin{algorithm}[!ht]
\label{alg:dp-bai-guass}
\caption{\textsc{DP-BAI-Gauss}}
\begin{algorithmic}[1]
\REQUIRE ~~\\
$\varepsilon,$ {\color{red}$\delta$}: differential privacy parameters; $T$: budget;  $\{\mathbf{a}_i: i \in [K]\}$: $d$-dimensional feature vectors.\\
 
\ENSURE $\hat{I}_T$: best arm.
 
 \STATE Initialise $T_0=0$, $\mathcal{A}_1 = [K]$, $\mathbf{a}_i^{(0)} = \mathbf{a}_i$ for all $i\in[K]$.
 Set $M$ and $T'$  as in \eqref{def:param_of_init}. 
 \FOR{ $p \in \{1,2,\ldots, M \}$}
 \STATE Set $d_p={\rm dim}({\rm span}\{\mathbf{a}_i^{(p-1)}: i\in \mathcal{A}_p\})$. 
 \STATE Obtain the new vector set $\{\mathbf{a}_i^{(p)}: i\in \mathcal{A}_p\}$ from the set $\{\mathbf{a}_i^{(p-1)}: i\in \mathcal{A}_p\}$ via \eqref{eq:dim_reduce}.
 \STATE Compute $s_p$ using \eqref{eq:s-p-definition}.
 \IF {$d_p < \sqrt{s_p} $}
    \STATE Construct a \textsc{Max-Det} collection
    $\mathcal{B}_p \subset \mathcal{A}_p$.
    \STATE Pull each arm in $\mathcal{B}_p$ for $\lceil \frac{T'}{Md_p} \rceil$ many times. Update $T_p \leftarrow T_{p-1}+d_p\lceil \frac{T'}{Md_p} \rceil$.
    \STATE Obtain the empirical means $\{\hat{\mu}_i(p): i \in \mathcal{B}_p\}$ via ~\eqref{eq:empirical-mean-update}.
    \STATE {Generate $\widetilde{\xi}_i^{(p)} \thicksim {\rm Gaussian}\left(0, \frac{2\log(1.25/\delta)}{(\varepsilon \lceil \frac{T'}{Md_p} \rceil)^2}\right)$ for $i\in \mathcal{B}_p$.}
    \STATE Set $\widetilde{\mu}_i^{(p)} \leftarrow \hat{\mu}_i^{(p)} + \widetilde{\xi}_i^{(p)} $ for all  $i \in \mathcal{B}_p$.
    \STATE For arm $i \in \mathcal{A}_p \setminus \mathcal{B}_p$, compute $\widetilde{\mu}_i^{(p)}$ via \eqref{eq:construct_others}.
 \ELSE
    \STATE Pull each arm in $\mathcal{A}_p$ for $\lceil \frac{T'}{M s_p} \rceil$ many times. Update $T_p \leftarrow T_{p-1}+s_p\lceil \frac{T'}{Ms_p} \rceil$
    \STATE Obtain the empirical means $\{\hat{\mu}_i^{(p)}: i \in \mathcal{A}_p\}$ via~\eqref{eq:empirical-mean-update}.
    \STATE {Generate $\widetilde{\xi}_i^{(p)} \thicksim {\rm Gaussian}\left(0,\frac{2\log(1.25/\delta)}{(\varepsilon \lceil \frac{T'}{M s_p} \rceil)^2}\right) $ for $i\in \mathcal{A}_p$.}
    \STATE Set $\widetilde{\mu}_i^{(p)} \leftarrow \hat{\mu}_i^{(p)} + \widetilde{\xi}_i^{(p)} $ for all $i \in \mathcal{A}_p$.
 \ENDIF
 \STATE Compute $s_{p+1}$ using \eqref{eq:s-p-definition}.
 \STATE $\mathcal{A}_{p+1} \leftarrow$ the set of $s_{p+1}$ arms with largest private empirical means among $\{ \widetilde{\mu}_i^{(p)} : i \in \mathcal{A}_p\}$. 
\ENDFOR
\STATE $\hat{I}_T$ $\leftarrow$ the only arm remaining in $\mathcal{A}_{M+1}$
\RETURN Best arm $\hat{I}_T$.
\end{algorithmic}
\label{alg:dp-bai-gauss}
\end{algorithm}

Below, we first define the $(\varepsilon,\delta)$-differential privacy constraint formally. 
\begin{definition}[$(\varepsilon,\delta)$-differential privacy]
Given any $\varepsilon>0$ and $\delta>0$, a randomised policy $\mathcal{M}:\mathcal{X} \to [K]^{T+1}$ satisfies $(\varepsilon, \delta)$-differential privacy if, for any pair of neighbouring $\mathbf{x}, \mathbf{x}^\prime \in \mathcal{X}$, 
$$ \mathbb{P}^{\mathcal{M}}(\mathcal{M}(\mathbf{x}) \in \mathcal{S}) \le e^\varepsilon \ \mathbb{P}^{\mathcal{M}}(\mathcal{M}(\mathbf{x}^\prime)\in \mathcal{S}) + \delta \quad \forall \ \mathcal{S} \subset [K]^{T+1}. 
$$
\end{definition}
The above definition particularises to that of $\varepsilon$-differential privacy when $\delta=0$. Therefore, it is clear that any policy that satisfies the $\varepsilon$-DP constraint automatically satisfies $(\varepsilon, \delta)$-DP constraint for all $\delta > 0$. In particular, our proposed \textsc{DP-BAI} policy satisfies the $(\varepsilon, \delta)$-DP constraint for all $\delta > 0$.

However, \textsc{DP-BAI} has been specifically designed for $(\varepsilon, 0)$-DP, and does not adjust the agent's strategy for varying values of $\delta$. Therefore, exclusively for the $(\varepsilon,\delta)$-differential privacy constraint with $\delta>0$, we provide a variant of our proposed algorithm called \textsc{DP-BAI-Gauss} that utilises the Gaussian mechanism~\citep{dwork2014algorithmic} and operates with additive Gaussian noise (in contrast to Laplacian noise under \textsc{DP-BAI}). The pseudo-code of \textsc{DP-BAI-Gauss} is shown in Algorithm~\ref{alg:dp-bai-gauss}.
\begin{proposition}
\label{new_prop:dp_guarrantee}
The {\sc DP-BAI-Gauss} policy with privacy and budget parameters $(\varepsilon,\delta, T)$ satisfies the $(\varepsilon,\delta)$-DP constraint, i.e., for any pair of neighbouring $\mathbf{x}, \mathbf{x}^\prime \in \mathcal{X}$ and $\forall \mathcal{S} \subseteq [K]^{T+1}$,
\begin{equation}
    \mathbb{P}^{\Pi_{\textsc{DP-BAI-Gauss}}}(\Pi_{\textsc{DP-BAI-Gauss}}(\mathbf{x}) \in \mathcal{S}) \le e^\varepsilon \, \mathbb{P}^{\Pi_{\textsc{DP-BAI-Gauss}}}(\Pi_{\textsc{DP-BAI-Gauss}}(\mathbf{x}^\prime)\in \mathcal{S})+\delta .
    \label{new_eq:algorithm-satisfies-DP}
\end{equation}
\end{proposition}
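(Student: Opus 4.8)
The plan is to mirror the proof of Proposition~\ref{prop:dp_guarrantee} (the $\varepsilon$-DP guarantee for the Laplacian-noise policy {\sc DP-BAI}), replacing the Laplace-mechanism argument with the Gaussian-mechanism argument. The only structural difference between {\sc DP-BAI} and {\sc DP-BAI-Gauss} lies in the noise-generation steps of Algorithm~\ref{alg:dp-bai-gauss}; every other operation---the arm pulling, the reconstruction of non-pulled private means via \eqref{eq:construct_others}, the thresholding that produces $\mathcal{A}_{p+1}$, and the final recommendation---is identical. Hence the crux is to verify that the prescribed Gaussian variance yields the correct single-step $(\varepsilon,\delta)$-DP guarantee, after which the same post-processing/disjointness argument as in the Laplacian case carries over essentially verbatim.

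First I would recall the Gaussian mechanism: for a real-valued query of $L_2$-sensitivity $\Delta_2$, adding $\mathcal{N}(0,\sigma^2)$ noise with $\sigma^2 \ge 2\ln(1.25/\delta)\,\Delta_2^2/\varepsilon^2$ achieves $(\varepsilon,\delta)$-DP \citep{dwork2014algorithmic}. Next I would bound the relevant sensitivity. Fix a neighbouring pair $\mathbf{x},\mathbf{x}'$ differing only in the $t$-th pull of some arm $i$. Because the per-phase pull ranges $[N_{i,T_{p-1}}+1,\,N_{i,T_p}]$ are disjoint across $p$, this single reward enters the algorithm in exactly one empirical mean, namely $\hat{\mu}_i^{(p)}$ for the unique phase $p$ whose range contains $t$. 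Since $\hat{\mu}_i^{(p)}$ is an average of $n$ rewards in $[0,1]$ with $n=\lceil T'/(Md_p)\rceil$ (respectively $n=\lceil T'/(Ms_p)\rceil$ in the else branch), perturbing one reward changes it by at most $1/n$, so $\Delta_2=1/n$. The variance prescribed in Algorithm~\ref{alg:dp-bai-gauss} is exactly $2\ln(1.25/\delta)/(\varepsilon n)^2 = 2\ln(1.25/\delta)\,(1/n)^2/\varepsilon^2$, matching the Gaussian-mechanism threshold; hence the release of $\widetilde{\mu}_i^{(p)}$ is $(\varepsilon,\delta)$-DP with respect to this perturbation.

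It then remains to lift this single-phase guarantee to the full output $(A_1,\dots,A_T,\hat{I}_T)$. Coupling the Gaussian draws across $\mathbf{x}$ and $\mathbf{x}'$, I would argue that the two runs coincide up to and including the arm-pull decisions of phase $p$ (these depend only on $\mathcal{A}_p$ and the known feature vectors, hence on earlier phases' private means, which use unperturbed rewards), and that $\widetilde{\mu}_i^{(p)}$ is the only affected coordinate. All subsequent objects---the reconstructed means \eqref{eq:construct_others} (fixed linear combinations whose coefficients depend solely on the known feature vectors), the active set $\mathcal{A}_{p+1}$, and the entire later trajectory driven by fresh rewards and independent noise---form a (possibly randomized) post-processing of the privatized means. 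Since $(\varepsilon,\delta)$-DP is closed under post-processing, \eqref{new_eq:algorithm-satisfies-DP} follows; the case where arm $i$ is never pulled $t$ times is trivial, as the perturbation then leaves the output unchanged.

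The main obstacle, exactly as in the Laplacian proof, is handling the adaptivity: verifying that the lone perturbed reward influences precisely one coordinate of one phase, and that the adaptive downstream computation is genuinely a data-independent post-processing given that coordinate---so that no cross-phase composition (which would degrade the $(\varepsilon,\delta)$ guarantee) is incurred. The disjointness of the per-phase pull ranges together with the coupling of the noise realizations are the two ingredients that make this reduction work.
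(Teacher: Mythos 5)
Your proposal is correct and follows essentially the same route as the paper's proof: reduce (by post-processing) to the privacy of the released tuple of pull counts and private empirical means, locate the unique phase whose empirical mean absorbs the single differing reward, observe that its sensitivity $1/n$ exactly matches the prescribed Gaussian variance $2\log(1.25/\delta)/(\varepsilon n)^2$, and treat the adaptive continuation as post-processing since the two inputs agree on all remaining rewards. The paper merely unpacks the Gaussian-mechanism step explicitly---via a chain-rule factorization of the trajectory probability and the $\gamma^+/\gamma^-$ set decomposition from \citet[Appendix A]{dwork2014algorithmic}---rather than invoking the mechanism's $(\varepsilon,\delta)$ guarantee together with closure under post-processing as a black box.
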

The proof of Proposition~\ref{new_prop:dp_guarrantee} is  deferred until Section~\ref{new_appndx:proof-of-dp-guarantee-gauss}. Below, we present an upper bound on the error probability of {\sc DP-BAI-Gauss}.

\begin{proposition}
\label{pros:upper_bound_gauss}
For any problem instance $v\in \mathcal{P}$,
$$ \mathbb{P}_v^{\Pi_{\textsc{DP-BAI-Gauss}}}(\hat{I}_T \ne i^*(v)) \le \exp \left( - \Omega \left( \frac{T}{H_{\mathrm{BAI}} \ \log d} \wedge \left(\frac{ T}{\sqrt{\log(\frac{1.25}{\delta})} \ H_{\mathrm{pri}} \ \log d}\right)^2 \ \right) \right), $$
where $H_{\mathrm{BAI}}$ and $H_{\mathrm{pri}}$ are as defined in~\eqref{eq:hardness-term}.
\end{proposition}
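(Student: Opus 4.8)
The plan is to reproduce the proof of Theorem~\ref{thm:upper_bound} almost verbatim, replacing every Laplacian concentration argument by its Gaussian counterpart; the $(\varepsilon,\delta)$-DP guarantee itself is established separately in Proposition~\ref{new_prop:dp_guarrantee} (via the Gaussian mechanism, exploiting that a single reward change perturbs a per-phase empirical mean by at most $1/N_p$), so here I would focus purely on the error probability. As in Theorem~\ref{thm:upper_bound}, I would define for each phase $p$ a ``good event'' under which every active arm's private empirical mean $\widetilde\mu_i^{(p)}$ lies within half the relevant sub-optimality gap of its true mean $\mu_i$, show that on the intersection of these events the best arm is never eliminated, and then bound the overall error probability by a union bound over the $M=\Theta(\log d)$ phases of the per-phase failure probabilities.

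The crux is the per-phase concentration. For a directly-pulled arm I would write $\widetilde\mu_i^{(p)}-\mu_i=(\hat\mu_i^{(p)}-\mu_i)+\widetilde\xi_i^{(p)}$, and for an arm reconstructed via \eqref{eq:construct_others} I would write $\widetilde\mu_i^{(p)}-\mu_i=\sum_{j\in\mathcal{B}_p}\alpha_{i,j}(\hat\mu_j^{(p)}-\mu_j)+\sum_{j\in\mathcal{B}_p}\alpha_{i,j}\widetilde\xi_j^{(p)}$, where the \textsc{Max-Det} property forces $|\alpha_{i,j}|\le 1$ (replacing a basis vector of $\mathcal{B}_p$ by one with a coordinate exceeding $1$ would strictly increase the determinant). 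By Hoeffding (Lemma~\ref{lemma:hoffeding}) and standard sub-Gaussian closure, each $\hat\mu$-term is sub-Gaussian with variance proxy $O(1/N_p)$, and since a linear combination of independent Gaussians is Gaussian, each noise term is exactly Gaussian with variance $\sum_j\alpha_{i,j}^2\,\sigma_p^2\le d_p\,\sigma_p^2$, where $\sigma_p^2=2\log(1.25/\delta)/(\varepsilon N_p)^2$ and $N_p$ is the per-arm pull count in phase $p$. Hence $\widetilde\mu_i^{(p)}-\mu_i$ is sub-Gaussian with total proxy $v_p=O(1/N_p)+O(d_p\sigma_p^2)$. Crucially, unlike the Laplacian noise of \textsc{DP-BAI}, which is only sub-exponential and yields an exponent linear in the gap, the Gaussian noise is sub-Gaussian, so the sub-Gaussian tail bound gives a per-phase failure probability $\le\exp(-c\,\Delta^2/v_p)$ whose exponent splits as a statistical piece $\Omega(N_p\Delta^2)$ plus a privacy piece $\Omega((\varepsilon N_p\Delta)^2/\log(1.25/\delta))$; the squaring of the privacy term is exactly the source of the squared factor in the statement.

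It then remains to convert these per-phase exponents into the global bound by the algorithm's bookkeeping, identically to Theorem~\ref{thm:upper_bound}. Using $N_p\approx T'/(Ms_p)$ in the all-arms branch and $N_p\approx T'/(Md_p)$ with $d_p^2<s_p$ in the \textsc{Max-Det} branch, one verifies that in both branches the statistical proxy is $O(Ms_p/T')$ and the noise proxy is $O(LM^2s_p^2/(\varepsilon^2 T'^2))$ with $L=\log(1.25/\delta)$; thus the statistical exponent is $\Omega(\Delta^2 T'/(Ms_p))$ and the privacy exponent is $\Omega((\varepsilon\Delta T'/(Ms_p))^2/L)$. Taking $\Delta$ to be the effective gap $\Delta_{(s_p)}$ at phase $p$ and invoking $s_p/\Delta_{(s_p)}^2\le H_{\rm BAI}$ and $s_p/(\varepsilon\Delta_{(s_p)})\le H_{\rm pri}$ (both from \eqref{eq:hardness-term} and the range $s_p\le d^2\wedge K$ in the relevant phases) bounds each per-phase failure probability by $\exp(-\Omega(T'/(M H_{\rm BAI})))+\exp(-\Omega((T'/(M\sqrt{L}H_{\rm pri}))^2))$; substituting $T'=\Theta(T)$, $M=\Theta(\log d)$ and absorbing the $M$-fold union bound into the constants yields the claim. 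The main obstacle is the faithful re-derivation of the per-phase concentration with Gaussian noise — in particular tracking the reconstruction-noise amplification by $d_p$ across both algorithmic branches and confirming that the sub-Gaussian (rather than sub-exponential) tail is precisely what turns the linear privacy penalty of Theorem~\ref{thm:upper_bound} into the quadratic penalty here; the remaining hardness-parameter bookkeeping is unchanged.
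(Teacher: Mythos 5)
Your proposal follows essentially the same route as the paper's proof: Appendix~\ref{sec:proofofupperboundofgaussian} reproduces the three lemmas behind Theorem~\ref{thm:upper_bound} verbatim except that the Laplacian sub-exponential tail is replaced by a Gaussian tail, the reconstruction noise $\sum_{j\in\mathcal{B}_p}\alpha_{i,j}\widetilde{\xi}_j^{(p)}$ is controlled via $|\alpha_{i,j}|\le 1$ (Lemma~\ref{lemma:propoerty_of_max_det}) so its variance is inflated by at most $d_Q$, and the sub-Gaussian tail is exactly what turns the linear privacy exponent into the quadratic one --- all points you identify correctly, including the final bookkeeping with $d_Q^2\wedge s_p$, $H_{\rm BAI}$, $H_{\rm pri}$ and the $M=\Theta(\log d)$ union bound.

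The one place where your sketch differs from the paper, and where as written it does not quite close, is the per-phase elimination step. The paper does not use a ``good event'' on which every active arm concentrates; it bounds $\mathbb{E}[N^{\rm sub}]$, the expected number of arms among the $s_p-s_{p+2}$ most suboptimal ones whose private mean beats that of $i^*(v)$, and applies Markov's inequality to $\{N^{\rm sub}\ge s_{p+1}-s_{p+2}+1\}$, which is necessary for elimination of the best arm. Your good-event version works too, but only if the concentration radius is tied to $\Delta_{(s_{p+2}+1)}$ (or $\Delta_{(g_0)}$ in the early phases), not to $\Delta_{(s_p)}$ as you state: with threshold $\Delta_{(s_p)}$ up to $s_p-1>s_{p+1}$ arms may have gaps below the threshold and could legitimately outrank $i^*(v)$ on the good event, so survival is not guaranteed; and if instead each arm must concentrate within half its \emph{own} gap, the near-optimal arms in early phases (pulled only $T'/(Ms_1)$ times) give a failure exponent governed by $s_1/\Delta_{(2)}^2$, which can greatly exceed $H_{\rm BAI}$. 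Replacing $\Delta_{(s_p)}$ by $\Delta_{(s_{p+2}+1)}$ fixes this at the cost of a constant factor (since $s_{p+2}+1\ge s_p/4$), after which your argument matches the paper's bound.
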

The proof of Proposition~\ref{pros:upper_bound_gauss} follows along the lines of the proof of Theorem~\ref{thm:upper_bound}, by using sub-Gaussian concentration bounds in place of sub-exponential concentration bounds. The proof is deferred until Section~\ref{sec:proofofupperboundofgaussian}.
\begin{remark}
 It is pertinent to highlight that the contribution of our algorithms do not lie in the introduction of a novel privacy mechanism such as Laplace and Gaussian mechanisms. Instead, we propose a new {\em sampling  strategy} based on the {\sc Max-Det} principle for fixed budget BAI. This strategy can be seamlessly integrated into any  differential privacy mechanism, including the Laplace and the Gaussian mechanisms.
\end{remark}

Table~\ref{table:comparison} shows a comparison of the upper bounds for {\sc DP-BAI-Gauss} and {\sc DP-BAI} (from Theorem~\ref{thm:upper_bound}). Because {\sc DP-BAI} is $(\varepsilon, \delta)$-differentially private for all $\delta>0$ as alluded to earlier, the preceding comparison is valid.

\begin{table}[t]{\footnotesize
\centering
\begin{tabular*}{\linewidth} {p{0.385\textwidth} | p{0.33\textwidth}| p{0.25\textwidth}}
\hline
 Condition on $T$ and $\delta$ & \textsc{DP-BAI-Gauss} & \textsc{DP-BAI} \\  \hline
 
 $\left(\frac{ \ T}{\sqrt{\log(\frac{ 5}{4\delta})} H_{\mathrm{pri}}\ \log d}\right)^2 \geq \frac{T}{H_{\mathrm{BAI}}\ \log d}$ & ${\exp\left(-\Omega\left(\frac{T}{H_{\mathrm{BAI}}\ \log d}\right)\right)} \checkmark$ & $\exp\left(-\Omega\left(\frac{T}{H \ \log d}\right)\right)$
 \\  \hline
 $\!\!\frac{T}{H \log d} \! \leq\! \left(\frac{   T}{\sqrt{\log(\frac{5}{4\delta})}H_{\mathrm{pri}}  \log d}\right)^2 \!\!<\! \frac{T}{H_{\mathrm{BAI}} \log d}\!\!\!\!$ &  $\!\! \exp\left(\!-\Omega\left(\frac{ \ T}{\sqrt{\log(\frac{5}{4\delta})}H_{\mathrm{pri}} \log d}\right)^2\right)$ \checkmark $\!\!\!\!\!\!$ & $\exp\left(-\Omega\left(\frac{T}{H   \log d}\right)\right)$ \\ \hline
 $\left(\frac{ \ T}{\sqrt{\log(\frac{5}{4\delta})}H_{\mathrm{pri}}  \log d}\right)^2 < \frac{T}{H   \log d}$	& $\exp\left(-\Omega\left(\frac{    T}{\sqrt{\log(\frac{5}{4\delta})}H_{\mathrm{pri}}  \log d}\right)^2\right)$	 & $\!\exp\left(-\Omega\left(\frac{T}{H   \log d}\right)\right) \checkmark$ \\ \hline
  $\delta=0, ~ T>0$ & $\exp(-\Omega(1))$ (vacuous) & 	$\!\exp\left(-\Omega\left(\frac{T}{H  \log d}\right)\right) \checkmark$ \\
  \hline
\end{tabular*}
\caption{Comparison between the upper bounds of \textsc{DP-BAI} and \textsc{DP-BAI-Gauss}.  The ticks indicate the tighter of the two bounds under the given condition.}
\label{table:comparison}}
\end{table}
We observe that \textsc{DP-BAI-Gauss} outperforms \textsc{DP-BAI} for large values of $\delta$ and $T$. In the small $\delta$ regime, however, \textsc{DP-BAI} performs better than \textsc{DP-BAI-Gauss}. In particular, when $\delta=0,$ the additive noise in the Gaussian mechanism is infinite and thus vacuous. That is, \textsc{DP-BAI-Gauss} is not applicable when $\delta=0$. The above trends are not surprising, given that \textsc{DP-BAI} is designed for $\delta=0$ and is hence expected to work well for small values of $\delta$. Finally, in order to keep the analysis simple and bring out the main ideas clearly, we hence only discuss $\varepsilon$-DP criterion in our main text, and the discussion regarding of $(\varepsilon,\delta)$-DP is presented in this section of the appendix.

\section{A Differentially Private Version of OD-LinBAI~\texorpdfstring{\citep{yang2022minimax}}{yang2022minimax} and its Analysis} \label{app:dp_od_linbai}

\subsection{Details of Implementing \textsc{DP-OD}}
\label{sec:DP_OD_Details}
 To achieve $\varepsilon$-DP for {\sc DP-OD}, a variant of {\sc OD-LinBAI}~\citep{yang2022minimax}, one of the natural solutions is to use the idea of~\cite{shariff2018differentially}, which is to add random noise in the ordinary least square (OLS) estimator; we note that {\sc OD-LinBAI} uses the OLS estimator as a subroutine, similar to the algorithm of~\cite{shariff2018differentially}.
 Specifically, given a dataset $\{(\boldsymbol{x}_i,y_i)\}_{i=1}^n$, one of the steps in the computation of the standard OLS estimator~\cite[Chapter 2]{weisberg2005applied} is to evaluate the coefficient vector ${\hat{\boldsymbol{\beta}}}$  via 
 $$
\boldsymbol{G}{\hat{\boldsymbol{\beta}}} =  \boldsymbol{U},
 $$
 where $\boldsymbol{G}$ is the {\em Gram} matrix in the OLS estimator  and $\boldsymbol{U}=\sum_{i=1}^n \boldsymbol{x}_i y_i$ is the {\em moment} matrix.
 While~\cite{shariff2018differentially} add random noise to both $\boldsymbol{G}$ and $\boldsymbol{U}$, we do not add noise to $\boldsymbol{G}$ as the feature vectors are determined and known to the decision maker in our setting. Instead, we add independent Laplacian noise with parameter $\frac{L}{\varepsilon}$ to each element of $\boldsymbol{U}$ of the OLS estimator, with $L=\max_{i\in[K]} \lVert \mathbf{a}_i \rVert$; we use Laplacian noise instead of Gaussian or Wishart noise, noting from \citet[Section 4.2]{shariff2018differentially} and \citet[Theorem 4.1]{sheffet2015private} that the latter versions of noise can potentially be infinite and hence not particularly suitable for satisfying the $\varepsilon$-DP constraint.
 Using \citet[Claim 7]{shariff2018differentially}, we may then prove that the above method satisfies the $\varepsilon$-DP constraint.

\label{app:compare_to_DP_ODLinBAI}
\subsection{A Brief Analysis of DP-OD}
\label{sec:DPODanalysis}
The error probability of identifying the best arm under \textsc{DP-OD} depends not only on the suboptimality gaps of the arms, but also inherently a function of the arm vectors, as demonstrated below. 
\begin{proposition}
 Let $K=2$, $\mathbf{a}_1=[x,0]^\top$, $\mathbf{a}_2=[0,y]^\top$ with $x,y>0$, and $\theta^*=[(0.5+\Delta)/x, \ 0.5/y]^\top$ for some fixed $\Delta>0$. The upper bound on the error probability of \textsc{DP-OD} for the above bandit instance is $$ \exp\left(-\Omega\left(\frac{T}{\Delta^{-2}+\frac{x\lor y}{x \land y}(\epsilon\Delta)^{-1}}\right) \right). $$
\end{proposition}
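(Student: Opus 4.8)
The plan is to make the \textsc{DP-OD} estimator fully explicit on this orthogonal two-armed instance and then control its error by a union bound. First I would record that $\mu_1 = \mathbf{a}_1^\top \boldsymbol{\theta}^* = 0.5 + \Delta$ and $\mu_2 = \mathbf{a}_2^\top \boldsymbol{\theta}^* = 0.5$, so arm $1$ is optimal with gap $\Delta$. Since $\mathbf{a}_1,\mathbf{a}_2$ are orthogonal and span $\mathbb{R}^2$, the Gram matrix of any allocation is diagonal, and the $G$-optimal design minimising $\max_i \mathbf{a}_i^\top \mathbf{G}^{-1}\mathbf{a}_i$ puts equal weight on the two arms; hence \textsc{DP-OD} pulls each arm $n_i = \Theta(T)$ times (roughly $n_1 = n_2 \approx T/2$ in the single effective phase, as $d=K=2$). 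Writing $S_i$ for the sum of rewards from arm $i$ and $\bar{y}_i = S_i/n_i$ for its empirical mean, the noiseless solution of $\mathbf{G}\hat{\boldsymbol{\beta}}=\mathbf{U}$ gives $\hat{\mu}_i = \mathbf{a}_i^\top \hat{\boldsymbol{\beta}} = \bar{y}_i$. Injecting independent $\mathrm{Lap}(L/\varepsilon)$ noise $\eta_1,\eta_2$ into the two coordinates of $\mathbf{U}$, with $L = \|\mathbf{a}_1\|\lor\|\mathbf{a}_2\| = x\lor y$, and solving $\mathbf{G}\widetilde{\boldsymbol{\beta}}=\widetilde{\mathbf{U}}$ yields the closed forms
\[
\widetilde{\mu}_1 = \bar{y}_1 + \frac{\eta_1}{n_1 x}, \qquad \widetilde{\mu}_2 = \bar{y}_2 + \frac{\eta_2}{n_2 y}.
\]
The crucial observation is that projecting the noise back through $\mathbf{G}^{-1}$ and $\mathbf{a}_i$ leaves an effective Laplacian scale $\frac{x\lor y}{n_i\,(\text{feature}_i)\,\varepsilon}$ on $\widetilde{\mu}_i$, whose worst case over $i$ is inflated by the factor $\frac{x\lor y}{x\land y}$ (attained at the arm with the smaller feature) --- this is precisely the arm-vector dependence the statement advertises.

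Next I would bound the error event $\{\widetilde{\mu}_2 \ge \widetilde{\mu}_1\}$. Writing $Z_i = \bar{y}_i - \mu_i$ and $W_1 = \eta_1/(n_1 x)$, $W_2 = \eta_2/(n_2 y)$, the event equals $\{(Z_2 - Z_1)+(W_2-W_1)\ge \Delta\}$, so if each of $|Z_1|,|Z_2|,|W_1|,|W_2|$ stays below $\Delta/4$ no error can occur. A union bound then leaves four tail terms. The empirical-mean deviations are controlled by Hoeffding's inequality (Lemma~\ref{lemma:hoffeding}), giving $\mathbb{P}(|Z_i|\ge \Delta/4) \le 2\exp(-\Omega(n_i \Delta^2)) = 2\exp(-\Omega(T\Delta^2))$. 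The Laplacian terms are controlled by the exponential tail $\mathbb{P}(|W_i|\ge t) = \exp(-t/b_i)$ with scales $b_1 = \frac{x\lor y}{n_1 x\varepsilon}$ and $b_2 = \frac{x\lor y}{n_2 y\varepsilon}$; with $n_i = \Theta(T)$ the larger (worse) scale is $\Theta\big(\frac{x\lor y}{(x\land y)\,T\varepsilon}\big)$, so the dominant Laplacian bound is $\exp\big(-\Omega\big(\frac{T\varepsilon\Delta\,(x\land y)}{x\lor y}\big)\big)$.

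Finally I would assemble the pieces: the error probability is at most the maximum of the two surviving exponential bounds, namely
\[
\mathbb{P}(\text{error}) \le \exp\!\left(-\Omega\!\left(\min\!\Big(T\Delta^2,\ \tfrac{T\varepsilon\Delta\,(x\land y)}{x\lor y}\Big)\right)\right),
\]
and then use $\min(a,b)\asymp (a^{-1}+b^{-1})^{-1}$ to convert this into the claimed form $\exp\big(-\Omega\big(T/(\Delta^{-2}+\tfrac{x\lor y}{x\land y}(\varepsilon\Delta)^{-1})\big)\big)$. The main obstacle I anticipate is not the concentration arithmetic but the bookkeeping around \textsc{OD-LinBAI}'s phase-and-dimension-reduction machinery: I must verify that, specialised to $K=d=2$, the $G$-optimal allocation genuinely splits the budget as $n_1=n_2=\Theta(T)$, and that the privatised least-squares step transforms the injected noise exactly as in the displayed closed forms (in particular that the $\varepsilon$-DP guarantee of \citet[Claim 7]{shariff2018differentially} applies to the noised $\mathbf{U}$ with sensitivity governed by $L=x\lor y$). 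Once that reduction is in place, the remaining steps are the routine Hoeffding and Laplace tail estimates above.
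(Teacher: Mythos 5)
Your proposal is correct and follows essentially the same route as the paper's proof: the same closed forms $\widetilde{\mu}_1 = \bar{y}_1 + \eta_1/(n_1 x)$ and $\widetilde{\mu}_2 = \bar{y}_2 + \eta_2/(n_2 y)$, the same four-way union bound with thresholds $\Delta/4$, Hoeffding for the empirical means, and the Laplace (sub-exponential) tail for the noise terms, yielding the worst-case scale $\frac{x\lor y}{(x\land y)\,T\varepsilon}$ and the claimed bound. The only cosmetic difference is that you invoke the exact Laplace tail directly whereas the paper routes it through its sub-exponential concentration lemma; the substance is identical.
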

\begin{proof}
Note that there is only one round in \textsc{DP-OD} for the two-armed bandit instance. Let $T_1$ and $T_2$  be the number of arm pulls for arm $1$ and arm $2$, respectively. By the sampling strategy of \textsc{DP-OD}, we have $T_1=\Theta(T)$ and $T_2=\Theta(T)$ (as $T \rightarrow \infty$).
From the description in Section~\ref{sec:DP_OD_Details}, we note that the moment matrix is given by $\mathbf{U}=\sum_{i=1}^{T_1} \mathbf{a}_1 X_{1,i}+\sum_{i=1}^{T_2} \mathbf{a}_2 X_{2,i}$. We denote $\tilde{\mathbf{U}}$ to be the moment matrix with Laplacian noise, i.e., 
\begin{equation}
\widetilde{\mathbf{U}}=\mathbf{U}+[\widetilde{\xi}_1,\widetilde{\xi}_2]^\top,
\end{equation}
where $\widetilde{\xi}_1$ and $\widetilde{\xi}_2$ are independent Laplacian noises with parameters $\frac{L}{\varepsilon}$ and $L=x \lor y$, respectively. The estimates of the expected reward of arm $1$ and arm $2$ are
\begin{equation}
\label{eq:mu1dpod}
    \widetilde{\mu}_1 =\langle \mathbf{G}^{-1} \widetilde{\mathbf{U}}, \mathbf{a}_1 \rangle
\end{equation}
and 
\begin{equation}
\label{eq:mu2dpod}
    \widetilde{\mu}_2 =\langle \mathbf{G}^{-1} \widetilde{\mathbf{U}}, \mathbf{a}_2 \rangle,
\end{equation}
where the Gram matrix $\mathbf{G}=T_1 \mathbf{a}_1\mathbf{a}_1^\top +T_2 \mathbf{a}_2\mathbf{a}_2^\top$.
Then, the event $\{\widetilde{\mu}_1 > \widetilde{\mu}_2 \}$ implies that the decision maker successfully identifies the best arm. In other words,
\begin{equation}
    \mathbb{P}(\hat{I}_T \ne 1) \le \mathbb{P}(\widetilde{\mu}_1 \le \widetilde{\mu}_2 ).
\end{equation}
By~\eqref{eq:mu1dpod} and~\eqref{eq:mu2dpod}, we have
\begin{equation}
    \widetilde{\mu}_1 - \widetilde{\mu}_2 = \frac{1}{T_1}\sum_{i=1}^{T_1} X_{1,i} - \frac{1}{T_2}\sum_{i=1}^{T_2} X_{2,i} + \bigg(\frac{\widetilde{\xi}_1}{xT_1} - \frac{\widetilde{\xi}_2}{yT_2}\bigg). \nonumber
\end{equation}
Hence, 
\begin{align}
    \mathbb{P}(\widetilde{\mu}_1 - 
    \widetilde{\mu}_2<0) & \le \mathbb{P}\bigg(\sum_{i=1}^{T_1} \frac{X_{1,i}}{T_1}<\mu_1-\frac{\Delta}{4}\bigg) + \mathbb{P}\bigg(\sum_{i=1}^{T_2} \frac{X_{2,i}}{T_2}>\mu_2+\frac{\Delta}{4}\bigg) \nonumber\\
    &\qquad+ \mathbb{P}\bigg(\frac{\widetilde{\xi}_1}{xT_1} < - \frac{\Delta}{4}\bigg) + \mathbb{P}\bigg(\frac{\widetilde{\xi}_2}{yT_2} >  \frac{\Delta}{4}\bigg).  \label{eq:dpod_up0}
\end{align}
By Lemma~\ref{lemma:hoffeding}, we have
\begin{equation}
     \label{eq:dpod_up1}
    \mathbb{P}\bigg(\sum_{i=1}^{T_1} \frac{X_{1,i}}{T_1}<\mu_1-\frac{\Delta}{4}\bigg) \le \exp\left(-\frac{T_1}{8\Delta^2}\right),
\end{equation}
and 
\begin{equation}
   \label{eq:dpod_up2}
    \mathbb{P}\bigg(\sum_{i=1}^{T_2} \frac{X_{2,i}}{T_2}>\mu_2 +\frac{\Delta}{4}\bigg) \le \exp\left(-\frac{T_2}{8\Delta^2}\right).
\end{equation}
 For sufficiently large $T_1$ and $T_2$, by Lemma~\ref{lemma:sub-exponential-concentration}, we have 
 \begin{equation}
  \label{eq:dpod_up3}
    \mathbb{P}\bigg(\frac{\widetilde{\xi}_1}{xT_1} < - \frac{\Delta}{4}\bigg) \le \exp\left( -\frac{T_1\Delta\varepsilon}{8(x \lor y)/x} \right),
\end{equation}
and 
 \begin{equation}
    \label{eq:dpod_up4}
    \mathbb{P}\bigg(\frac{\widetilde{\xi}_2}{yT_2} >  \frac{\Delta}{4}\bigg) \le \exp\left( -\frac{T_2\Delta\varepsilon}{8(x \lor y)/y} \right).
\end{equation}
Recall that $T_1=\Theta(T)$ and $T_2=\Theta(T)$ (as $T \rightarrow \infty$), and by combining~\eqref{eq:dpod_up0},~\eqref{eq:dpod_up1},~\eqref{eq:dpod_up2},~\eqref{eq:dpod_up3} and~\eqref{eq:dpod_up4}, we have
$$
\mathbb{P}(\widetilde{\mu}_1<
    \widetilde{\mu}_2) \le
\exp\left(-\Omega\left(\frac{T}{\Delta^{-2}+\frac{x\lor y}{x \land y}(\epsilon\Delta)^{-1}}\right) \right).
$$
\end{proof}
Noting that $\frac{x \lor y}{x \land y}$ can be arbitrarily large, the above bound is inferior to the upper bound of \textsc{DP-BAI} (equal to $\exp(-\Omega(\frac{T}{\Delta^{-2}+(\epsilon\Delta)^{-1}}))$ for the above bandit instance). Figure~\ref{exp:compareDPOD} shows the empirical performances of \textsc{DP-BAI} and \textsc{DP-OD} by fixing $x$ and varying $y$ in this problem instance, where $\Delta=0.05$, $x=1$ is fixed, and $y>1$ is allowed to vary, and the reward distribution is uniform distribution in $[0,2\mu_i]$ for arm $i=1,2$. The success rates are obtained by averaging across 1000 independent trials. We observe that when the ratio $\frac{x \lor y }{x \land y}$ increases, the performance of \textsc{DP-OD} becomes worse. However, the performance of {\sc DP-BAI} is essentially independent of this ratio as the theory suggests. The numerical results presented in Section 5 of our paper as well as this additional numerical study, showing the empirical performances of \textsc{DP-BAI} and \textsc{DP-OD}, reaffirm the inferiority of \textsc{DP-OD}.

\begin{figure}[!ht]
  \centering
  \subfloat[]
  {\includegraphics[width=0.45\textwidth]{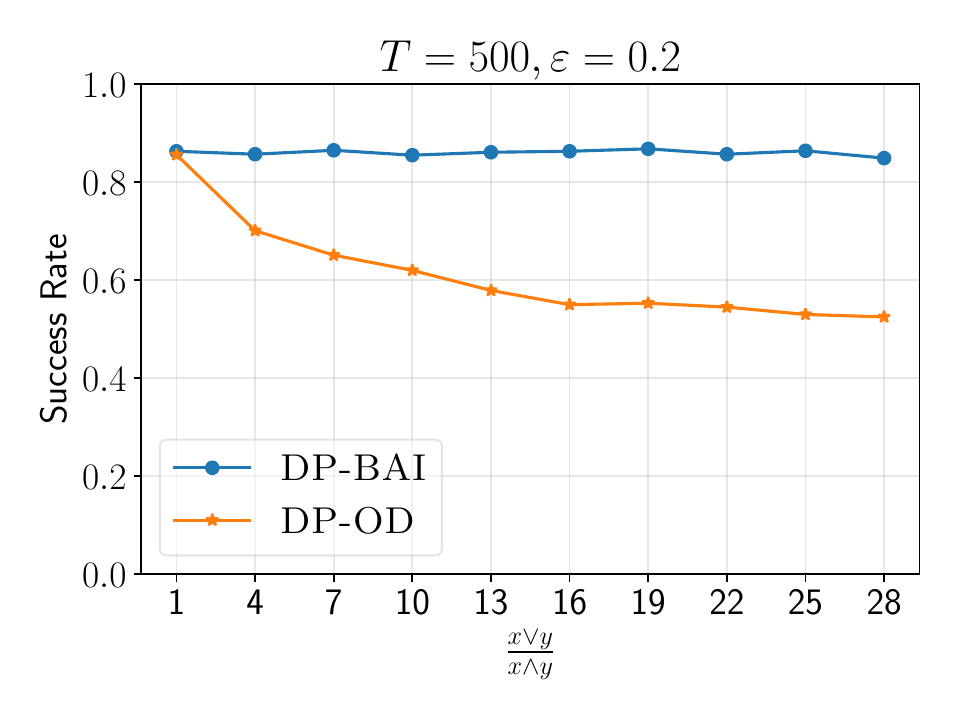}\label{fig:subfig7}}
  \      
  \subfloat[]
  {\includegraphics[width=0.45\textwidth]{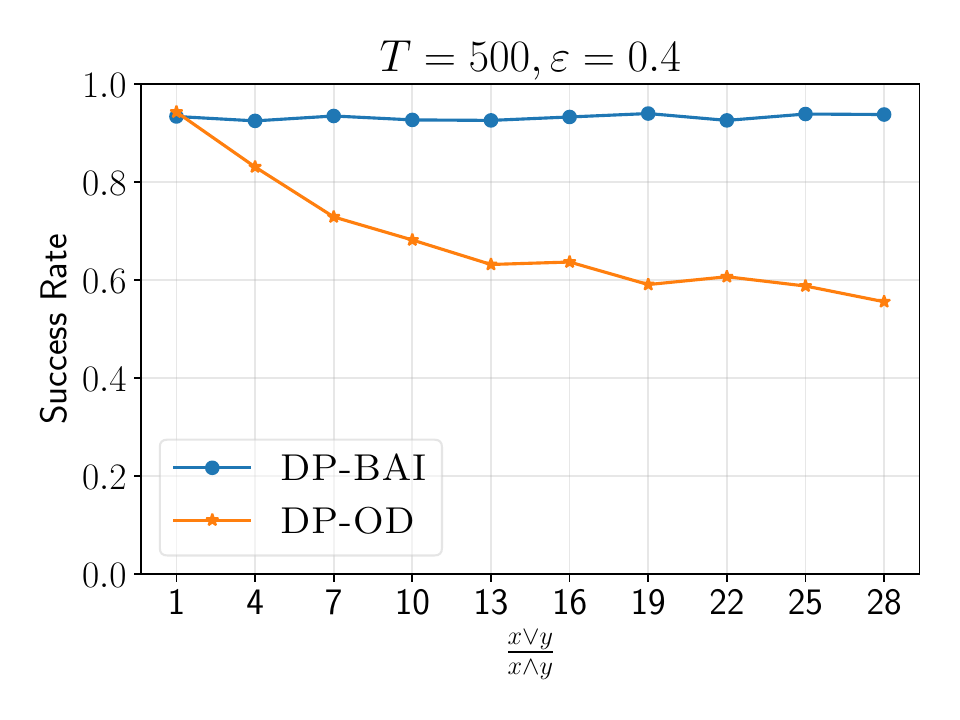}\label{fig:subfig8}}
  
  \subfloat[]
  {\includegraphics[width=0.45\textwidth]{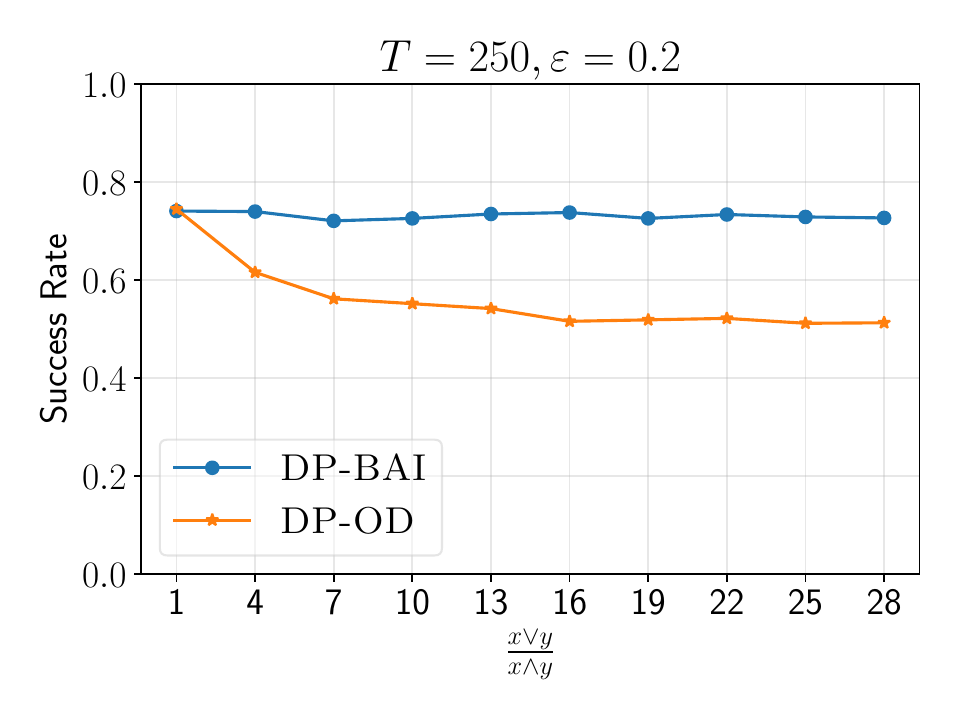}\label{fig:subfig9}}
  \ 
  \subfloat[]
  {\includegraphics[width=0.45\textwidth]{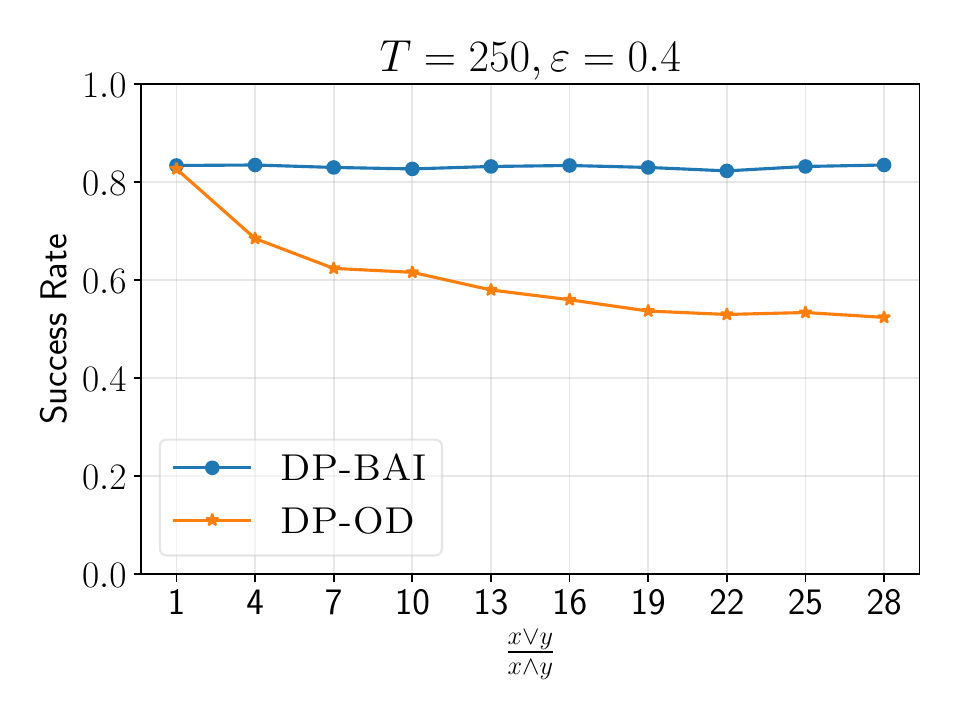}\label{fig:subfig10}}
  
  \caption{Comparison of {\sc DP-BAI} to {\sc DP-OD} for different values of $\frac{x\lor y}{x\land y}$ in the two-armed bandit instance introduced in  Appendix~\ref{sec:DPODanalysis}.}
  \label{exp:compareDPOD}
\end{figure}

\section{Proof of Proposition \ref{prop:dp_guarrantee}}
\label{appndx:proof-of-dp-guarantee}
Let $N^{(p)}_i \coloneqq N_{i,T_p}$.
Under the (random) process of {\sc DP-BAI}, we introduce an auxiliary random mechanism $\tilde{\pi}$ whose input is $\mathbf{x} \in \mathcal{X}$, and whose output is $(N^{(P)}_i, \widetilde{\mu}^{(p)}_i)_{i\in[K], p\in[M]}$, where we define 
$\widetilde{\mu}^{(p)}_i = 0$ if $i \notin \mathcal{A}_p$.
By \citet[Proposition 2.1]{dwork2014algorithmic}, to prove Proposition \ref{prop:dp_guarrantee}, it suffices to show that $\tilde{\pi}$ is $\varepsilon$-differentially private, i.e., for all neighbouring $\mathbf{x},\mathbf{x'}\in\mathcal{X}$,
\begin{equation}
    \mathbb{P}^{\tilde{\pi}}(\tilde{\pi}(\mathbf{x}) \in \mathcal{S}) \le e^{\varepsilon}\, \mathbb{P}^{\tilde{\pi}}(\tilde{\pi}(\mathbf{x'}) \in \mathcal{S}) \quad \forall \mathcal{S} \subseteq \tilde{\Omega},
    \label{eq:pi-tilde-DP-definition}
\end{equation}
where $\tilde{\Omega}$ is the set of all possible outputs of $\tilde{\pi}$.
In the following, we write $\mathbb{P}_{\mathbf{x}}^{\tilde{\pi}}$ to denote the probability measure induced under $\tilde{\pi}$ with input $\mathbf{x} \in \mathcal{X}$. 

Fix neighbouring $\mathbf{z}, \mathbf{z'} \in \mathcal{X}$ arbitrarily. There exists a unique pair $(\underline{i},\underline{t})$ with $\mathbf{z}_{\underline{i},\underline{n}} \ne \mathbf{z}_{\underline{i},\underline{n}}'$.
In addition, fix any $n_i^{(p)} \in \mathbb{N}$ and Borel set $\chi_i^{(p)} \subseteq [0,1]$ for $i\in [K]$ and $p \in [M]$ such that
$$
0\le n_i^{(1)} \le n_i^{(2)} \le \ldots \le n_i^{(M)} \le T \quad  \forall i\in [K].
$$
Let
\begin{equation}
\underline{p}=
\begin{cases}
M+1, &\text{ if } n_{\underline{i}}^{(M)} < \underline{n}  \\
\min\{p\in[M]: n_i^{(p)} \ge \underline{n}\}, &\text{ otherwise}.
\end{cases}
\end{equation}
For any $j\in[M]$, we define the event 
$$
D_j = \{ \forall (i,p) \in[K] \times [j]:\ N_i^{(p)}=n_i^{(p)}, \widetilde{\mu}_i^{(p)}\in \chi_i^{(p)} \}
$$
and 
$$
D_{j}' = D_{j-1} \cap \{ N_j^{(p)}=n_j^{(p)} \}.
$$

For  $j' \in \{j+1,\ldots,M\}$
$$
D_{j,j'} = \{ \forall (i,p): \in[K] \times \{j+1,\dots,j'\}:\  N_i^{(p)}=n_i^{(p)}, \widetilde{\mu}_i^{(p)}\in \chi_i^{(p)} \}
$$ 
In particular, $D_{0}$ and $D_{M,M}$ are events with probability $1$, i.e., 
$$
\mathbb{P}^{\tilde{\pi}}_{\mathbf{x}}\left(D_0 \cap D_{M,M} \right) = 1 \quad \forall \mathbf{x}\in \mathcal{X}.
$$

Then, if $n_{\underline{i}}^{(M)}< \underline{n}$, by the definition of $\tilde{\pi}$  we can have
\begin{equation}
\mathbb{P}_{\mathbf{z}}^{\tilde{\pi}}\left(D_M \right) = \mathbb{P}_{\mathbf{z'}}^{\tilde{\pi}}\left(D_M \right). 
\end{equation}
In the following, we assume $n_{\underline{i}}^{(M)}\ge \underline{n}$ and we will show
\begin{equation}
\label{eq:objetive_dp}
\mathbb{P}_{\mathbf{z}}^{\tilde{\pi}}\left(D_M \right) \le \exp(\varepsilon)\mathbb{P}_{\mathbf{z'}}^{\tilde{\pi}}\left(D_M \right) 
\end{equation}
which then implies that $\tilde{\pi}$ is $\varepsilon$-differentially private.

For $\mathbf{x}\in \{\mathbf{z},\mathbf{z'}\}$, we denote 
$$ \mu_\mathbf{x} =\frac{1}{n^{(\underline{p})}_{\underline{i}}- n^{(\underline{p}-1)}_{\underline{i}}} \sum_{j=n^{(\underline{p}-1)}_{\underline{i}}+1}^ {n^{(\underline{p})}_{\underline{i}}} \mathbf{x}_{\underline{i},j}
$$
That is, $\mu_\mathbf{x}$ is the value of $\hat{\mu}_{\underline{i}}^{(\underline{p})}$ in the condition of $D'_{\underline{p}}$ under probability measure $\mathbb{P}^{\tilde{\pi}}_{\mathbf{x}}$.

Then, by the chain rule we have for both $\mathbf{x} \in \{\mathbf{z},\mathbf{z'} \}$
\begin{align}
\label{eq:factbychainrule}
\mathbb{P}_{\mathbf{x}}^{\tilde{\pi}}\left(D_M \right) &=\mathbb{P}_{\mathbf{x}}^{\tilde{\pi}}\left(D'_{\underline{p}}  \right) \mathbb{P}_{\mathbf{x}}^{\tilde{\pi}}\left(\widetilde{\mu}_{\underline{i}}^{(\underline{p})}\in \chi_i^{(\underline{p})}  \given[\big] D'_{\underline{p}}  \right) \mathbb{P}_{\mathbf{x}}^{\tilde{\pi}}\left(D_{\underline{p}, M}  \given[\big]D_{\underline{p}} \right) \nonumber \\
&=\mathbb{P}_{\mathbf{x}}^{\tilde{\pi}}\left(D'_{\underline{p}}  \right) 
\int_{x\in \chi_i^{(\underline{p})} } f_{\rm Lap}(x-\mu_{\mathbf{x}})
\mathbb{P}_{\mathbf{x}}^{\tilde{\pi}}\left(D_{\underline{p}, M}  \given[\big]D'_{\underline{p}} \cap \{\widetilde{\mu}_{\underline{i}}^{(\underline{p})} = x\} \right) \, \mathrm{d}x  
\end{align}
where $f_{\rm Lap}(\cdot)$ is the probability density function of Laplace distribution with parameter $\frac{1}{\varepsilon(n^{(\underline{p})}_{\underline{i}}- n^{(\underline{p}-1)}_{\underline{i}})} $.
Note that by definition it holds
\begin{equation}
\label{eq:factbydef}
\begin{cases}
\mathbb{P}_{\mathbf{z}}^{\tilde{\pi}}\left(D'_{\underline{p}} \right)  = \mathbb{P}_{\mathbf{z'}}^{\tilde{\pi}}\left(D'_{\underline{p}}  \right) \\
\mathbb{P}_{\mathbf{z}}^{\tilde{\pi}}\left(D_{\underline{p}, M}  \given[\big] D'_{\underline{p}} \cap \{\widetilde{\mu}_{\underline{i}}^{(\underline{p})} = x\} \right) = \mathbb{P}_{\mathbf{z'}}^{\tilde{\pi}}\left(D_{\underline{p}, M}  \given[\big] D'_{\underline{p}} \cap \{\widetilde{\mu}_{\underline{i}}^{(\underline{p})} = x\} \right) \quad \forall x\in \chi_i^{(\underline{p})} \\
f_{\rm Lap}(x-\mu_{\mathbf{z}}) \le \exp(\varepsilon) f_{\rm Lap}(x-\mu_{\mathbf{z'}}) \quad \forall x\in \chi_i^{(\underline{p})}
\end{cases}
\end{equation}
Hence, combining~\eqref{eq:factbychainrule} and~\eqref{eq:factbydef}, we have
\begin{equation}
\mathbb{P}_{\mathbf{z}}^{\tilde{\pi}}\left(D_M \right) \le \exp(\varepsilon)\mathbb{P}_{\mathbf{z'}}^{\tilde{\pi}}\left(D_M \right) \nonumber
\end{equation}
which implies $\tilde{\pi}$ is $\varepsilon$-differentially private.

\section{Proof of Theorem \ref{thm:upper_bound}}
\label{appndx:proof-of-upper-bound}
\begin{lemma}
\label{lemma:propoerty_of_max_det}
Fix $d'\in \mathbb{N}$ and a finite set $\mathcal{A} \subset \mathbb{R}^{d'}$ such that ${\rm span}(\mathcal{A})=\mathbb{R}^{d'}$. Let $\mathcal{B} =\{\mathbf{b}_{1},\dots, \mathbf{b}_{d'}\}$ be a {\sc Max-Det} collection of $\mathcal{A}$. Fix an arbitrary $\mathbf{z}\in \mathcal{A}$, and let $\beta_1,\dots, \beta_{d'}\in \mathbb{R}$ be the unique set of coefficients such that
$$
 \mathbf{{z}} = \sum_{i=1}^{d'} \beta_i \mathbf{b}_i.
$$
Then, $\lvert \beta_i \rvert \le 1 $ for all $i \in [d^\prime]$.
\end{lemma}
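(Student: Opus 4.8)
The plan is to exploit the optimality of the \textsc{Max-Det} collection directly: for each fixed $i \in [d']$, I would compare $\mathcal{B}$ against the competitor collection obtained by swapping $\mathbf{b}_i$ out for $\mathbf{z}$, and show that this swap scales the absolute determinant by exactly $\lvert \beta_i \rvert$. Since $\mathcal{B}$ maximises the absolute determinant, this scaling factor cannot exceed $1$.

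First I would dispose of the trivial case $\mathbf{z} \in \mathcal{B}$: there $\mathbf{z} = \mathbf{b}_k$ for some $k$, so the unique coordinate vector is $(\beta_1,\dots,\beta_{d'}) = \mathbf{e}_k$, and $\lvert \beta_i \rvert \le 1$ holds immediately. So assume $\mathbf{z} \notin \mathcal{B}$. Fix $i \in [d']$ and define $\mathcal{B}_i' \coloneqq (\mathcal{B} \setminus \{\mathbf{b}_i\}) \cup \{\mathbf{z}\}$. Because $\mathbf{z} \notin \mathcal{B}$, the set $\mathcal{B}_i'$ consists of exactly $d'$ distinct vectors drawn from $\mathcal{A}$, so it is a legitimate competitor in Definition~\ref{def:maxdet}, whence $\textsc{Det}(\mathcal{B}) \ge \textsc{Det}(\mathcal{B}_i')$.

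The key step is the determinant identity $\textsc{Det}(\mathcal{B}_i') = \lvert \beta_i \rvert \, \textsc{Det}(\mathcal{B})$. To see this, let $B$ be the matrix whose $j$th column is $\mathbf{b}_j$, and let $B_i'$ be the matrix obtained by replacing the $i$th column of $B$ with $\mathbf{z}$. Substituting $\mathbf{z} = \sum_{j=1}^{d'} \beta_j \mathbf{b}_j$ into that column and using multilinearity of the determinant in the $i$th column, I would expand
\begin{equation}
\det(B_i') = \sum_{j=1}^{d'} \beta_j \, \det\bigl(\mathbf{b}_1,\dots,\mathbf{b}_{i-1},\mathbf{b}_j,\mathbf{b}_{i+1},\dots,\mathbf{b}_{d'}\bigr). \nonumber
\end{equation}
For every $j \ne i$ the vector $\mathbf{b}_j$ then appears in two distinct columns, forcing that determinant to vanish; only the $j = i$ term survives, giving $\det(B_i') = \beta_i \det(B)$. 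Taking absolute values yields the claimed identity.

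Finally, since $\mathrm{span}(\mathcal{A}) = \mathbb{R}^{d'}$, the remark following Definition~\ref{def:maxdet} guarantees that the vectors in $\mathcal{B}$ are linearly independent, so $\textsc{Det}(\mathcal{B}) > 0$. Combining the Max-Det inequality with the identity gives $\textsc{Det}(\mathcal{B}) \ge \lvert \beta_i \rvert \, \textsc{Det}(\mathcal{B})$, and dividing through by the positive quantity $\textsc{Det}(\mathcal{B})$ yields $\lvert \beta_i \rvert \le 1$. As $i$ was arbitrary, the bound holds for all $i \in [d']$. The only genuinely delicate point I expect is the multilinear expansion establishing $\textsc{Det}(\mathcal{B}_i') = \lvert \beta_i \rvert \, \textsc{Det}(\mathcal{B})$ (essentially Cramer's rule); everything else is bookkeeping around the optimality definition and the positivity of $\textsc{Det}(\mathcal{B})$.
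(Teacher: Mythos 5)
Your proposal is correct and follows essentially the same route as the paper: the paper also compares $\mathcal{B}$ against the matrix obtained by replacing the $i$th column with $\mathbf{z}$, invokes Cramer's rule to get $\beta_i = \det(\boldsymbol{B}^{(i)})/\det(\boldsymbol{B})$, and concludes from the \textsc{Max-Det} optimality that $\lvert\beta_i\rvert \le 1$. Your multilinear expansion is just an explicit derivation of Cramer's rule, and your separate handling of the case $\mathbf{z}\in\mathcal{B}$ is a minor extra care the paper leaves implicit.
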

\begin{proof}
Fix $z\in \mathcal{A}$.Let 
$\boldsymbol{B}=
\begin{bmatrix}
\boldsymbol{b}_1 & \boldsymbol{b}_2 &\dots &\boldsymbol{b}_{d'} 
\end{bmatrix}
$
be the $d' \times d'$ matrix consisting of vectors in $\mathcal{B}$. Let $\boldsymbol{\beta} = 
\begin{bmatrix}
\beta_1 & \beta_2 & \dots & \beta_{d'} 
\end{bmatrix}^\top$ to be a vector consisting of the members $\beta_1,\ldots, \beta_{d'}$.

Then, by the definition of $\beta_i$ for $i\in[d']$, we have
$$
\label{eq:matrixeq}
\boldsymbol{B}\boldsymbol{\beta} = \mathbf{{z}}.
$$
In addition, let $\boldsymbol{B}^{(i)}$ be the matrix that is identical to $\boldsymbol{B}$ except the $i$-th column is replaced by $\mathbf{z}$, i.e., $\boldsymbol{B}^{(i)}=
\begin{bmatrix}
\boldsymbol{b}_1 & \dots & \boldsymbol{b}_{i-1} & \mathbf{z} & \boldsymbol{b}_{i+1} & \dots & \boldsymbol{b}_{d'}
\end{bmatrix}
$, for $i\in[d']$.
Note that by the definition of {\sc Max-Det} collection, we have
\begin{equation}
\label{eq:result_of_maxdet}
    \lvert {\rm det}(\boldsymbol{B}^{(i)}) \vert \le \lvert {\rm det}(\boldsymbol{B}) \vert.
\end{equation}
By Cramer's Rule~\cite[Chapter 6]{meyer2000matrix}, we have 
\begin{equation}
\label{eq:carmer}
    \beta_i = \frac{{\rm det}(\boldsymbol{B}^{(i)})}{ {\rm det}(\boldsymbol{B})} .
\end{equation}
Finally, combining~\eqref{eq:result_of_maxdet} and~\eqref{eq:carmer}, we have for all $i\in[d']$
$$
\lvert \beta_i \rvert \le 1
$$
\end{proof} 
\begin{lemma}
\label{lemma:concentration_for_suboptimal_arm}
Fix an instance $v\in \mathcal{P}$ and $p\in [M]$. Let $i^*(v)$ denote the unique best arm under $v$. For any set $\mathcal{Q} \subset [K]$ with $\lvert Q \rvert = s_p$ and $i^*(v) \in Q$,  let $d_{\mathcal{Q}} = {\rm dim}({\rm span}\{\mathbf{a}_i:i\in \mathcal{Q}\})$. We have 
\begin{align}
&\mathbb{P}_v^{\Pi_{\textsc{DP-BAI}}} \left(\widetilde{\mu}^{(p)}_j \ge \widetilde{\mu}^{(p)}_{i^*(v)} \given[\big] \mathcal{A}_p = \mathcal{Q} \right)  \nonumber \\
& \le 2\exp \left(-\frac{\Delta^2_j}{16}\left\lceil \frac{T'}{M (d^2_Q \land s_p)}  \right\rceil \right)  + 2\exp \left(- \frac{\varepsilon \Delta_j}{16} \left\lceil \frac{T'}{M (d^2_Q \land s_p )} \right\rceil \right)
\label{eq:probability-upper-bound-1}
\end{align}
for all $T'$ sufficiently large and $j \in \mathcal{Q}\setminus \{i^*(v)\}$. 
\end{lemma}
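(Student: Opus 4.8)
The plan is to condition on the event $\{\mathcal{A}_p=\mathcal{Q}\}$ throughout and reduce the claim to two independent one-sided deviation bounds — one for the reward averages (giving the $\Delta_j^2$ exponent) and one for the injected Laplacian noise (giving the $\varepsilon\Delta_j$ exponent). First I would record two structural facts. Because the per-phase dimensionality reduction in \eqref{eq:dim_reduce} only records coordinates with respect to an \emph{orthonormal} basis, it preserves inner products and hence linear independence, so $d_p=\dim(\mathrm{span}\{\mathbf{a}_i^{(p-1)}:i\in\mathcal{Q}\})=d_{\mathcal{Q}}$; and since the phase-$p$ rewards $X_{k,s}$ and noises $\widetilde{\xi}_k^{(p)}$ are drawn freshly and independently of the history before $T_{p-1}$, conditioning on $\{\mathcal{A}_p=\mathcal{Q}\}$ (which depends only on phases $1,\dots,p-1$) leaves their joint law unchanged. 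The condition $d_{\mathcal{Q}}<\sqrt{s_p}$ (Max-Det branch) is exactly $d_{\mathcal{Q}}^2\wedge s_p=d_{\mathcal{Q}}^2$, and $d_{\mathcal{Q}}\ge\sqrt{s_p}$ (pull-all branch) is $d_{\mathcal{Q}}^2\wedge s_p=s_p$; in either branch let $n_p$ denote the number of pulls allotted to each directly sampled arm, i.e.\ $n_p=\lceil T'/(Md_{\mathcal{Q}})\rceil$ or $n_p=\lceil T'/(Ms_p)\rceil$ respectively.

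Next I would write the test statistic as a single linear form over the directly sampled arms. Let $\mathcal{B}$ be that sampling set ($\mathcal{B}_p$ in the Max-Det branch, $\mathcal{A}_p$ otherwise), and for $i\in\{j,i^*(v)\}$ expand $\widetilde{\mu}_i^{(p)}=\sum_{k\in\mathcal{B}}\alpha_{i,k}\widetilde{\mu}_k^{(p)}$ via \eqref{eq:construct_others}, with $\alpha_{i,k}=\mathbf{1}\{k=i\}$ whenever $i$ is itself sampled. Setting $\gamma_k\coloneqq\alpha_{j,k}-\alpha_{i^*(v),k}$ and using \eqref{eq:private-empirical-means},
\[
\widetilde{\mu}_j^{(p)}-\widetilde{\mu}_{i^*(v)}^{(p)}=\sum_{k\in\mathcal{B}}\gamma_k\,\big(\hat{\mu}_k^{(p)}-\mu_k\big)+\sum_{k\in\mathcal{B}}\gamma_k\,\widetilde{\xi}_k^{(p)}-\Delta_j ,
\]
where the linear structure makes each private mean unbiased, $\mathbb{E}[\widetilde{\mu}_i^{(p)}]=\mu_i$. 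Lemma~\ref{lemma:propoerty_of_max_det} gives $|\alpha_{i,k}|\le1$ (the sampled coordinates form the standard basis), whence $|\gamma_k|\le2$ and $\sum_k\gamma_k^2\le4d_{\mathcal{Q}}$. A union bound then splits $\{\widetilde{\mu}_j^{(p)}\ge\widetilde{\mu}_{i^*(v)}^{(p)}\}$ into the reward event $\{\sum_k\gamma_k(\hat{\mu}_k^{(p)}-\mu_k)\ge\Delta_j/2\}$ and the noise event $\{\sum_k\gamma_k\widetilde{\xi}_k^{(p)}\ge\Delta_j/2\}$.

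For the reward term I would apply Hoeffding's inequality (Lemma~\ref{lemma:hoffeding}) to the independent summands $\tfrac{\gamma_k}{n_p}(X_{k,s}-\mu_k)$, each of range $|\gamma_k|/n_p$; the squared-range sum is $\sum_k\gamma_k^2/n_p$, which is $\le4d_{\mathcal{Q}}/n_p$ in the Max-Det branch and equals $2/n_p$ in the pull-all branch (where $\gamma$ is supported on $\{j,i^*(v)\}$), giving $\exp(-\Delta_j^2 n_p/(8d_{\mathcal{Q}}))$ and $\exp(-\Delta_j^2 n_p/4)$ respectively. For the noise term I would use that $\mathrm{Lap}(1/(n_p\varepsilon))$ is $(4/(n_p\varepsilon)^2,\,2/(n_p\varepsilon))$-sub-exponential, combine via Lemma~\ref{lemma:sub-exponential-combination} (so the linear form is sub-exponential with variance proxy $O(d_{\mathcal{Q}}/(n_p\varepsilon)^2)$ and scale $O(1/(n_p\varepsilon))$), and apply the large-deviation branch of Lemma~\ref{lemma:sub-exponential-concentration} to obtain $\exp(-\varepsilon\Delta_j n_p/16)$. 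Finally I would convert $n_p$ into the claimed argument $\lceil T'/(M(d_{\mathcal{Q}}^2\wedge s_p))\rceil$: in the pull-all branch $n_p$ already equals it; in the Max-Det branch the reward bound uses $n_p/d_{\mathcal{Q}}\ge T'/(Md_{\mathcal{Q}}^2)\ge\tfrac12\lceil T'/(Md_{\mathcal{Q}}^2)\rceil$ (the extra factor $2$ turning $1/8$ into $1/16$), while the noise bound uses the monotonicity $n_p=\lceil T'/(Md_{\mathcal{Q}})\rceil\ge\lceil T'/(Md_{\mathcal{Q}}^2)\rceil$ directly.

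The main obstacle is the Max-Det branch, where neither $j$ nor $i^*(v)$ need be sampled, so the whole argument hinges on controlling the reconstruction coefficients: it is precisely $|\gamma_k|\le2$ from Lemma~\ref{lemma:propoerty_of_max_det} that keeps the variance proxy at $O(d_{\mathcal{Q}})$ and thereby produces the effective sample size $n_p/d_{\mathcal{Q}}\asymp T'/(Md_{\mathcal{Q}}^2)$, matching the $d_{\mathcal{Q}}^2$ in the denominator. The hypothesis ``for all $T'$ sufficiently large'' enters twice: to pass to the exponential (second) branch of Lemma~\ref{lemma:sub-exponential-concentration} once $\Delta_j/2>\tau^2/b$, which holds as $n_p\to\infty$, and to absorb the ceiling in the reward-term conversion via $T'/(Md_{\mathcal{Q}}^2)\ge1$.
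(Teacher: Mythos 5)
Your proposal is correct and follows essentially the same route as the paper's proof: the same case split on $d_{\mathcal Q}$ versus $\sqrt{s_p}$, Hoeffding's inequality for the empirical-mean deviation, the sub-exponential combination and tail bounds for the Laplacian noise, and the crucial coefficient bound $|\alpha_{i,k}|\le 1$ from the {\sc Max-Det} property to keep the effective sample size at $T'/(M d_{\mathcal Q}^2)$. The only difference is cosmetic: the paper's Case~2 union-bounds over four events ($G_1,\dots,G_4$, thresholds $\Delta_j/4$, treating $j$ and $i^*(v)$ separately), whereas you combine everything into one linear form with coefficients $\gamma_k=\alpha_{j,k}-\alpha_{i^*(v),k}$ and two events at threshold $\Delta_j/2$, which even yields slightly better constants.
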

\begin{proof}
Fix $j \in \mathcal{Q}\setminus \{i^*(v)\}$. Notice that the sampling strategy of $\Pi_{\textsc{DP-BAI}}$ depends on the relation between $\lvert \mathcal{Q} \rvert$ and $d_Q$. We consider two cases.

\underline{\bf Case 1}: $d_Q > \sqrt{\lvert \mathcal{Q} \rvert} $. 

In this case, note that each arm in $\mathcal{Q}$ is pulled $\left\lceil \frac{T'}{M \lvert \mathcal{Q} \rvert} \right\rceil$ many times. 
Let
\begin{equation}
    E_1 \coloneqq \{\hat{\mu}_j^{(p)}  - \hat{\mu}_{i^*(v)}^{(p)} + \Delta_j \ge \Delta_j /2\}.
    \label{eq:event-E-1}
\end{equation}
Let $\bar{X}_{i,s} = X_{i,s} - \mu_i$ for all $i\in[K]$ and $s\in [T]$. Notice that $\bar{X}_{i,s}\in [-1,1]$ for all $(i,s)$.
Then,
\begin{align}
    &\mathbb{P}_v^{\Pi_{\textsc{DP-BAI}}}(E_1 \given[\big] \mathcal{A}_p = \mathcal{Q})  \nonumber \\ 
    &\le  \mathbb{P}_v^{\Pi_{\textsc{DP-BAI}}} \left(
    \sum_{s=N_{j,T_p}}^{N_{j,T_p}+\left\lceil \frac{T'}{M \lvert \mathcal{Q} \rvert} \right\rceil}\bar{X}_{j,s} - \sum_{s=N_{i^*(v),T_p}}^{N_{i^*(v),T_p}+\left\lceil \frac{T'}{M \lvert \mathcal{Q} \rvert} \right\rceil}\bar{X}_{i^*(v),s} \ge  \frac{\Delta_j }{2} \left\lceil \frac{T'}{M \lvert \mathcal{Q} \rvert} \right\rceil \given[\bigg] \mathcal{A}_p = \mathcal{Q} \right) \nonumber \\
    &\stackrel{(a)}{\le} \exp \left(-\frac{2\left(\frac{1}{2} \left\lceil \frac{T'}{M \lvert \mathcal{Q} \rvert} \right\rceil \Delta_j \right)^2}{2 \left\lceil \frac{T'}{M \lvert \mathcal{Q} \rvert} \right\rceil} \right) \nonumber \\
    & = \exp \left(-\frac{1}{4} \left\lceil \frac{T'}{M \lvert \mathcal{Q} \rvert} \right\rceil \Delta_j^2 \right),
\end{align}
where (a) follows Lemma \ref{lemma:hoffeding}. Furthermore, let  
\begin{equation}
    E_2 \coloneqq \{ \xi_{j}^{(p)}-\xi_{i^*(v)}^{(p)}  \ge \Delta_j/2 \}.
    \label{eq:event-E-2}
\end{equation}
We note that both $\xi_{j}^{(p)}$ and $\xi_{i^*(v)}^{(p)}$ are $(\tau^2, b)$ sub-exponential, where $\tau=b=\frac{2}{ \left\lceil \frac{T'}{M \lvert \mathcal{Q} \rvert} \right\rceil \varepsilon }$.
Lemma~\ref{lemma:sub-exponential-combination} then yields that  $\xi_{j}^{(p)} -\xi_{i^*(v)}^{(p)} $ is sub-exponential with parameters  $\left(\left(\frac{2\sqrt{2}}{\left\lceil \frac{T'}{M \lvert \mathcal{Q} \rvert} \right\rceil \varepsilon }\right)^2, \frac{2}{\left\lceil \frac{T'}{M \lvert \mathcal{Q} \rvert} \right\rceil \varepsilon }\right)$. 
Subsequently, Lemma~\ref{lemma:sub-exponential-concentration} yields that for all sufficiently large $T'$,
\begin{align}
\mathbb{P}_v^{\Pi_{\textsc{DP-BAI}}}\left(E_2 \given[\big] \mathcal{A}_p = \mathcal{Q}  \right) 
&\leq \exp \left(- \frac{\Delta_j/2}{\frac{4}{\left\lceil \frac{T'}{M \lvert \mathcal{Q} \rvert} \right\rceil \varepsilon }} \right)  \nonumber\\
& = \exp \left(- \frac{{\left\lceil \frac{T'}{M \lvert \mathcal{Q} \rvert} \right\rceil \varepsilon }\Delta_j}{8} \right).
\end{align}
We therefore have
\begin{align}
&\mathbb{P}_v^{\Pi_{\textsc{DP-BAI}}} \left(\widetilde{\mu}^{(p)}_j \ge \widetilde{\mu}^{(p)}_{i^*(v)} \given[\big] \mathcal{A}_p = \mathcal{Q} \right) \nonumber\\
&\le \mathbb{P}_v^{\Pi_{\textsc{DP-BAI}}} \left(E_1 \cup E_2 \given[\big] \mathcal{A}_p = \mathcal{Q} \right) \nonumber\\
&\stackrel{(a)}{\le} \exp \left(-\frac{1}{4} \left\lceil \frac{T'}{M \lvert \mathcal{Q} \rvert} \right\rceil \Delta_j^2 \right) + \exp \left(- \frac{1}{8}{ \left\lceil \frac{T'}{M \lvert \mathcal{Q} \rvert} \right\rceil \varepsilon }\Delta_j \right) \nonumber\\
&= \exp \left(-\frac{1}{4} \left\lceil \frac{T'}{M (d^2_Q \land s_p)} \right\rceil \Delta_j^2 \right) + \exp \left(- \frac{1}{8}{ \left\lceil \frac{T'}{M (d^2_Q \land s_p)} \right\rceil \varepsilon }\Delta_j \right),
\label{eq:case-1}
\end{align}
where (a) follows from the union bound.

\underline{\bf Case 2}: $d_Q \le \sqrt{\lvert \mathcal{Q} \rvert} $.  

In this case, each arm in $\mathcal{B}_p \subset \mathcal{Q}  $ is pulled for $\left\lceil \frac{T'}{M d_Q} \right\rceil$ times. Recall that we have $\mathbf{a}_j^{(p)} = \sum_{i \in \mathcal{B}_p} \alpha_{j,i} \, \mathbf{a}_i^{(p)}.$ Using~\eqref{eq:dim_reduce}, this implies $\mathbf{a}_j = \sum_{i \in \mathcal{B}_p} \alpha_{j,i}\, \mathbf{a}_i,$ which in turn implies that
$
\langle \mathbf{a}_j, \theta^* \rangle = \sum_{i \in \mathcal{B}_p} \alpha_{j,i}\, \langle \mathbf{a}_i, \theta^* \rangle.
$
Using~\eqref{eq:construct_others}, we then have
\begin{equation}
    \mathbb{E}_v^{\Pi_{\textsc{DP-BAI}}} \left( \widetilde{\mu}^{(p)}_j \given[\big] \mathcal{A}_p = Q \right) = \mu_j.
    \label{eq:expected-private-mean-equals-true-mean}
\end{equation}
If $j \notin \mathcal{B}_p$, we denote 
\begin{equation}
\label{eq:constrcut_hat_mu} 
    \hat{\mu}_j^{(p)} = \sum_{\iota \in \mathcal{B}_p } \alpha_{j,\iota} \hat{\mu}_\iota^{(p)}
\end{equation}
and
\begin{equation}
\label{eq:constrcut_xi} 
    \widetilde{\xi}_j^{(p)} = \sum_{\iota \in \mathcal{B}_p } \alpha_{j,\iota} \widetilde{\xi}_\iota^{(p)}.
\end{equation}
Note that ~\eqref{eq:constrcut_hat_mu} and~\eqref{eq:constrcut_xi} still hold in the case of $j \in \mathcal{B}_p$. We define the event 
$$
G_1 = \left\{\hat{\mu}_j^{(p)} -\mu_j \ge \frac{\Delta_j}{4}  \right\}.
$$
Then, we have
\begin{align}
&\mathbb{P}_v^{\Pi_{\textsc{DP-BAI}}}\left( G_1 \given[\big] \mathcal{A}_p = \mathcal{Q} \right) \nonumber \\
&= \mathbb{P}_v^{\Pi_{\textsc{DP-BAI}}}\left( \left\lceil \frac{T'}{M d_Q} \right\rceil(\hat{\mu}_j^{(p)} -\mu_j) \ge  \left\lceil \frac{T'}{M d_Q} \right\rceil \frac{\Delta_j}{4}  \,\bigg|\, \mathcal{A}_p = \mathcal{Q} \right) \nonumber \\
& = \mathbb{P}_v^{\Pi_{\textsc{DP-BAI}}}\left( \sum_{\iota \in \mathcal{B}_p} \sum_{s=N_{\iota,T_{p-1}}+1}^{N_{\iota,T_{p-1}}+\left\lceil \frac{T'}{M d_Q} \right\rceil}  \alpha_{j,\iota}(X_{\iota,s}-\mu_\iota) \ge \left\lceil \frac{T'}{M d_Q} \right\rceil \frac{\Delta_j}{4}\,\bigg|\, \mathcal{A}_p = \mathcal{Q} \right) \nonumber \\
& \stackrel{(a)}{\le} \exp \left(-\frac{2 (\left\lceil \frac{T'}{M d_Q} \right\rceil \Delta_j/4)^2}{d_Q\left\lceil \frac{T'}{M d_Q} \right\rceil} \right) \nonumber \\
& \le \exp \left(-\frac{1}{8}\left\lceil \frac{T'}{M d^2_Q} -1 \right\rceil \Delta^2_j \right) \nonumber \\
& = \exp \left(-\frac{1}{8}\left\lceil \frac{T'}{M (d^2_Q \land s_p)} -1 \right\rceil \Delta^2_j \right) \label{eq:event_G_1_result}
\end{align}
where (a) follows Lemma~\ref{lemma:propoerty_of_max_det} and Lemma~\ref{lemma:hoffeding}.
In addition, we define the event
$$
G_2 = \left\{\widetilde{\xi}_j^{(p)} \ge \frac{\Delta_j}{4} \right\}.
$$
By Lemma~\ref{lemma:sub-exponential-combination} and Lemma~\ref{lemma:propoerty_of_max_det}, we have $ \widetilde{\xi}_j^{(p)} = \sum_{\iota \in \mathcal{B}_p } \alpha_{j,\iota} \widetilde{\xi}_\iota^{(p)}$ is a sub-exponential random variable with parameters $(\sum_{\iota \in \mathcal{B}_p} \alpha^2_{j,\iota} (\frac{2}{\left\lceil \frac{T'}{M d_Q} \right\rceil \varepsilon })^2, \max_{\iota \in \mathcal{B}_p} \lvert \alpha_{j,\iota} \rvert \frac{2}{\left\lceil \frac{T'}{M d_Q} \right\rceil \varepsilon })$. 
Then, by Lemma~\ref{lemma:sub-exponential-concentration}, it holds that for $T'$  sufficiently large
\begin{align}
\mathbb{P}_v^{\Pi_{\textsc{DP-BAI}}}\left(G_2 \given[\big] \mathcal{A}_p = \mathcal{Q}  \right) & \le \exp \left(- \frac{\Delta_j/4}{\frac{4\max_{\iota \in \mathcal{B}_p} \lvert \alpha_{j,\iota} \rvert}{\left\lceil \frac{T'}{M d_Q} \right\rceil \varepsilon }} \right) \nonumber  \\
& \le \exp \left(- \frac{{\left\lceil \frac{T'}{M d_Q} \right\rceil \varepsilon }\Delta_j}{16} \right) \nonumber \\
& \le \exp \left(- \frac{{\left\lceil \frac{T'}{M (d^2_Q \land s_p )} \right\rceil \varepsilon }\Delta_j}{16} \right). \label{eq:event_G_2_result}
\end{align}
Define the events 
$$
G_3 = \left\{\hat{\mu}_{i^*(v)}^{(p)} -\mu_{i^*(v)} \le -\frac{\Delta_j}{4}  \right\}
\quad\mbox{and}\quad 
G_4 = \left\{\widetilde{\xi}_{i^*(v)}^{(p)} \le -\frac{\Delta_j}{4} \right\}.
$$
Similar to~\eqref{eq:event_G_1_result} and~\eqref{eq:event_G_2_result}, we have
\begin{equation}
    \mathbb{P}_v^{\Pi_{\textsc{DP-BAI}}}\left( G_3 \given[\big] \mathcal{A}_p =\mathcal{Q} \right) \le \exp \left(-\frac{1}{8}\left\lceil \frac{T'}{M (d^2_Q \land s_p)} -1 \right\rceil \Delta^2_j \right)
\end{equation}
and for sufficiently large $T'$,
\begin{equation}
    \mathbb{P}_v^{\Pi_{\textsc{DP-BAI}}}\left(G_4 \given[\big] \mathcal{A}_p = \mathcal{Q}  \right)  \le \exp \left(- \frac{{\left\lceil \frac{T'}{M (d^2_Q \land s_p )} \right\rceil \varepsilon }\Delta_j}{16} \right). 
\end{equation}
Hence, in Case 2, for sufficiently large $T'$ we have
\begin{align}
&\mathbb{P}_v^{\Pi_{\textsc{DP-BAI}}} \left(\widetilde{\mu}^{(p)}_j \ge \widetilde{\mu}^{(p)}_{i^*(v)} \given[\big] \mathcal{A}_p = \mathcal{Q} \right) \nonumber \\
&\le \mathbb{P}_v^{\Pi_{\textsc{DP-BAI}}} \left(G_1 \cup G_2 \cup G_3 \cup G_4 \given[\big] \mathcal{A}_p = \mathcal{Q} \right) \nonumber \\
&\le 2\exp \left(-\frac{1}{8}\left\lceil \frac{T'}{M (d^2_Q \land s_p)} -1 \right\rceil \Delta^2_j \right)  + 2\exp \left(- \frac{1}{16} {\left\lceil \frac{T'}{M (d^2_Q \land s_p )} \right\rceil \varepsilon }\Delta_j \right)
\end{align} 

Finally, combining both Cases 1 and 2, for sufficiently large $T'$,  we have  
\begin{align}
& \mathbb{P}_v^{\Pi_{\textsc{DP-BAI}}} \left(\widetilde{\mu}^{(p)}_j \ge \widetilde{\mu}^{(p)}_{i^*(v)} \given[\big] \mathcal{A}_p = \mathcal{Q} \right)  \nonumber \\
& \le 2\exp \left(-\frac{1}{8}\left\lceil \frac{T'}{M (d^2_Q \land s_p)} -1 \right\rceil \Delta^2_j \right)  + 2\exp \left(- \frac{1}{16} {\left\lceil \frac{T'}{M (d^2_Q \land s_p )} \right\rceil \varepsilon }\Delta_j \right). \nonumber \\
& \le 2\exp \left(-\frac{1}{16}\left\lceil \frac{T'}{M (d^2_Q \land s_p)}  \right\rceil \Delta^2_j \right)  + 2\exp \left(- \frac{1}{16} {\left\lceil \frac{T'}{M (d^2_Q \land s_p )} \right\rceil \varepsilon }\Delta_j \right). \nonumber
\end{align}

\end{proof}

\begin{lemma}
\label{lemma:beforem1}
Fix instance $v\in \mathcal{P}$ and $p\in [M_1]$. Recall the definitions of $\lambda$ in \eqref{def:lambda} and $g_0$ in \eqref{def:param_of_init}. For any set $\mathcal{Q} \subset [K]$ with $\lvert \mathcal{Q} \rvert = s_p$ and $i^*(v) \in Q$,  it holds when $T'$ is sufficiently large
\begin{align}
&\mathbb{P}_v^{\Pi_{\textsc{DP-BAI}}} \left(i^*(v) \notin \mathcal{A}_{p+1} \given[\big] \mathcal{A}_p = \mathcal{Q} \right) \nonumber\\
&\le 2\lambda \left(\exp\Big(-\frac{1}{16}\left\lceil \frac{T'}{M ({d^2_{\mathcal{Q}}} \land s_p)} \right\rceil \Delta_{(g_0)}^2 \Big) + \exp\Big(-\frac{1}{16}\left\lceil \frac{T'}{M ({d^2_{\mathcal{Q}}} \land s_p)} \right\rceil \Delta_{(g_0)}\varepsilon \Big) \right)
\end{align}
\end{lemma}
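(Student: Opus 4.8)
The plan is to reduce the event $\{i^*(v) \notin \mathcal{A}_{p+1}\}$ to a statement about how many sub-optimal arms overtake the best arm, and then control that count via a Markov argument fed by the concentration bound of Lemma~\ref{lemma:concentration_for_suboptimal_arm}. Writing $i^* = i^*(v)$ and, for each competitor $j \in \mathcal{Q}\setminus\{i^*\}$, setting $Z_j = \mathbf{1}\{\widetilde{\mu}_j^{(p)} \ge \widetilde{\mu}_{i^*}^{(p)}\}$, I would first note that since $\mathcal{A}_{p+1}$ retains exactly the $s_{p+1}$ arms with the largest private empirical means, the best arm is dropped only if at least $s_{p+1}$ competitors overtake it: $\{i^* \notin \mathcal{A}_{p+1}\} \subseteq \{\sum_{j \in \mathcal{Q}\setminus\{i^*\}} Z_j \ge s_{p+1}\}$. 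Equivalently, on this event at most $(s_p - 1) - s_{p+1} = n_{\mathrm{elim}} - 1$ competitors \emph{fail} to overtake $i^*$, where $n_{\mathrm{elim}} \coloneqq s_p - s_{p+1} = h_{p-1} - h_p$ by \eqref{eq:s-p-definition}.

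Next I would isolate the large-gap competitors $\mathcal{G} \coloneqq \{j \in \mathcal{Q}\setminus\{i^*\} : \Delta_j \ge \Delta_{(g_0)}\}$. Since $\Delta_{(g_0)}$ is the $g_0$-th smallest gap, at most $g_0 - 1$ arms of $[K]$ have gap strictly below $\Delta_{(g_0)}$, and one of these is $i^*$ (using $\Delta_{(g_0)} > 0$, valid since $g_0 \ge 2$ throughout the first stage); hence $|\mathcal{G}| \ge s_p - (g_0 - 1) = h_{p-1} + 1$. Because at most $n_{\mathrm{elim}} - 1$ competitors in total fail to overtake $i^*$, on the elimination event at least $|\mathcal{G}| - (n_{\mathrm{elim}} - 1)$ of the large-gap competitors must overtake it, i.e.\ $\sum_{j \in \mathcal{G}} Z_j \ge |\mathcal{G}| - n_{\mathrm{elim}} + 1$.

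I would then apply Markov's inequality to this last event. As every $j \in \mathcal{G}$ satisfies $\Delta_j \ge \Delta_{(g_0)}$, Lemma~\ref{lemma:concentration_for_suboptimal_arm} gives, for each such $j$, $\mathbb{E}_v^{\Pi_{\textsc{DP-BAI}}}[Z_j \mid \mathcal{A}_p = \mathcal{Q}] \le \rho$, where $\rho \coloneqq 2\exp(-\tfrac{1}{16}\lceil \tfrac{T'}{M(d_{\mathcal{Q}}^2 \land s_p)}\rceil \Delta_{(g_0)}^2) + 2\exp(-\tfrac{1}{16}\lceil \tfrac{T'}{M(d_{\mathcal{Q}}^2 \land s_p)}\rceil \Delta_{(g_0)}\varepsilon)$. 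Markov then yields $\mathbb{P}_v^{\Pi_{\textsc{DP-BAI}}}(i^* \notin \mathcal{A}_{p+1} \mid \mathcal{A}_p = \mathcal{Q}) \le \tfrac{|\mathcal{G}|}{|\mathcal{G}| - n_{\mathrm{elim}} + 1}\,\rho$. Since the map $x \mapsto x/(x - n_{\mathrm{elim}} + 1)$ is non-increasing for $n_{\mathrm{elim}} \ge 1$, over $|\mathcal{G}| \ge h_{p-1} + 1$ the prefactor is largest at $|\mathcal{G}| = h_{p-1}+1$, giving $\tfrac{|\mathcal{G}|}{|\mathcal{G}| - n_{\mathrm{elim}} + 1} \le \tfrac{h_{p-1}+1}{h_p + 2}$. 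Finally, the recursion in \eqref{eq:h-sequence} gives $h_p + 1 = \lceil (h_{p-1}+1)/\lambda\rceil \ge (h_{p-1}+1)/\lambda$, so $\tfrac{h_{p-1}+1}{h_p+2} \le \tfrac{\lambda(h_p+1)}{h_p+2} \le \lambda$. Combining, $\mathbb{P}_v^{\Pi_{\textsc{DP-BAI}}}(i^* \notin \mathcal{A}_{p+1} \mid \mathcal{A}_p = \mathcal{Q}) \le \lambda\rho$, which is exactly the claimed bound since $\rho$ equals twice the bracketed sum.

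The main obstacle is the combinatorial bookkeeping in the last two paragraphs: choosing the threshold $\Delta_{(g_0)}$ and, crucially, applying Markov at the \emph{$|\mathcal{G}|$-dependent} level $|\mathcal{G}| - n_{\mathrm{elim}} + 1$ rather than at the cruder level $h_p + 2$. A naive union bound (requiring merely one large-gap arm to overtake $i^*$) loses a factor of order $|\mathcal{G}|$, while a Markov argument with the fixed threshold $h_p+2$ yields the prefactor $\tfrac{s_p - 1}{h_p + 2}$, which can exceed $\lambda$. It is precisely the monotonicity of $x \mapsto x/(x - n_{\mathrm{elim}}+1)$, together with the lower bound $|\mathcal{G}| \ge h_{p-1}+1$ and the defining recursion for $h_p$ in \eqref{eq:h-sequence}, that collapses the prefactor to $\lambda$ (the residual factor $2$ being absorbed into $\rho$). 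A minor sanity check is the edge case $g_0 \ge 2$, which guarantees $\Delta_{(g_0)} > 0$ and hence $i^* \notin \mathcal{G}$; this holds throughout the first stage, which is non-empty only when $K > \lceil d^2/4 \rceil$.
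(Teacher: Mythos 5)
Your argument is essentially the paper's own proof in dual bookkeeping: the paper restricts attention to the set $\mathcal{Q}^{\rm sub}$ of the $s_p-g_0+1$ largest-gap arms of $\mathcal{Q}$ (all of which have gap $\ge\Delta_{(g_0)}$), bounds $\mathbb{E}[N^{\rm sub}]$ via Lemma~\ref{lemma:concentration_for_suboptimal_arm}, applies Markov at the threshold $s_{p+1}-g_0+1=h_p+1$, and collapses the prefactor $\tfrac{2(h_{p-1}+1)}{h_p+1}\le 2\lambda$ using the recursion \eqref{eq:h-sequence} — the same three ingredients as your $\mathcal{G}$/Markov/monotonicity chain, so for $p\le M_1-1$ your proof is correct and matches the paper's. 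One caveat: your identity $n_{\mathrm{elim}}=h_{p-1}-h_p$ (and hence the positivity of your Markov threshold $|\mathcal{G}|-n_{\mathrm{elim}}+1$) holds only for $p\le M_1-1$; at $p=M_1$ one has $s_{M_1+1}=g_1=\lceil g_0/2\rceil$ rather than $g_0+h_{M_1}$, so $n_{\mathrm{elim}}=g_0+h_{M_1-1}-\lceil g_0/2\rceil$ and the threshold evaluated at $|\mathcal{G}|=h_{M_1-1}+1$ equals $2-\lfloor g_0/2\rfloor$, which is non-positive once $g_0\ge 4$, invalidating the Markov step. The paper's proof has the identical blind spot at $p=M_1$ (its threshold $s_{M_1+1}-g_0+1$ is likewise non-positive there), so this is not a deviation from the paper's argument, but it is a gap you inherit rather than repair; handling phase $M_1$ would require a different gap threshold (e.g.\ $\Delta_{(\lceil g_0/2\rceil)}$, in the spirit of Lemma~\ref{lemma:m1tom}).
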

\begin{proof}
Fix $v\in \mathcal{P}$, $p\in [M_1]$ and $\mathcal{Q} \subset [K]$ with $\lvert \mathcal{Q} \rvert = s_p$ and $i^*(v) \in Q$.
Let $\mathcal{Q}^{\rm sub}$ be the set of $s_p - g_0+1$ arms in $\mathcal{Q}$ with largest suboptimal gap. In addition, let 
$$
N^{\rm sub} = \sum_{j \in \mathcal{Q}^{\rm sub} } \mathbf{1}_{\{\widetilde{\mu}^{(p)}_j \ge \widetilde{\mu}^{(p)}_{i^*(v)} \}}
$$
be the number of arms in $\mathcal{Q}^{\rm sub}$ with private empirical mean larger than the best arm.
Then, we have
\begin{align}
& \mathbb{E}_v^{\Pi_{\textsc{DP-BAI}}} \left(N^{\rm sub} \,\big|\,\mathcal{A}_p = \mathcal{Q} \right) \nonumber \\
& = \sum_{j \in \mathcal{Q}^{\rm sub} } \mathbb{E}_v^{\Pi_{\textsc{DP-BAI}}} \left(\mathbf{1}_{\{\widetilde{\mu}^{(p)}_j \ge \widetilde{\mu}^{(p)}_{i^*(v)} \}}\,\big|\,\mathcal{A}_p = \mathcal{Q} \right) \nonumber \\
& \le \sum_{j \in \mathcal{Q}^{\rm sub} } \mathbb{P}_v^{\Pi_{\textsc{DP-BAI}}} \left(\widetilde{\mu}^{(p)}_j \ge \widetilde{\mu}^{(p)}_{i^*(v)} \given[\big] \mathcal{A}_p = \mathcal{Q} \right) \nonumber \\
& \stackrel{(a)}{\le} \sum_{j\in Q^{\rm sub}} 2\left( \exp \left(-\frac{1}{16}\left\lceil \frac{T'}{M (d^2_Q \land s_p)}  \right\rceil \Delta^2_{j} \right)  + \exp \left(- \frac{1}{16} {\left\lceil \frac{T'}{M (d^2_Q \land s_p )} \right\rceil \varepsilon }\Delta_{j} \right) \right) \nonumber \\
& \stackrel{(b)}{\le} 2\lvert \mathcal{Q}^{\rm sub} \rvert \left(\exp \left(-\frac{1}{16}\left\lceil \frac{T'}{M (d^2_Q \land s_p)}  \right\rceil \Delta^2_{(g_0)} \right)  +  \exp \left(- \frac{1}{16} {\left\lceil \frac{T'}{M (d^2_Q \land s_p )} \right\rceil \varepsilon }\Delta_{(g_0)} \right) \right) \nonumber \\
& = 2 (s_p - g_0 +1) \Bigg( \exp \left(-\frac{1}{16}\left\lceil \frac{T'}{M (d^2_Q \land s_p)}  \right\rceil \Delta^2_{(g_0)} \right)   \nonumber\\*
&\qquad + \exp \left(- \frac{1}{16} {\left\lceil \frac{T'}{M (d^2_Q \land s_p )} \right\rceil \varepsilon }\Delta_{(g_0)} \right) \Bigg) ,\label{eq:expectation_n_sub}
\end{align}
where (a) follows Lemma~\ref{lemma:concentration_for_suboptimal_arm}, and (b) follows from the fact that $\min_{j \in \mathcal{Q}^{\rm sub}} \Delta_j \ge \Delta_{(g_0)}$.

Then,
\begin{align}
& \mathbb{P}_v^{\Pi_{\textsc{DP-BAI}}} \left(i^*(v) \notin \mathcal{A}_{p+1} \given[\big] \mathcal{A}_p = \mathcal{Q} \right) \nonumber \\
& \stackrel{(a)}{\le} \mathbb{P}_v^{\Pi_{\textsc{DP-BAI}}} \left(N^{\rm sub} \ge s_{p+1}-g_0 +1 \given[\big] \mathcal{A}_p = \mathcal{Q} \right) \nonumber \\
& \stackrel{(b)}{\le} \frac{\mathbb{E}_v^{\Pi_{\textsc{DP-BAI}}} \left(N^{\rm sub} \given[\big] \mathcal{A}_p = \mathcal{Q} \right)}{ s_{p+1}-g_0 +1} \nonumber \\
& \stackrel{(c)}{\le} \frac{2(s_p - g_0 +1 )}{s_{p+1}-g_0 +1} \left( \exp \left(-\frac{1}{16}\left\lceil \frac{T'}{M (d^2_Q \land s_p)}  \right\rceil \Delta^2_{(g_0)} \right)  + \exp \left(- \frac{1}{16} {\left\lceil \frac{T'}{M (d^2_Q \land s_p )} \right\rceil \varepsilon }\Delta_{(g_0)} \right) \right)  \nonumber \\
& \stackrel{(d)}{\le} \frac{2(h_p +1 )}{h_{p+1} +1} \left( \exp \left(-\frac{1}{16}\left\lceil \frac{T'}{M (d^2_Q \land s_p)}  \right\rceil \Delta^2_{(g_0)} \right)  + \exp \left(- \frac{1}{16} {\left\lceil \frac{T'}{M (d^2_Q \land s_p )} \right\rceil \varepsilon }\Delta_{(g_0)} \right) \right)  \nonumber \\
& \stackrel{(e)}{\le} 2\lambda \left( \exp \left(-\frac{1}{16}\left\lceil \frac{T'}{M (d^2_Q \land s_p)}  \right\rceil \Delta^2_{(g_0)} \right)  + \exp \left(- \frac{1}{16} {\left\lceil \frac{T'}{M (d^2_Q \land s_p )} \right\rceil \varepsilon }\Delta_{(g_0)} \right) \right)  \nonumber 
\end{align}
where (a) follows from the fact that $N^{\rm sub} \ge s_{p+1}-g_0 +1$ is a necessary condition for $i^*(v) \notin \mathcal{A}_{p+1}$ when $\mathcal{A}_p =\mathcal{Q}$, (b) follows Markov's inequality, (c) is obtained from~\eqref{eq:expectation_n_sub}. In addition, (d) is obtained from the definition in~\eqref{eq:s-p-definition}, and (e) is obtained from the definition in~\eqref{eq:h-sequence}.

\end{proof}

\begin{lemma}
\label{lemma:m1tom}
Fix instance $v\in \mathcal{P}$ and $p\in \{M_1+1, \dots, M\}$. For any set $\mathcal{Q} \subset [K]$ with $\lvert \mathcal{Q} \rvert = s_p$ and $i^*(v) \in Q$,   it holds that when $T'$ is sufficiently large
\begin{align}
&\mathbb{P}_v^{\Pi_{\textsc{DP-BAI}}} \left(i^*(v) \notin \mathcal{A}_{p+1} \given[\big] \mathcal{A}_p = \mathcal{Q} \right) \nonumber \\
&\le 6 \left(\exp \left(-\frac{1}{16}\left\lceil \frac{T'}{M ({d^2_{\mathcal{Q}}} \land s_p)} \right\rceil \Delta_{ (s_{p+2}+1)}^2  \right) + \exp \left(-\frac{1}{16}\left\lceil \frac{T'}{M ({d^2_{\mathcal{Q}}} \land s_p)} \right\rceil \varepsilon \Delta_{(s_{p+2}+1)} \right) \right),
\end{align}
where we define $s_{M+2}=1$.

\end{lemma}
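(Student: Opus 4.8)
The plan is to follow the template of the proof of Lemma~\ref{lemma:beforem1}, changing only the combinatorial bookkeeping to reflect that in the second stage the active set is (roughly) halved at each phase rather than shrunk towards $g_0$. Fix $v\in\mathcal{P}$, $p\in\{M_1+1,\dots,M\}$, and a set $\mathcal{Q}\subset[K]$ with $|\mathcal{Q}|=s_p$ and $i^*(v)\in\mathcal{Q}$. Let $\mathcal{Q}^{\rm sub}$ be the set of the $s_p-s_{p+2}$ arms in $\mathcal{Q}$ possessing the largest sub-optimality gaps. Because $\mathcal{Q}\subseteq[K]$, the $(s_{p+2}+1)$-th smallest gap among the arms of $\mathcal{Q}$ is at least the $(s_{p+2}+1)$-th smallest gap over all of $[K]$, whence $\min_{j\in\mathcal{Q}^{\rm sub}}\Delta_j\ge\Delta_{(s_{p+2}+1)}$. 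Since $s_{p+2}\ge 1$, this lower bound is at least $\Delta_{(2)}>0$, so $i^*(v)\notin\mathcal{Q}^{\rm sub}$.

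Next, mirroring Lemma~\ref{lemma:beforem1}, I would introduce
\[
N^{\rm sub}\coloneqq \sum_{j\in\mathcal{Q}^{\rm sub}}\mathbf{1}_{\{\widetilde{\mu}^{(p)}_j\ge\widetilde{\mu}^{(p)}_{i^*(v)}\}},
\]
and bound its conditional expectation by applying Lemma~\ref{lemma:concentration_for_suboptimal_arm} term by term, replacing each $\Delta_j$ by the smaller quantity $\Delta_{(s_{p+2}+1)}$ in the exponents (this only enlarges the bound). For $T'$ sufficiently large this gives
\begin{align}
\mathbb{E}_v^{\Pi_{\textsc{DP-BAI}}}\!\left(N^{\rm sub}\given[\big]\mathcal{A}_p=\mathcal{Q}\right)
&\le 2(s_p-s_{p+2})\Bigg(\exp\!\left(-\tfrac{1}{16}\left\lceil\tfrac{T'}{M(d^2_{\mathcal{Q}}\land s_p)}\right\rceil\Delta^2_{(s_{p+2}+1)}\right)\nonumber\\
&\qquad+\exp\!\left(-\tfrac{1}{16}\left\lceil\tfrac{T'}{M(d^2_{\mathcal{Q}}\land s_p)}\right\rceil\varepsilon\,\Delta_{(s_{p+2}+1)}\right)\Bigg),
\end{align}
the prefactor $2(s_p-s_{p+2})=2|\mathcal{Q}^{\rm sub}|$ coming from summing the $|\mathcal{Q}^{\rm sub}|$ per-arm bounds of Lemma~\ref{lemma:concentration_for_suboptimal_arm}.

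Finally, I would couple a necessity argument with Markov's inequality. Since the policy keeps the top $s_{p+1}$ arms, the event $\{i^*(v)\notin\mathcal{A}_{p+1}\}$ forces at least $s_{p+1}$ arms to have private mean at least $\widetilde{\mu}^{(p)}_{i^*(v)}$; of the $s_{p+2}$ smallest-gap arms of $\mathcal{Q}$ (the complement of $\mathcal{Q}^{\rm sub}$), at most $s_{p+2}-1$ differ from $i^*(v)$, so at least $s_{p+1}-s_{p+2}+1$ of the arms beating $i^*(v)$ must lie in $\mathcal{Q}^{\rm sub}$. Hence $\{i^*(v)\notin\mathcal{A}_{p+1}\}\subseteq\{N^{\rm sub}\ge s_{p+1}-s_{p+2}+1\}$, and Markov's inequality bounds the target conditional probability by $\mathbb{E}_v^{\Pi_{\textsc{DP-BAI}}}(N^{\rm sub}\given[\big]\mathcal{A}_p=\mathcal{Q})/(s_{p+1}-s_{p+2}+1)$. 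The step I expect to be the main obstacle is the purely arithmetic inequality $\tfrac{2(s_p-s_{p+2})}{s_{p+1}-s_{p+2}+1}\le 6$. Using \eqref{eq:s-p-definition} and \eqref{eq:g-sequence}, in the second stage $b\coloneqq s_{p+1}=\lceil s_p/2\rceil$ and $c\coloneqq s_{p+2}=\lceil b/2\rceil$ (with the convention $s_{M+2}=1$ covering the boundary $p=M$), so the claim reduces to $s_p\le 3b-2c+3$; checking the two cases $s_p=2b$ and $s_p=2b-1$ together with $c\le(b+1)/2$ verifies it, and $s_{p+1}\ge s_{p+2}$ ensures the denominator $s_{p+1}-s_{p+2}+1\ge 1$ is never degenerate. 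Substituting the ratio bound of $6$ into the Markov estimate yields exactly the claimed inequality.
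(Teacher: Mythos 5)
Your proposal is correct and follows essentially the same route as the paper's proof: the same choice of $\mathcal{Q}^{\rm sub}$ (the $s_p-s_{p+2}$ largest-gap arms), the same counting variable $N^{\rm sub}$ bounded in expectation via Lemma~\ref{lemma:concentration_for_suboptimal_arm} and $\min_{j\in\mathcal{Q}^{\rm sub}}\Delta_j\ge\Delta_{(s_{p+2}+1)}$, the same necessity-plus-Markov step, and the same final ratio bound $\tfrac{2(s_p-s_{p+2})}{s_{p+1}-s_{p+2}+1}\le 6$. The only difference is that you spell out the halving arithmetic from \eqref{eq:s-p-definition} explicitly (including the $p=M$ boundary with $s_{M+2}=1$), which the paper leaves implicit; your verification of it is sound.
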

\begin{proof}
Fix $v\in \mathcal{P}$, $p\in [M_1]$ and $\mathcal{Q} \subset [K]$ with $\lvert \mathcal{Q} \rvert = s_p$ and $i^*(v) \in Q$. Similarly, let $\mathcal{Q}^{\rm sub}$ be the set of $s_p - s_{p+2}$ arms in $\mathcal{Q}$ with the largest suboptimality gaps. Again, let 
$$
N^{\rm sub} = \sum_{j \in \mathcal{Q}^{\rm sub} } \mathbf{1}_{\{\widetilde{\mu}^{(p)}_j \ge \widetilde{\mu}^{(p)}_{i^*(v)} \}}.
$$
Then, we have
\begin{align}
& \mathbb{E}_v^{\Pi_{\textsc{DP-BAI}}} \left(N^{\rm sub} \,\big|\, \mathcal{A}_p = \mathcal{Q} \right) \nonumber \\
& = \sum_{j \in \mathcal{Q}^{\rm sub} } \mathbb{E}_v^{\Pi_{\textsc{DP-BAI}}} \left(\mathbf{1}_{\{\widetilde{\mu}^{(p)}_j \ge \widetilde{\mu}^{(p)}_{i^*(v)} \}} \,\big|\, \mathcal{A}_p = \mathcal{Q} \right) \nonumber \\
& \le \sum_{j \in \mathcal{Q}^{\rm sub} } \mathbb{P}_v^{\Pi_{\textsc{DP-BAI}}} \left(\widetilde{\mu}^{(p)}_j \ge \widetilde{\mu}^{(p)}_{i^*(v)} \given[\big] \mathcal{A}_p = \mathcal{Q} \right) \nonumber \\
& \stackrel{(a)}{\le} \sum_{j\in Q^{\rm sub}} 2\left( \exp \left(-\frac{1}{16}\left\lceil \frac{T'}{M (d^2_Q \land s_p)}  \right\rceil \Delta^2_{j} \right)  + \exp \left(- \frac{1}{16} {\left\lceil \frac{T'}{M (d^2_Q \land s_p )} \right\rceil \varepsilon }\Delta_{j} \right) \right) \nonumber \\
& \stackrel{(b)}{\le} 2\lvert \mathcal{Q}^{\rm sub} \rvert \left(\exp \left(-\frac{1}{16}\left\lceil \frac{T'}{M (d^2_Q \land s_p)}  \right\rceil \Delta^2_{(s_{p+2}+1)} \right)  +  \exp \left(- \frac{1}{16} {\left\lceil \frac{T'}{M (d^2_Q \land s_p )} \right\rceil \varepsilon }\Delta_{(s_{p+2}+1)} \right) \right) \nonumber \\
& = 2 (s_p - s_{p+2}) \Bigg( \exp \left(-\frac{1}{16}\left\lceil \frac{T'}{M (d^2_Q \land s_p)}  \right\rceil \Delta^2_{(s_{p+2}+1)} \right)   \nonumber\\*
&\qquad+ \exp \left(- \frac{1}{16} {\left\lceil \frac{T'}{M (d^2_Q \land s_p )} \right\rceil \varepsilon }\Delta_{(s_{p+2}+1)} \right) \Bigg),\label{eq:expectation_n_sub2}
\end{align}
where (a) follows from Lemma~\ref{lemma:concentration_for_suboptimal_arm}, and (b) follows from the fact that $\min_{j\in \mathcal{Q}^{\rm sub
}} \Delta_j \ge \Delta_{s_{p+2}+1} $.

Similarly, we can have
\begin{align}
& \mathbb{P}_v^{\Pi_{\textsc{DP-BAI}}} \left(i^*(v) \notin \mathcal{A}_{p+1} \given[\big] \mathcal{A}_p = \mathcal{Q} \right) \nonumber \\
& \stackrel{(a)}{\le} \mathbb{P}_v^{\Pi_{\textsc{DP-BAI}}} \left(N^{\rm sub} \ge s_{p+1}-s_{p+2} +1 \given[\big] \mathcal{A}_p = \mathcal{Q} \right) \nonumber \\
& \stackrel{(b)}{\le} \frac{\mathbb{E}_v^{\Pi_{\textsc{DP-BAI}}} \left(N^{\rm sub} \given[\big] \mathcal{A}_p = \mathcal{Q} \right)}{ s_{p+1}-s_{p+2} +1} \nonumber \\
& \stackrel{(c)}{\le} \frac{2(s_p - s_{p+2} )}{s_{p+1}-s_{p+2} +1} \Bigg( \exp \left(-\frac{1}{16}\left\lceil \frac{T'}{M (d^2_Q \land s_p)}  \right\rceil \Delta^2_{(s_{p+2}+1)} \right) \nonumber \\*
& \qquad+ \exp \left(- \frac{1}{16} {\left\lceil \frac{T'}{M (d^2_Q \land s_p )} \right\rceil \varepsilon }\Delta_{(s_{p+2}+1)} \right) \Bigg)  \nonumber \\
& \stackrel{(d)}{\le} 6 \left( \exp \left(-\frac{1}{16}\left\lceil \frac{T'}{M (d^2_Q \land s_p)}  \right\rceil \Delta^2_{(s_{p+2}+1)} \right)  + \exp \left(- \frac{1}{16} {\left\lceil \frac{T'}{M (d^2_Q \land s_p )} \right\rceil \varepsilon }\Delta_{(s_{p+2}+1)} \right) \right), \nonumber 
\end{align}
where (a) follows from the fact that $N^{\rm sub} \ge s_{p+1}-s_{p+2} +1$ is the necessary condition of $i^*(v) \notin \mathcal{A}_{p+1}$ when $\mathcal{A}_p =\mathcal{Q}$, (b) follows from Markov's inequality, (c) follows~\eqref{eq:expectation_n_sub2}, and (d) is obtained from the definition in~\eqref{eq:s-p-definition}.

\end{proof}
With the above ingredients in place, we are now ready to prove Theorem \ref{thm:upper_bound}.

\begin{proof}[Proof of Theorem \ref{thm:upper_bound}]
Fix instance $v \in \mathcal{P}$.
Recall that in {\sc DP-BAI}, the decision maker eliminates arms in successive phases, and the decision maker can successfully identify the best arm if and only if it is not eliminated in any of the phases. That is, 
\begin{equation}
    \mathbb{P}_v^{\Pi_{\textsc{DP-BAI}}}\left(\hat{I}_T \ne i^*(v) \right) \le \sum_{p=1}^{M} \mathbb{P}_v^{\Pi_{\textsc{DP-BAI}}}\left(  i^*(v) \notin \mathcal{A}_{p+1}  \given[\big] i^*(v) \in \mathcal{A}_p \right).
\end{equation}
Then, we divide the rightmost sum of the probabilities into two parts. Let 
$$
P_1=\sum_{p=1}^{M_1} \mathbb{P}_v^{\Pi_{\textsc{DP-BAI}}}\left(  i^*(v) \notin \mathcal{A}_{p+1}  \given[\big] i^*(v) \in \mathcal{A}_p \right)
$$ 
and 
$$
P_2=\sum_{p=M_1+1}^{M} \mathbb{P}_v^{\Pi_{\textsc{DP-BAI}}}\left(  i^*(v) \notin \mathcal{A}_{p+1}  \given[\big] i^*(v) \in \mathcal{A}_p \right).
$$

In the case of $K\le \left\lceil d^2/4 \right\rceil$, by definition  we have $M_1=0$, which implies that $P_1 =0$. In the case of $K> \left\lceil d^2/4 \right\rceil$
by Lemma~\ref{lemma:beforem1}, we  obtain
\begin{align}
P_1 & \le 2M_1\lambda \left(\exp \left(-\frac{1}{16}\left\lceil \frac{T'}{M {d^2} } \right\rceil \Delta_{(\left\lceil d^2/4 \right\rceil)}^2 \right) + \exp\left(-\frac{1}{16}\left\lceil \frac{T'}{M {d^2}} \right\rceil \varepsilon \Delta_{(\left\lceil d^2/4 \right\rceil)} \right) \right) \nonumber \\
& \le 2M_1\lambda \left(\exp \left(- \frac{T'}{64M H_{\rm BAI} }  \right) + \exp \left(- \frac{T'}{64M H_{\rm pri}} \right) \right) \nonumber \\
& \le 4M_1\lambda \exp \left(- \frac{T'}{64M H} \right)  \label{eq:upper_first}
\end{align}
In addition, by Lemma~\ref{lemma:m1tom}
\begin{align}
P_2 & \le \sum_{p=M_1+1}^M 6\left(\exp \left(-\frac{1}{16}\left\lceil \frac{T'}{M {s_{p}} } \right\rceil \Delta_{(s_{p+2}+1)}^2 \right) + \exp \left(-\frac{1}{16}\left\lceil \frac{T'}{M {s_{p}}} \right\rceil \varepsilon \Delta_{(s_{p+2}+1)} \right) \right) \nonumber  \\
& \le \sum_{p=M_1+1}^M 6\left(\exp 
\left(- \frac{T'}{16M s_{p}}  \Delta_{(s_{p+2}+1)}^2 \right) + \exp \left(- \frac{T'}{16M {s_{p}}}  \varepsilon \Delta_{(s_{p+2}+1)} \right) \right) 
\nonumber \\
& \le \sum_{p=M_1+1}^M 6\left(\exp \left(- \frac{T'}{16M (4s_{p+2}+4)}  \Delta_{(s_{p+2}+1)}^2  \right) + \exp \left(- \frac{T'}{16M {(4s_{p+2}+4)}}  \varepsilon \Delta_{(s_{p+2}+1)} \right) \right) 
\nonumber \\
& = \sum_{i \in \{s_{p+2}+1 : p \in \{M_1+1,\dots,M \}\}} 6\left(\exp \left(- \frac{T'}{64M i}  \Delta_{(i)}^2 \right) + \exp \left(- \frac{T'}{64M {i}}  \varepsilon \Delta_{(i)} \right) \right) 
\nonumber \\
& \le  6(M-M_1)\left(\exp \left(- \frac{T'}{64M H_{\rm BAI}} \right) + \exp \left(- \frac{T'}{64M H_{\rm pri}} \right) \right) 
\nonumber \\
& \le  12(M-M_1)\exp \left(- \frac{T'}{64M H}    \right) \label{eq:upper_second}
\end{align}
Finally, combining~\eqref{eq:upper_first} and~\eqref{eq:upper_second}, for sufficiently large $T'$ we have
\begin{align}
 \mathbb{P}_v^{\Pi_{\textsc{DP-BAI}}}\left(\hat{I}_T \ne i^*(v) \right) &\le P_1 + P_2 \nonumber \\
& \le  (12M+4M_1\lambda) \exp \left(- \frac{T'}{64M H} \right) \nonumber \\
&  \le  \exp \left(- \frac{T'}{65M H} \right).
\end{align}
\end{proof}

\section{Proof of Theorem~\ref{thm:lower_bound}} \label{app:prf_lwr_bd}
Let $\mathcal{P}'$ be the set of problem instances that meet the following properties: (a) the best arm is unique; (b) the feature vectors $\{\mathbf{a}_i\}_{i=1}^K\subset\mathbb{R}^d$ are mutually orthogonal; (c) $K=d>3$ and (d) $ \varepsilon <1$. Hence, we have $\mathcal{P}' \subseteq \mathcal{P}$, and it suffices to consider the  instances in $\mathcal{P}'$ to prove Theorem~\ref{thm:lower_bound}. 

Note that $\log(d)>1$, $H_{\rm BAI}> 1$ and $H_{\rm pri}> 1$ when $v\in \mathcal{P'}$. That is, for any $v \in \mathcal{P'}, T>0, c>0,$ and $\overline{\beta}_i,\underline{\beta}_i \in [0,1]$ with $\overline{\beta}_i \ge \underline{\beta}_i$ for $i\in \{1,2,3\}$, we have
$$
\exp\bigg(-\frac{T}{c(\log  d)^{\underline{\beta}_1} \left((H_{\rm BAI})^{\underline{\beta}_2}+(H_{\rm pri})^{\underline{\beta}_3} \right)}  \bigg) \le \exp\bigg(-\frac{T}{c(\log  d)^{\overline{\beta}_1} \left((H_{\rm BAI})^{\overline{\beta}_2}+(H_{\rm pri})^{\overline{\beta}_3} \right)}  \bigg).
$$

Hence, besides we consider the instances in $\mathcal{P'}$, it still suffices to show that~\eqref{eq:lower-bound-on-error-probability} holds in the cases of (\romannumeral1) $\beta_1\in[0,1)$, $\beta_2=\beta_3=1$, (\romannumeral2)  $\beta_2\in[0,1)$, $\beta_1=\beta_3=1$, and (\romannumeral3)  $\beta_3\in[0,1)$, $\beta_1=\beta_2=1$. In the following, we will show that cases (\romannumeral1) and (\romannumeral2) can be satisfied by following the argument of~\cite{carpentier2016tight}, while case (\romannumeral3) is satisfied by the formula of {\em change-of-measure} introduced in Subsection~\ref{subsec:lowerbound}.

First, we present the proof of Lemma~\ref{lem:change_of_pi} in the following.
\begin{proof}[Proof of Lemma \ref{lem:change_of_pi}]
Fix a problem instance $v\in \mathcal{P}$, policy $\pi$, $n \in \mathbb{N}$, and $\iota \in [K]$, and event $E = \{\hat{I}_T \in \mathcal{A}\} \cap \{N_{\iota,T} < n\}$ for arbitrary $\mathcal{A} \subseteq [K]$.

By the definition of early stopping policy, we have
\begin{equation}
    \mathbb{P}_v^{\pi} (\{N_{\iota,T} < n\}) = \mathbb{P}_v^{{\rm ES}(\pi, n, \iota)} (\{N_{\iota,T} < n\})
\end{equation}
and 
\begin{equation}
    \mathbb{P}_v^{\pi} ( \{\hat{I}_T \in \mathcal{A}\} \given[\big] \{N_{\iota,T} < n\}) = \mathbb{P}_v^{{\rm ES}(\pi, n, \iota)} (\{\hat{I}_T \in \mathcal{A}\} \given[\big] \{N_{\iota,T} < n\}).
\end{equation}
By the chain rule, we have 
\begin{equation}
     \mathbb{P}_v^{\pi} (E)= \mathbb{P}_v^{\pi} (\{N_{\iota,T} < n\}) \mathbb{P}_v^{\pi} ( \{\hat{I}_T \in \mathcal{A}\} \given[\big] \{N_{\iota,T} < n\})
\end{equation}
and 
\begin{equation}
    \mathbb{P}_v^{{\rm ES}(\pi, n, \iota)} (E)= \mathbb{P}_v^{{\rm ES}(\pi, n, \iota)} (\{N_{\iota,T} < n\})\mathbb{P}_v^{{\rm ES}(\pi, n, \iota)} (\{\hat{I}_T \in \mathcal{A}\} \given[\big] \{N_{\iota,T} < n\}).
\end{equation}
That is, 
\begin{equation}
    \mathbb{P}_v^{\pi} (E)=\mathbb{P}_v^{{\rm ES}(\pi, n, \iota)} (E).
\end{equation}
\end{proof}
\begin{lemma}[Adapted from~\cite{carpentier2016tight}]
\label{lemma:carpentier}
Fix $\tilde{K}>0$, $\tilde{H}>0$, and a non-private consistent policy $\pi$. For all sufficiently large $T$, there exists an instance $v\in \mathcal{P'}$ with $K\ge \tilde{K}$, $H_{\rm BAI}> \tilde{H}$ such that,  
\begin{equation}
\label{eq:carpentier0}
\mathbb{P}_v^{\pi} \big(\hat{I}_T \ne i^*(v)\big) > \exp\bigg(-\frac{401T}{\log ( K )H_{\rm BAI}(v)} \bigg).
\end{equation} 
\end{lemma}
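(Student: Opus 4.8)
The plan is to prove this as a \emph{non-private} fixed-budget BAI lower bound by adapting the construction of \citet{carpentier2016tight}, exploiting the structure of $\mathcal{P}'$ to reduce the linear problem to a standard $K$-armed bandit. Since every $v \in \mathcal{P}'$ has $K=d$ with mutually orthogonal feature vectors $\{\mathbf{a}_i\}$, the map $\boldsymbol{\theta}^* \mapsto (\mu_i)_{i\in[K]} = (\mathbf{a}_i^\top\boldsymbol{\theta}^*)_i$ is a bijection, so the mean of each arm can be set independently and the instance is, for the purposes of sampling and recommendation, an ordinary $K$-armed bandit. Because $\pi$ is non-private and the target bound in \eqref{eq:carpentier0} does not involve $\varepsilon$, the privacy parameter plays no role here; I would simply fix any $\varepsilon<1$ so that the resulting instance lies in $\mathcal{P}'$.

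First I would construct the hard family. Following \citet{carpentier2016tight}, take a base instance with a unique best arm and the remaining sub-optimality gaps \emph{geometrically spaced}, choosing $[0,1]$-supported reward laws (e.g.\ Bernoulli with means near $1/2$) so that the per-pull KL divergence between two arm laws whose means differ by $\Theta(\Delta)$ is at most a universal constant times $\Delta^2$. The geometric spacing is what makes $H_{\rm BAI}=\max_i i/\Delta_{(i)}^2$ the governing quantity, and it ensures $H_1 \coloneqq \sum_{i} \Delta_{(i)}^{-2} = \Theta(\log(K)\, H_{\rm BAI})$. For each sub-optimal arm $a$, let $v^{(a)}$ be the alternative instance in which $a$ is raised to become the unique best arm, leaving all other arm laws unchanged.

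Next comes the change-of-measure step. For the event $\{\hat{I}_T\neq a\}$ I would relate $\mathbb{P}^{\pi}_{v^{(a)}}$ to the base measure via the likelihood ratio / Bretagnolle--Huber inequality, using the divergence-decomposition identity that the KL divergence between the two process measures equals $\mathbb{E}^{\pi}_{\mathrm{base}}[N_{a,T}]$ times the per-pull KL, i.e.\ $O(\mathbb{E}^{\pi}_{\mathrm{base}}[N_{a,T}]\,\Delta_a^2)$. Consistency of $\pi$ guarantees that under the base instance the event $\{\hat{I}_T \neq a\}$ has probability tending to $1$, which is what converts the likelihood-ratio bound into an exponential lower bound on $\mathbb{P}^{\pi}_{v^{(a)}}(\hat{I}_T\neq a)$. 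The budget identity $\sum_a \mathbb{E}^{\pi}_{\mathrm{base}}[N_{a,T}]=T$ combined with a \emph{multi-scale} pigeonhole over the geometric levels of the gaps then exhibits, for some scale-$i$ arm $a$ with $\Delta_a \le \Delta_{(i)}$, a controlled number of pulls, and optimizing the resulting exponent over the scales yields the rate $T/(\log(K)\,H_{\rm BAI})$.

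The hard part will be the multi-scale accounting that produces the extra $\log K$ factor in the denominator together with an explicit leading constant below $401$: a naive single-arm pigeonhole ($\min_a \mathbb{E}^{\pi}_{\mathrm{base}}[N_{a,T}]\le T/i$ over the $i$ hardest arms) only gives $\exp(-O(T/H_{\rm BAI}))$, which is too weak once $\log K$ is large, so the budget must be shown to be effectively divided across the $\Theta(\log K)$ gap-scales, exactly as in \citet{carpentier2016tight}. With their $(400+o(1))$ constant in hand, I would absorb the $o(1)$ for $T$ large to obtain the strict bound $<401$, using $H_{\rm BAI}\le H_1$ to pass between the two hardness forms; a secondary point is to verify that the bounded-support reward laws do not inflate the KL constant beyond the Gaussian baseline. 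Finally I would choose $K\ge \tilde{K}$ and the geometric gaps small enough that $H_{\rm BAI}>\tilde{H}$, so that the exhibited instance (one of the $v^{(a)}$) lies in $\mathcal{P}'$ and meets the size and hardness requirements.
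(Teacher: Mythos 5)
Your proposal is correct and follows essentially the same route as the paper, which in fact gives no proof of this lemma beyond the citation: it imports the result wholesale from \citet{carpentier2016tight}, relying exactly on the reduction you describe (orthogonal features with $K=d$ make the instance an ordinary $K$-armed bandit, and $\varepsilon$ is irrelevant for a non-private bound). Your sketch — geometrically spaced gaps, change of measure via divergence decomposition, and the multi-scale budget accounting that yields the $\log K$ factor with the $400+o(1)$ constant absorbed into $401$ — is a faithful reconstruction of the cited argument, including a correct identification of where the naive single-arm pigeonhole falls short.
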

Note that~\eqref{eq:carpentier0} still holds even with DP constraint, and the privacy parameter $\varepsilon$  will not affect $H_{\rm BAI}$. Hence, we can have the following corollary.

\begin{corollary}
\label{cor:carpentier}
Fix $\tilde{K}>0$, $\tilde{H}>0$, $\tilde{\varepsilon}>0$ and a consistent policy $\pi$. For all sufficiently large $T$, there exists an instance $v\in \mathcal{P'}$ with $K\ge \tilde{K}$, $H_{\rm BAI}> \tilde{H}$ and $\varepsilon=\tilde{\varepsilon}$ such that,  
\begin{equation}
\label{eq:carpentier}
\mathbb{P}_v^{\pi} \big(\hat{I}_T \ne i^*(v)\big) > \exp\bigg(-\frac{401T}{\log ( K )H_{\rm BAI}(v)} \bigg).
\end{equation} 
\end{corollary}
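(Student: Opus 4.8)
The plan is to derive Corollary~\ref{cor:carpentier} from Lemma~\ref{lemma:carpentier} by exploiting the fact that neither the right-hand side of the bound nor the quantities $K$, $H_{\rm BAI}(v)$, and $i^*(v)$ depend on the privacy parameter $\varepsilon$ appearing as the last coordinate of the instance tuple $v = ((\mathbf{a}_i)_{i\in[K]},(\nu_i)_{i\in[K]},\boldsymbol{\theta}^*,\varepsilon)$. First I would observe that the measure $\mathbb{P}_v^\pi$ governing the interaction between a fixed policy $\pi$ and the bandit instance $v$ is determined solely by the reward distributions $(\nu_i)_{i\in[K]}$, the feature vectors $(\mathbf{a}_i)_{i\in[K]}$, and $\boldsymbol{\theta}^*$, together with $\pi$. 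The parameter $\varepsilon$ merely records the level of the DP constraint a policy is required to meet; it plays no role in the dynamics of arm pulls and rewards. Consequently $\mathbb{P}_v^\pi(\hat{I}_T \ne i^*(v))$, $i^*(v)$, $K$, and $H_{\rm BAI}(v)$ are all invariant under a change of the last coordinate of $v$.

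Next I would argue that the lower bound of Lemma~\ref{lemma:carpentier} persists when $\pi$ is a consistent policy that \emph{additionally} satisfies a DP constraint. The Carpentier--Locatelli construction underlying Lemma~\ref{lemma:carpentier} is a purely information-theoretic, change-of-measure argument over the $T$-round history that uses only the consistency of $\pi$ and the sub-optimality gaps of the constructed instances; it nowhere exploits the \emph{absence} of a privacy constraint. Equivalently, the class of $\varepsilon$-DP consistent policies is a subset of the class of all consistent policies, so the minimax error probability over the hard family can only increase under the DP restriction. Hence the hard instance guaranteed by Lemma~\ref{lemma:carpentier} for the given consistent policy $\pi$ still satisfies~\eqref{eq:carpentier0}.

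To finish, I would fix $\tilde{K}$, $\tilde{H}$, $\tilde{\varepsilon}$ and a consistent policy $\pi$, and apply the preceding paragraph to obtain a hard bandit instance $v_0 = ((\mathbf{a}_i),(\nu_i),\boldsymbol{\theta}^*,\varepsilon_0) \in \mathcal{P}'$ with $K \ge \tilde{K}$, $H_{\rm BAI}(v_0) > \tilde{H}$, and $\mathbb{P}_{v_0}^\pi(\hat{I}_T \ne i^*(v_0)) > \exp(-401T/(\log(K)\,H_{\rm BAI}(v_0)))$. I would then define $v$ by replacing the last coordinate $\varepsilon_0$ with $\tilde{\varepsilon}$, i.e.\ $v = ((\mathbf{a}_i),(\nu_i),\boldsymbol{\theta}^*,\tilde{\varepsilon})$, taking $\tilde{\varepsilon}<1$ so that $v\in\mathcal{P}'$. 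By the invariance noted in the first paragraph, $i^*(v)=i^*(v_0)$, $H_{\rm BAI}(v)=H_{\rm BAI}(v_0)$, and $\mathbb{P}_v^\pi(\hat{I}_T\ne i^*(v)) = \mathbb{P}_{v_0}^\pi(\hat{I}_T\ne i^*(v_0))$, so~\eqref{eq:carpentier} holds with $\varepsilon=\tilde{\varepsilon}$, as required.

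The hard part will be the second step: justifying that the non-private lower bound of Lemma~\ref{lemma:carpentier} genuinely transfers to policies subject to the $\varepsilon$-DP constraint. The clean way to see this is the subset/minimax remark---restricting to DP policies cannot lower the worst-case error probability---so the bound is \emph{inherited} rather than re-derived. The remaining steps are essentially bookkeeping: confirming that $\varepsilon$ enters $v$ only as an inert label whose reassignment preserves $K$, $H_{\rm BAI}$, $i^*$, and the error probability, and that choosing $\tilde{\varepsilon}<1$ keeps $v$ inside $\mathcal{P}'$.
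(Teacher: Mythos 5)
Your proposal is correct and matches the paper's reasoning: the paper justifies this corollary with exactly the observation you spell out, namely that the Carpentier--Locatelli bound of Lemma~\ref{lemma:carpentier} uses only consistency (so it is inherited by the smaller class of DP-constrained consistent policies) and that the privacy parameter $\varepsilon$ is an inert coordinate of $v$ that affects neither $\mathbb{P}_v^\pi$ nor $H_{\rm BAI}(v)$, so it can be relabelled to $\tilde{\varepsilon}$. The only minor quibble is your remark about ``taking $\tilde{\varepsilon}<1$'': $\tilde{\varepsilon}$ is fixed in the statement rather than chosen, and the constraint $\varepsilon<1$ in the definition of $\mathcal{P}'$ is a wrinkle the paper itself glosses over, not a gap in your argument.
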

With the above Corollary, we are now ready to discuss Case (\romannumeral1) in Lemma~\ref{lemma:carpentier1}.
\begin{lemma}
\label{lemma:carpentier1}
Fix any  $\beta\in [0,1)$, a consistent policy $\pi$, and a  constant $c>0$. For all sufficiently large $T$, there exists an instance $v\in \mathcal{P'}$ such that,  
\begin{equation}
\mathbb{P}_v^{\pi} \big(\hat{I}_T \ne i^*(v)\big) > \exp\bigg(-\frac{T}{c(\log  K)^{\beta}(H_{\rm BAI}(v)+H_{\rm pri}(v))} \bigg).
\label{eq:lower-bound-on-error-probability_case1}
\end{equation} 
\end{lemma}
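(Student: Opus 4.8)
The plan is to deduce \eqref{eq:lower-bound-on-error-probability_case1} directly from the non-private Carpentier-type bound of Corollary~\ref{cor:carpentier} by comparing exponents, exploiting that $\beta<1$ makes the factor $(\log K)^{1-\beta}$ diverge as $K\to\infty$; this divergence is exactly the slack that will absorb all constants. First I would fix a constant privacy level $\tilde\varepsilon\in(0,1)$, so that the eventual instance lies in $\mathcal{P}'$, and record the elementary relation between the two hardness terms. Since all rewards are supported in $[0,1]$, every sub-optimality gap obeys $\Delta_{(i)}\le 1$, so $i/\Delta_{(i)}\le i/\Delta_{(i)}^2$ holds termwise; taking maxima in \eqref{eq:hardness-term} yields $\varepsilon\,H_{\rm pri}\le H_{\rm BAI}$, hence $H_{\rm pri}/H_{\rm BAI}\le 1/\tilde\varepsilon$ on any instance with $\varepsilon=\tilde\varepsilon$.

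Next I would pick the threshold $\tilde K$ large enough that $(\log \tilde K)^{1-\beta}\ge 401\,c\,(1+1/\tilde\varepsilon)$, which is possible because $1-\beta>0$, and take $\tilde H=1$. Applying Corollary~\ref{cor:carpentier} with $(\tilde K,\tilde H,\tilde\varepsilon)$ then produces, for all sufficiently large $T$, an instance $v\in\mathcal{P}'$ with $K\ge\tilde K$, $\varepsilon=\tilde\varepsilon$, and
\[
\mathbb{P}_v^{\pi}\big(\hat{I}_T\ne i^*(v)\big) > \exp\!\Big(-\frac{401\,T}{\log(K)\,H_{\rm BAI}(v)}\Big).
\]
It then remains to verify that this right-hand side dominates the target, i.e. that $\frac{401}{\log(K)\,H_{\rm BAI}}\le \frac{1}{c(\log K)^{\beta}(H_{\rm BAI}+H_{\rm pri})}$. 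Clearing the (positive) denominators and dividing through by $H_{\rm BAI}(\log K)^{\beta}$, this is equivalent to $401\,c\,\big(1+H_{\rm pri}/H_{\rm BAI}\big)\le (\log K)^{1-\beta}$, which follows from the ratio bound of the first paragraph together with $K\ge\tilde K$ and the monotonicity of $(\log K)^{1-\beta}$ in $K$. Chaining the strict Carpentier inequality with this non-strict comparison gives the strict bound \eqref{eq:lower-bound-on-error-probability_case1}.

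The argument is short precisely because Corollary~\ref{cor:carpentier} already carries the hard information-theoretic content, namely the existence of a hard instance with a prescribed Carpentier exponent. The one point requiring care is \emph{uniformity}: the instance $v$ is not chosen by hand but emerges from the corollary, so the exponent comparison must hold for every admissible $K\ge\tilde K$ and every gap profile, which is guaranteed by the worst-case ratio bound $H_{\rm pri}/H_{\rm BAI}\le 1/\tilde\varepsilon$. The conceptual crux, and the main obstacle, is recognising that the strictness $\beta<1$ is essential: at $\beta=1$ the factor $(\log K)^{1-\beta}$ collapses to a constant and the multiplicative constants can no longer be absorbed. This is why the boundary case is excluded here and is consistent with the tightness claim stated after Theorem~\ref{thm:lower_bound}.
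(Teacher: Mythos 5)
Your proof is correct, and while it follows the same skeleton as the paper's argument (invoke Corollary~\ref{cor:carpentier}, compare exponents, and let the divergence of $(\log K)^{1-\beta}$ absorb the constants), it handles the $H_{\rm pri}$ term differently in a way that is worth noting. The paper controls $H_{\rm pri}$ by sending the privacy parameter $\tilde\varepsilon\to+\infty$ so that $H_{\rm pri}(v)<H_{\rm BAI}(v)$ and hence $H_{\rm BAI}+H_{\rm pri}\le 2H_{\rm BAI}$, at which point it suffices to choose $\tilde K$ with $\log(\tilde K)/401>2c(\log\tilde K)^\beta$. You instead keep $\tilde\varepsilon\in(0,1)$ fixed and observe that, because all gaps satisfy $\Delta_{(i)}\le 1$, the termwise inequality $i/\Delta_{(i)}\le i/\Delta_{(i)}^2$ gives the a priori bound $\varepsilon H_{\rm pri}\le H_{\rm BAI}$ on \emph{every} instance, so that $H_{\rm BAI}+H_{\rm pri}\le(1+1/\tilde\varepsilon)H_{\rm BAI}$; the constant $401c(1+1/\tilde\varepsilon)$ is then absorbed by taking $\tilde K$ large enough that $(\log\tilde K)^{1-\beta}$ exceeds it. Your version buys two things: the ratio bound is uniform over whatever instance the corollary returns (so no instance-dependent threshold $\varepsilon'(\tilde K,\tilde H,T)$ is needed), and, more importantly, the constructed instance genuinely satisfies $\varepsilon<1$, which is condition (d) in the definition of $\mathcal{P}'$ — a point on which the paper's proof, which requires $\tilde\varepsilon$ large, is in some tension with its own definition of the instance class. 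Your closing remarks on why $\beta<1$ is essential are also accurate and consistent with the paper's tightness discussion.
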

\begin{proof}
Fix $c>0$, $\beta\in[0,1)$. Let $\tilde{v}{(\tilde{K},\tilde{H},\tilde{\varepsilon},T)}$ to be the  instance in Corollary~\ref{lemma:carpentier} that meets~\eqref{eq:carpentier}. Then, when $\tilde{K} > \exp\left(401^{\frac{1}{1-\beta}}(2c)^{\frac{\beta}{1-\beta}}\right)$, we have
\begin{equation}
\label{eq:large_K_low}
\frac{\log(\tilde{K})}{401} > 2c(\log(\tilde{K}))^\beta.
\end{equation}
In addition, by the definition of $H_{\rm pri}$ we have 
\begin{equation}
    \lim_{\tilde{\varepsilon} \rightarrow +\infty} H_{\rm pri}(\tilde{v}{(\tilde{K},\tilde{H},\tilde{\varepsilon},T)}) = 0,
\end{equation}
which implies that there exists $\varepsilon'{(\tilde{K},\tilde{H},T)} $ such that when $\tilde{\varepsilon} >\varepsilon' {(\tilde{K},\tilde{H},T)}$, 
\begin{equation}
\label{eq:large_varepsilon}
    H_{\rm pri}(\tilde{v}{(\tilde{K},\tilde{H},\tilde{\varepsilon},T)}) < H_{\rm BAI}(\tilde{v}{(\tilde{K},\tilde{H},\tilde{\varepsilon},T)}),
\end{equation}
since $H_{\rm BAI}$ would not be affected by the privacy parameter.

Finally, combining~\eqref{eq:carpentier}, ~\eqref{eq:large_K_low} and~\eqref{eq:large_varepsilon}, we have for all sufficiently large $T$
\begin{equation}
\mathbb{P}_v^{\pi} \big(\hat{I}_T \ne i^*(v)\big) > \exp\bigg(-\frac{T}{c(\log  K)^{\beta}(H_{\rm BAI}(v)+H_{\rm pri}(v))} \bigg),
\end{equation} 
where $v=\tilde{v}{(\tilde{K},\tilde{H},\tilde{\varepsilon},T)}$, $\tilde{\varepsilon} >\varepsilon' {(\tilde{K},\tilde{H},T)}$, and $\tilde{K} > \exp\left(401^{\frac{1}{1-\beta}}(2c)^{\frac{\beta}{1-\beta}}\right)$.
\end{proof}

Similarly, we discuss Case (\romannumeral2) in Lemma~\ref{lemma:carpentier2}.
\begin{lemma}
\label{lemma:carpentier2}
Fix any  $\beta\in [0,1)$, a consistent policy $\pi$, and a  constant $c>0$. For all sufficiently large $T$, there exists an instance $v\in \mathcal{P'}$ such that,  
\begin{equation}
\mathbb{P}_v^{\pi} \big(\hat{I}_T \ne i^*(v)\big) > \exp\bigg(-\frac{T}{c(\log  K)(H_{\rm BAI}(v)^{\beta}+H_{\rm pri}(v))} \bigg).
\label{eq:lower-bound-on-error-probability_case1-proof}
\end{equation} 
\end{lemma}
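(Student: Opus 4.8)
The plan is to mirror the argument of Lemma~\ref{lemma:carpentier1} almost verbatim, but now exploiting the freedom to inflate $H_{\rm BAI}$ (rather than $K$) so that the reduced exponent $\beta<1$ lands on $H_{\rm BAI}$. I would begin from Corollary~\ref{cor:carpentier}, which supplies, for any prescribed thresholds $\tilde K,\tilde H,\tilde\varepsilon$ and all sufficiently large $T$, an instance $v=\tilde v(\tilde K,\tilde H,\tilde\varepsilon,T)\in\mathcal{P}'$ with $K\ge\tilde K$, $H_{\rm BAI}(v)>\tilde H$, $\varepsilon=\tilde\varepsilon$, and
\begin{equation}
\mathbb{P}_v^{\pi}\big(\hat I_T\ne i^*(v)\big)>\exp\bigg(-\frac{401\,T}{\log(K)\,H_{\rm BAI}(v)}\bigg).
\label{eq:prop-start}
\end{equation}
The goal is then to dominate the right-hand side of \eqref{eq:prop-start} by the target bound $\exp\big(-T/(c\log(K)(H_{\rm BAI}^{\beta}+H_{\rm pri}))\big)$.

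After cancelling the common positive factor $T/\log(K)$, this domination reduces to the purely deterministic inequality $401\,c\,(H_{\rm BAI}^{\beta}+H_{\rm pri})\le H_{\rm BAI}$, which I would secure in two steps. First, since $\lim_{\varepsilon\to\infty}H_{\rm pri}(v)=0$ by the definition in \eqref{eq:hardness-term}, whereas $H_{\rm BAI}$ is unaffected by $\varepsilon$, there exists a threshold $\varepsilon'(\tilde K,\tilde H,T)$ such that choosing $\tilde\varepsilon>\varepsilon'$ forces $H_{\rm pri}<H_{\rm BAI}^{\beta}$. With this in hand, $401\,c\,(H_{\rm BAI}^{\beta}+H_{\rm pri})<802\,c\,H_{\rm BAI}^{\beta}$, so it suffices to guarantee $802\,c\,H_{\rm BAI}^{\beta}\le H_{\rm BAI}$, i.e.\ $H_{\rm BAI}^{1-\beta}\ge 802\,c$. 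Second, because $\beta<1$ gives $1-\beta>0$ and $x\mapsto x^{1-\beta}$ is increasing, this holds as soon as $H_{\rm BAI}>\tilde H$ with $\tilde H$ chosen so that $\tilde H^{1-\beta}\ge 802\,c$, i.e.\ $\tilde H\ge(802\,c)^{1/(1-\beta)}$.

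Combining the three ingredients then yields
\begin{equation}
\mathbb{P}_v^{\pi}\big(\hat I_T\ne i^*(v)\big)>\exp\bigg(-\frac{401\,T}{\log(K)\,H_{\rm BAI}}\bigg)\ge\exp\bigg(-\frac{T}{c\,\log(K)\,(H_{\rm BAI}^{\beta}+H_{\rm pri})}\bigg),
\label{eq:prop-end}
\end{equation}
for $v=\tilde v(\tilde K,\tilde H,\tilde\varepsilon,T)$ with any $\tilde K$, any $\tilde H\ge(802\,c)^{1/(1-\beta)}$, and any $\tilde\varepsilon>\varepsilon'(\tilde K,\tilde H,T)$, which is the desired \eqref{eq:lower-bound-on-error-probability_case1-proof}. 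I expect the only genuine obstacle here to be conceptual rather than computational: one must recognise that, unlike Case~(i) where the required slack was harvested from the largeness of $\log K$, in Case~(ii) the slack must come from the strictly positive power $1-\beta$ on $H_{\rm BAI}$, so the relevant knob to turn is $\tilde H$ rather than $\tilde K$. The crucial structural fact enabling both adjustments simultaneously is that $H_{\rm pri}$ is decoupled from $H_{\rm BAI}$ — the former vanishes as $\varepsilon\to\infty$ while the latter is $\varepsilon$-independent — so one can first drive $H_{\rm pri}$ below $H_{\rm BAI}^{\beta}$ and then independently enlarge $H_{\rm BAI}$. As in Case~(i), no change-of-measure machinery is needed; the auxiliary Lemmas~\ref{lem:change_of_pi} and~\ref{lemma:lemmakarwa2017} are reserved for the privacy-driven Case~(iii).
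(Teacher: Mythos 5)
Your proposal is correct and follows essentially the same route as the paper's proof: both invoke Corollary~\ref{cor:carpentier}, drive $H_{\rm pri}$ below $H_{\rm BAI}^{\beta}$ by taking $\tilde\varepsilon$ large (using that $H_{\rm pri}\to 0$ as $\varepsilon\to\infty$ while $H_{\rm BAI}$ is $\varepsilon$-independent), and then harvest the needed slack from $\tilde H>(802c)^{1/(1-\beta)}$ so that $802\,c\,H_{\rm BAI}^{\beta}\le H_{\rm BAI}$. The paper phrases the second step as $\tilde H/401>2c\tilde H^{\beta}$, which is the same inequality you derive.
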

\begin{proof}
Again, fix $c>0$, $\beta\in[0,1)$. Recall that $\tilde{v}{(\tilde{K},\tilde{H},\tilde{\varepsilon},T)}$ is the instance in Corollary~\ref{lemma:carpentier} that meets~\eqref{eq:carpentier}. Similarly, when $\tilde{H} > (802c)^{\frac{1}{1-\beta}}$, we have
\begin{equation}
\label{eq:large_K_low2}
\frac{\tilde{H}}{401} > 2c(\tilde{H})^\beta.
\end{equation}
In addition, we have 
\begin{equation}
    \lim_{\tilde{\varepsilon} \rightarrow +\infty} H_{\rm pri}(\tilde{v}{(\tilde{K},\tilde{H},\tilde{\varepsilon},T)}) = 0,
\end{equation}
which implies that there exists $\varepsilon'{(\tilde{K},\tilde{H},T,\beta)} $ such that when $\tilde{\varepsilon} >\varepsilon'{(\tilde{K},\tilde{H},T,\beta)}$, 
\begin{equation}
\label{eq:large_varepsilon2}
    H_{\rm pri}(\tilde{v}{(\tilde{K},\tilde{H},\tilde{\varepsilon},T)}) < H_{\rm BAI}(\tilde{v}{(\tilde{K},\tilde{H},\tilde{\varepsilon},T)})^\beta.
\end{equation}
Finally, combining~\ref{eq:carpentier}, ~\eqref{eq:large_K_low2} and~\eqref{eq:large_varepsilon2}, we have for all sufficiently large $T$
\begin{equation}
\mathbb{P}_v^{\pi} \big(\hat{I}_T \ne i^*(v)\big) > \exp\bigg(-\frac{T}{c(\log  K)(H_{\rm BAI}(v)^{\beta}+H_{\rm pri}(v))} \bigg),
\end{equation} 
where $v=\tilde{v}{(\tilde{K},\tilde{H},\tilde{\varepsilon},T)}$, $\tilde{\varepsilon} >\varepsilon'{(\tilde{K},\tilde{H},T,\beta)} $, and $\tilde{H} > (802c)^{\frac{1}{1-\beta}}$.
\end{proof}

Furthermore, before we discuss (\romannumeral3), we introduce the following lemma.

\begin{lemma}
\label{lemma:instance_with_privacy}
Fix $\tilde{H}>0$, $\beta\in[0,1)$,   a consistent policy $\pi$,  and $\tilde{K}>3$. For all sufficiently large $T$, there exists an instance $v\in \mathcal{P'}$ with $K = \tilde{K}$, $H_{\rm pri}\ge \tilde{H}$ and $H_{\rm pri}\ge (H_{\rm BAI})^{\frac{1}{\beta}}$ such that,  
\begin{equation}
\label{eq:instance_with_privacy}
\mathbb{P}_v^{\pi} \big(\hat{I}_T \ne i^*(v)\big) > \exp\bigg(-\frac{97T}{\log ( K )H_{\rm pri}(v)} \bigg).
\end{equation} 
\end{lemma}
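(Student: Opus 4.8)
The plan is to transplant the construction of Carpentier and Locatelli underlying Lemma~\ref{lemma:carpentier} / Corollary~\ref{cor:carpentier} into the private setting, replacing the classical KL-based change of measure by the total-variation-based one supplied by Lemmas~\ref{lem:change_of_pi} and~\ref{lemma:lemmakarwa2017}. Since we work inside $\mathcal{P}'$ (orthogonal features, $K=d$), the instance is effectively a standard $K$-armed bandit and the hardness terms reduce to their $K$-armed forms. First I would fix a large parameter $\Lambda>2K$ and build a base instance $v$ with unique best arm $1$ of mean $\mu_1=1/2$ and suboptimality gaps $\Delta_{(k)}=k/\Lambda$ for $k=2,\dots,K$, realised by $[0,1]$-supported rewards. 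Then $H_{\rm pri}(v)=\frac{1}{\varepsilon}\max_k k/\Delta_{(k)}=\Lambda/\varepsilon$ and $H_{\rm BAI}(v)=\Theta(\Lambda^2)$. For each suboptimal arm $i$ I would define a confusing instance $v^{(i)}\in\mathcal{P}'$ obtained by altering only the distribution of arm $i$ so that its mean becomes $\mu_1+\Delta_{(i)}$, making $i$ the unique best arm; the two distributions can be placed on $[0,1]$ so that ${\rm TV}(\nu_i,\nu_i^{(i)})=\Theta(\Delta_{(i)})$ by shifting the requisite mass between the endpoints $0$ and $1$. A short computation shows $H_{\rm pri}(v^{(i)})\le H_{\rm pri}(v)=\Lambda/\varepsilon$ for every $i$.

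The core estimate is a private change of measure. For a suboptimal arm $i$ I set the early-stopping threshold $n_i=2\,\mathbb{E}_v^\pi[N_{i,T}]$ and let $\mathcal{M}_i={\rm ES}(\pi,n_i,i)$, which is $\varepsilon$-DP by the corollary to Lemma~\ref{lem:change_of_pi}. On the event $G_i=\{\hat I_T=1\}\cap\{N_{i,T}<n_i\}$, Lemma~\ref{lem:change_of_pi} lets me replace $\pi$ by $\mathcal{M}_i$ under both $v$ and $v^{(i)}$, and Lemma~\ref{lemma:lemmakarwa2017} then yields $\mathbb{P}_v^{\pi}(G_i)\le e^{\varepsilon'_i}\,\mathbb{P}_{v^{(i)}}^{\pi}(G_i)$ with $\varepsilon'_i=6\varepsilon n_i\,{\rm TV}(\nu_i,\nu_i^{(i)})=\Theta\!\big(\varepsilon\,\mathbb{E}_v^\pi[N_{i,T}]\Delta_{(i)}\big)$. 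Using $\mathbb{P}_{v^{(i)}}^{\pi}(G_i)\le\mathbb{P}_{v^{(i)}}^{\pi}(\hat I_T\neq i)$ and Markov's inequality $\mathbb{P}_v^{\pi}(N_{i,T}\ge n_i)\le\mathbb{E}_v^\pi[N_{i,T}]/n_i=\tfrac12$, I get
\[
\mathbb{P}_v^\pi(\hat I_T=1)\le e^{\varepsilon'_i}\,\mathbb{P}_{v^{(i)}}^\pi(\hat I_T\neq i)+\tfrac12 ,
\]
whence, writing $p^\ast=\max\{\mathbb{P}_v^\pi(\hat I_T\neq 1),\,\mathbb{P}_{v^{(i)}}^\pi(\hat I_T\neq i)\}$, one obtains $\tfrac12\le(1+e^{\varepsilon'_i})\,p^\ast$ and hence $p^\ast\ge\tfrac14 e^{-\varepsilon'_i}$. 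Note this step needs no consistency; it holds for arbitrary $\pi$.

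Next I would choose $i=i^\dagger$ minimising $\mathbb{E}_v^\pi[N_{i,T}]\Delta_{(i)}$. The budget identity $\sum_{i\ge2}\mathbb{E}_v^\pi[N_{i,T}]\le T$ combined with a harmonic-sum estimate gives $\min_i\mathbb{E}_v^\pi[N_{i,T}]\Delta_{(i)}\le T/\sum_{k\ge2}\Delta_{(k)}^{-1}=T/\big(\Lambda\sum_{k\ge2}k^{-1}\big)=\Theta\!\big(T/(\Lambda\log K)\big)$; this is precisely the allocation bottleneck that manufactures the $\log K$ factor, now driven by the private cost $\varepsilon n\Delta$ rather than $n\Delta^2$. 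Substituting, $\varepsilon'_{i^\dagger}=\Theta\!\big(T/(H_{\rm pri}(v)\log K)\big)$, so $p^\ast\ge\tfrac14\exp\!\big(-\Theta(T/(H_{\rm pri}\log K))\big)$; for all sufficiently large $T$ the prefactor $\tfrac14$ and the $\ln$-versus-$\log$ and rounding slacks are absorbed, and the constant may be taken to be $97$. Whichever of $v$ or $v^{(i^\dagger)}$ attains $p^\ast$ serves as the witness, and since $H_{\rm pri}(v^{(i^\dagger)})\le H_{\rm pri}(v)$ the target exponent is only more favourable for it. To secure the side conditions I would fix $\Lambda$ large and then take $\varepsilon<1$ small: because $H_{\rm pri}=\Lambda/\varepsilon$ scales like $1/\varepsilon$ while $H_{\rm BAI}=\Theta(\Lambda^2)$ is $\varepsilon$-free, small $\varepsilon$ forces simultaneously $H_{\rm pri}\ge\tilde H$ and $H_{\rm pri}\ge H_{\rm BAI}^{1/\beta}$ (the latter using $1/\beta>1$).

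The hard part will be calibrating the private change of measure rather than the combinatorics, which is standard Carpentier-Locatelli. Concretely, one must pick the early-stopping threshold $n_i\asymp\mathbb{E}_v^\pi[N_{i,T}]$ so that the Markov slack is a harmless constant \emph{while} keeping the DP cost $\varepsilon'_i=6\varepsilon n_i\,{\rm TV}(\nu_i,\nu_i^{(i)})$ of order $\varepsilon\,\mathbb{E}_v^\pi[N_{i,T}]\Delta_{(i)}$ — i.e.\ linear in $\Delta_{(i)}$, which is exactly what turns $H_{\rm BAI}\sim\Delta^{-2}$ into $H_{\rm pri}\sim(\varepsilon\Delta)^{-1}$ — and then verifying that the harmonic budget argument still produces $\log K$ and that no confusing instance $v^{(i)}$ carries strictly larger hardness than the base instance. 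The remaining care is bookkeeping: checking that all means stay in $[0,1]$ under the boost $\mu_1+\Delta_{(i)}$, that ${\rm TV}(\nu_i,\nu_i^{(i)})$ is genuinely $\Theta(\Delta_{(i)})$, and that the integer threshold $n_i\ge1$ is admissible.
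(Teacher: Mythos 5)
Your proposal is correct and follows essentially the same route as the paper's proof: per-arm confusing instances differing only in the distribution of one arm, early stopping at a constant multiple of the expected pull count, the change of measure via Lemmas~\ref{lem:change_of_pi} and~\ref{lemma:lemmakarwa2017} with cost $6\varepsilon n\,{\rm TV}$, Markov's inequality, and a mediant/harmonic-sum over arms to extract the $\log K$ factor. The only deviations are cosmetic — linear rather than geometric gaps, a consistency-free two-sided bound in place of the paper's use of $\mathbb{P}_{v^{(0)}}^\pi(\hat I_T=1)>\tfrac12$, and relating the witness's $H_{\rm pri}$ to the base instance's by monotonicity instead of weighting the mediant by $H_{\rm pri}(v^{(i)})$ directly — and all of these check out.
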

\begin{proof}
Fix $\tilde{H}>0$,   a consistent policy $\pi$, and $\tilde{K}>3$.
Fix $\gamma_1 \in (0,(\frac{1}{\tilde{K}})^{\tilde{K}+2})$, and let $\gamma_i =\gamma_{i-1}\tilde{K} $ for $i=2,\ldots, \tilde{K}$. 
We construct $\tilde{K}+1$ problem instances $v^{(0)},v^{(1)},\ldots,v^{(\tilde{K})} $ $\in$ $\mathcal{P'}$ with $K=\tilde{K}$ arms as follows.
For instance $j=0,\ldots, \tilde{K}$, the reward distribution of arm $i\in[\tilde{K}]$ is
\begin{equation}
    \nu_i^{(j)} =
\begin{cases}
{\rm Ber}(\frac{1}{2}- \gamma_i ) \;\;\text{ if }\;\; i\ne j\\
{\rm Ber}(\frac{1}{2}+ \gamma_i ) \;\;\text{ if }\;\; i = j\\
\end{cases},
\label{eq:def_of_instance}
\end{equation}
where ${\rm Ber}$ is denoted to be Bernoulli distribution.  
In addition, the privacy parameters of these $\tilde{K}+1$ instances are set to be the same, and it is denoted by $\tilde{\varepsilon}$. By the fact that $H_{\rm pri}(v^{(j)}) \rightarrow +\infty$ as $\tilde{\varepsilon} \rightarrow 0$ for all $j = 0,\ldots,\tilde{K}$. Then, we  choose $\tilde{\varepsilon}$ to be any value such that for all $j=0,\dots,\tilde{K}$
\begin{equation}
\label{eq:hpri_large_suffiently}
 H_{\rm pri}(v^{(j)})\ge \tilde{H} \quad \text{ and } H_{\rm pri}(v^{(j)})\ge (H_{\rm BAI} (v^{(j)}))^{\frac{1}{\beta}}.
\end{equation}

By the fact that $\pi$ is a consistent policy, when $T$ 
is sufficiently large, we have 
 \begin{equation}
\mathbb{P}_{v^{(0)}}^\pi (\hat{I}_T = 1) > \frac{1}{2}.
\label{eq:atleat1over2}
 \end{equation}
Furthermore, we denote
$n_i = 4\mathbb{E}_{v^{(0)}}^\pi(N_{i,T})$ to be the expected number of pulls for arm $i\in[\tilde{K}]$ in instance $0$ with time budget $T$.
Then, we define event $E_i = \{N_{i,T} <n_i\} \cap \{\hat{I}_T = 1\}$. It then holds that
\begin{align}
    \mathbb{P}_{v^{(0)}}^\pi(E_i) &= 1- \mathbb{P}_{v^{(0)}}^\pi(\lnot E_i) \nonumber \\
    & \stackrel{(a)}{\ge} 1- \mathbb{P}_{v^{(0)}}^\pi(N_{i,T} \ge 4n_i) - \mathbb{P}_{v^{(0)}}^\pi(\hat{I}_T \ne 1) \nonumber \\
    & \stackrel{(b)}{\ge} 1- \mathbb{P}_{v^{(0)}}^\pi(N_{i,T} \ge 4n_i) - \frac{1}{2} \nonumber \\
    & \stackrel{(c)}{\ge} 1- \frac{1}{4} - \frac{1}{2} \nonumber  \\
     &  = \frac{1}{4},
\end{align}
where (a) follows from the union bound, (b) follows~\eqref{eq:atleat1over2}, and (c) is obtained from Markov's inequality.
Then, by Lemma~\ref{lem:change_of_pi}, we have
\begin{equation}
    \mathbb{P}_{v^{(0)}}^{{\rm ES}(\pi, n_i, i)}(E_i) = \mathbb{P}_{v^{(0)}}^\pi(E_i) \ge \frac{1}{4}. 
    \label{eq:atleat1over4}
\end{equation}
In addition, by Lemma~\ref{lemma:lemmakarwa2017}, we have for $i=1,\dots,\tilde{K}$
\begin{align}
    \mathbb{P}_{v^{(i)}}^{{\rm ES}(\pi, n_i, i)}(E_i) &\ge \exp\big(-6\varepsilon n_i {\rm TV}(\nu_i^{(i)},\nu_i^{(0)})\big) \mathbb{P}_{v^{(0)}}^{{\rm ES}(\pi, n_i, i)}(E_i)  \nonumber \\
    & \stackrel{(a)}{\ge} \frac{1}{4} \exp\big(-6\varepsilon n_i {\rm TV}(\nu_i^{(i)},\nu_i^{(0)})\big) \nonumber  \\
     & \stackrel{(b)}{\ge} \frac{1}{4} \exp(-12\varepsilon n_i \gamma_i),  \label{eq:transfer_prob}
\end{align}
where (a) follows~\eqref{eq:atleat1over4}, and (b) is obtained by the definition in~\eqref{eq:def_of_instance}.

In other words, we have
\begin{align}
\mathbb{P}^\pi_{v^{(i)}}(\hat{I}_T \ne i^*(v^{(i)})) & \ge \mathbb{P}^\pi_{v^{(i)}}(E_i) \nonumber \\
& \stackrel{(a)}{=} \mathbb{P}^{{\rm ES}(\pi,n_i,i)}_{v^{(i)}}(E_i) \nonumber  \\
  & \stackrel{(b)}{\ge} \frac{1}{4} \exp(-12\varepsilon n_i \gamma_i), \label{eq:beforlogk}
\end{align}
where (a) follows Lemma~\ref{lem:change_of_pi}, and (b) follows~\eqref{eq:transfer_prob}.

Observe  that for any $\{x_i\}_{i=1}^{\tilde{K}-1} \subset \mathbb{R}$ and $\{y_i\}_{i=1}^{\tilde{K}-1} \subset \mathbb{R^+}$, it holds that $\min_{i\in [\tilde{K}-1]}\frac{x_i}{y_i} \le \frac{\sum_{i\in [\tilde{K}-1]} x_i}{\sum_{i\in [\tilde{K}-1]} y_i}$. Then, by letting $x_i=n_{i+1}$ and $y_i= \frac{1}{\varepsilon \gamma_{i+1} H_{\rm pri}(v^{(i+1)})}$, we conclude that there exists $i' \in \{2,\ldots,\tilde{K}\}$ such that 
\begin{align}
n_{i'}  \varepsilon \gamma_{i'} H_{\rm pri}(v^{(i')}) & \le \Big(\sum_{i=2}^{\tilde{K}} n_i\Big) / \Big(\sum_{i=2}^{\tilde{K}} \frac{1}{\varepsilon \gamma_i H_{\rm pri}(v^{(i)})}\Big) \nonumber \\
& \stackrel{(a)}{\le} 4T / \Big(\sum_{i=2}^{\tilde{K}} \frac{1}{\varepsilon \gamma_i H_{\rm pri}(v^{(i)})}\Big)  \nonumber \\
& \stackrel{(b)}{\le}  4T/\Big(\sum_{i=2}^{\tilde{K}} \frac{1}{i} \Big)\nonumber \\
& \stackrel{(c)}{\le}  \frac{4T}{\log(\tilde{K})-\log(2)} \nonumber \\
& \stackrel{(d)}{\le}  \frac{8T}{\log(\tilde{K})}  \label{eq:obtainlogk}
\end{align}
where (a) follows from the fact that $\sum_{i=2}^{\tilde{K}} \mathbb{E}_{v^{(0)}}(N_{i,T}) \le T$, and (b) is obtained from $H_{\rm pri}(v^{(i)}) \le \frac{i}{\varepsilon \gamma_i}$, (c) follows from the fact that $\log(\tilde{K}) -\log(2) =\int_{2}^{\tilde{K}} \frac{1}{x} \,  \mathrm{d}x < \sum_{i=2}^{\tilde{K}} \frac{1}{i} $, and (d) is obtained by the assumption that  $\tilde{K} >3$.

Combining~\eqref{eq:beforlogk} and~\eqref{eq:obtainlogk}, we have
\begin{align}
\mathbb{P}^\pi_{v^{(i')}}(\hat{I}_T \ne i^*(v^{(i')})) & \ge \frac{1}{4} \exp(-12\varepsilon n_{i'} \gamma_{i'}) \nonumber \\
& = \frac{1}{4} \exp\left(-\frac{12}{H_{\rm pri}(v^{(i')})}\varepsilon n_{i'} \gamma_i H_{\rm pri}(v^{(i')}) \right) \nonumber \\
& \ge \frac{1}{4} \exp\left(-\frac{96T}{\log(\tilde{K})H_{\rm pri}(v^{(i')})}  \right).
\end{align}
That is, when $T$ is sufficiently large, we have
$$
\mathbb{P}^\pi_{v^{(i')}}(\hat{I}_T \ne i^*(v^{(i')})) \ge \exp\left(-\frac{97T}{\log(\tilde{K})H_{\rm pri}(v^{(i')})} \right).
$$
\end{proof}
With Lemma~\ref{lemma:instance_with_privacy}, we are now ready to show that Theorem~\ref{thm:lower_bound} holds in Case (\romannumeral3) in the following lemma.
\begin{lemma}
\label{lemma:beta_with_privacy}
Fix any  $\beta\in [0,1)$, a consistent policy $\pi$, and a  constant $c>0$. For all sufficiently large $T$, there exists an instance $v\in \mathcal{P'}$ such that,  
\begin{equation}
\mathbb{P}_v^{\pi} \big(\hat{I}_T \ne i^*(v)\big) > \exp\bigg(-\frac{T}{c(\log  K)(H_{\rm BAI}(v)+H_{\rm pri}(v)^\beta)} \bigg).
\label{eq:lower-bound-on-error-probability_case1-appendix}
\end{equation} 
\end{lemma}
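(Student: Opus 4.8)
The plan is to obtain Lemma~\ref{lemma:beta_with_privacy} as an essentially immediate consequence of Lemma~\ref{lemma:instance_with_privacy}, in exactly the same spirit as Lemmas~\ref{lemma:carpentier1} and~\ref{lemma:carpentier2} were deduced from Corollary~\ref{cor:carpentier}. All of the genuinely probabilistic content needed for Case~(\romannumeral3) — the family of $\tilde K+1$ Bernoulli instances in \eqref{eq:def_of_instance}, the early-stopping change-of-measure supplied by Lemmas~\ref{lem:change_of_pi} and~\ref{lemma:lemmakarwa2017}, and the harmonic-sum averaging that manufactures the $\log K$ factor — is already packaged inside Lemma~\ref{lemma:instance_with_privacy}, which delivers an instance $v\in\mathcal P'$ with error probability exceeding $\exp\!\big(-97T/((\log K)H_{\rm pri}(v))\big)$ while guaranteeing the two structural properties $H_{\rm pri}(v)\ge\tilde H$ and $H_{\rm pri}(v)\ge H_{\rm BAI}(v)^{1/\beta}$. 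Consequently, the only work remaining is a deterministic comparison of exponents, driven by a suitable choice of the free parameters $\tilde K$ and $\tilde H$.

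Concretely, I would fix $c>0$ and $\beta\in(0,1)$ (the degenerate endpoint $\beta=0$ being recovered from the monotonicity observation stated at the very start of the proof of Theorem~\ref{thm:lower_bound}, by passing to any $\beta'\in(0,1)$), and invoke Lemma~\ref{lemma:instance_with_privacy} with $\tilde K=4$ and $\tilde H=(194c)^{1/(1-\beta)}$. This produces, for all sufficiently large $T$, an instance $v\in\mathcal P'$ with $H_{\rm pri}(v)\ge(194c)^{1/(1-\beta)}$ and $H_{\rm pri}(v)\ge H_{\rm BAI}(v)^{1/\beta}$, for which $\mathbb P_v^\pi(\hat I_T\ne i^*(v))>\exp\!\big(-97T/((\log K)H_{\rm pri}(v))\big)$. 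The first property gives $H_{\rm pri}(v)^{\beta}\ge H_{\rm BAI}(v)$, hence $H_{\rm BAI}(v)+H_{\rm pri}(v)^{\beta}\le 2H_{\rm pri}(v)^{\beta}$; the second, after taking $(1-\beta)$-th powers, gives $194c\,H_{\rm pri}(v)^{\beta}\le H_{\rm pri}(v)$. Chaining these two bounds yields
\[
c(\log K)\big(H_{\rm BAI}(v)+H_{\rm pri}(v)^{\beta}\big)\le 2c(\log K)\,H_{\rm pri}(v)^{\beta}=\tfrac{1}{97}(\log K)\,\big(194c\,H_{\rm pri}(v)^{\beta}\big)\le \tfrac{1}{97}(\log K)\,H_{\rm pri}(v),
\]
so that $T/\big(c(\log K)(H_{\rm BAI}(v)+H_{\rm pri}(v)^{\beta})\big)\ge 97T/((\log K)H_{\rm pri}(v))$. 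Since $t\mapsto e^{-t}$ is decreasing, the lower bound from Lemma~\ref{lemma:instance_with_privacy} then dominates the target $\exp\!\big(-T/(c(\log K)(H_{\rm BAI}(v)+H_{\rm pri}(v)^{\beta}))\big)$, which is precisely~\eqref{eq:lower-bound-on-error-probability_case1-appendix}.

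I do not anticipate a substantive obstacle in this last lemma: unlike Lemma~\ref{lemma:instance_with_privacy}, where the real difficulty lay in transporting the failure event across the neighbouring instances via the early-stopped policy and controlling the per-instance total-variation cost $6\varepsilon n_i\,\mathrm{TV}(\nu_i^{(i)},\nu_i^{(0)})$, here the task is purely algebraic. The only points requiring care are (a) keeping the direction of the exponent inequality correct (larger denominator means a \emph{larger}, i.e.\ easier-to-exceed, right-hand side), and (b) verifying that the two guarantees of Lemma~\ref{lemma:instance_with_privacy} are exactly matched to the two quantities that must be absorbed — the property $H_{\rm pri}\ge H_{\rm BAI}^{1/\beta}$ is what lets $H_{\rm pri}^{\beta}$ swallow $H_{\rm BAI}$, and the freedom to take $H_{\rm pri}\ge\tilde H$ arbitrarily large is what lets $H_{\rm pri}$ swallow the constant $194c$. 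Together with Lemmas~\ref{lemma:carpentier1} and~\ref{lemma:carpentier2}, this completes Cases~(\romannumeral1)--(\romannumeral3) and hence Theorem~\ref{thm:lower_bound}.
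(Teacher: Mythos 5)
Your proposal is correct and follows essentially the same route as the paper: invoke Lemma~\ref{lemma:instance_with_privacy} with $\tilde H>(194c)^{1/(1-\beta)}$, use $H_{\rm pri}\ge H_{\rm BAI}^{1/\beta}$ to absorb $H_{\rm BAI}$ into $H_{\rm pri}^{\beta}$ and $H_{\rm pri}^{1-\beta}\ge 194c$ to absorb the constant, then compare exponents; your write-up merely makes explicit the algebra the paper compresses into ``combining \eqref{eq:instance_with_privacy}, \eqref{eq:hpri_large_suffiently} and \eqref{eq:h_low_97}.'' (The only nit is that you swap the labels ``first'' and ``second'' when citing the two structural properties, which does not affect the argument.)
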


\begin{proof}
Fix $c>0$, $\beta\in[0,1)$. Similarly, let $\tilde{v}{(\tilde{K},\tilde{H},\beta,T)}$ be the instance specified in Lemma~\ref{lemma:instance_with_privacy} that meets~\eqref{eq:instance_with_privacy}. Then, when $\tilde{H} > (194c)^\frac{1}{1-\beta}$, we have
\begin{equation}
\label{eq:h_low_97}
\frac{\tilde{H}}{97} > 2c(\tilde{H})^\beta.
\end{equation}

Finally, combining~\eqref{eq:instance_with_privacy}, \eqref{eq:hpri_large_suffiently} and~\eqref{eq:h_low_97}, we have for all sufficiently large $T$
\begin{equation}
\mathbb{P}_v^{\pi} \big(\hat{I}_T \ne i^*(v)\big) > \exp\bigg(-\frac{T}{c(\log  K)(H_{\rm BAI}(v) + H_{\rm pri}(v)^{\beta})} \bigg),
\end{equation} 
where $v=\tilde{v}{(\tilde{K},\tilde{H},\beta,T)}$, and $\tilde{H} > (194c)^\frac{1}{1-\beta}$.
\end{proof}

\begin{proof}[Proof of Theorem~\ref{thm:lower_bound}]

Finally, by combining Lemmas~\ref{lemma:carpentier1}, \ref{lemma:carpentier2} and~\ref{lemma:beta_with_privacy}, we see that Theorem~\ref{thm:lower_bound} holds.
\end{proof}

\section{Proof of Proposition~\ref{new_prop:dp_guarrantee}}
\label{new_appndx:proof-of-dp-guarantee-gauss}
The proof of Proposition~\ref{new_prop:dp_guarrantee} shares some similarities as the proof of Proposition~\ref{prop:dp_guarrantee}, and the difference is using the property of Gaussian mechanism instead of Laplacian mechanism. 

Let $N^{(p)}_i \coloneqq N_{i,T_p}$.
Under the (random) process of {\sc DP-BAI-Gauss}, we introduce an auxiliary random mechanism $\tilde{\pi}$ whose input is $\mathbf{x} \in \mathcal{X}$, and whose output is $(N^{(P)}_i, \widetilde{\mu}^{(p)}_i)_{i\in[K], p\in[M]}$, where we define 
$\widetilde{\mu}^{(p)}_i = 0$ if $i \notin \mathcal{A}_p$.
By \citet[Proposition 2.1]{dwork2014algorithmic}, to prove Proposition \ref{new_prop:dp_guarrantee}, it suffices to show that $\tilde{\pi}$ is $(\varepsilon,\delta)$-differentially private, i.e., for all neighbouring $\mathbf{x},\mathbf{x'}\in\mathcal{X}$, 
\begin{equation}
    \mathbb{P}^{\tilde{\pi}}(\tilde{\pi}(\mathbf{x}) \in \mathcal{S}) \le e^{\varepsilon}\, \mathbb{P}^{\tilde{\pi}}(\tilde{\pi}(\mathbf{x'}) \in \mathcal{S})+\delta \quad \forall \mathcal{S} \subseteq \tilde{\Omega},
    \label{new_eq:pi-tilde-DP-definition}
\end{equation}
where $\tilde{\Omega}$ is the set of all possible outputs of $\tilde{\pi}$.
In the following, we write $\mathbb{P}_{\mathbf{x}}^{\tilde{\pi}}$ to denote the probability measure induced under $\tilde{\pi}$ with input $\mathbf{x} \in \mathcal{X}$. 

Fix neighbouring $\mathbf{z}, \mathbf{z'} \in \mathcal{X}$ arbitrarily. There exists a unique pair $(\underline{i},\underline{n})$ with $\mathbf{z}_{\underline{i},\underline{n}} \ne \mathbf{z}_{\underline{i},\underline{n}}'$.
In addition, fix any $n_i^{(p)} \in \mathbb{N}$ and Borel set $\chi_i^{(p)} \subseteq [0,1]$ for $i\in [K]$ and $p \in [M]$ such that
$$
0\le n_i^{(1)} \le n_i^{(2)} \le \ldots \le n_i^{(M)} \le T \quad  \forall i\in [K].
$$
Let
\begin{equation}
\underline{p}=
\begin{cases}
M+1, &\text{ if } n_{\underline{i}}^{(M)} < \underline{n}  \\
\min\{p\in[M]: n_i^{(p)} \ge \underline{n}\}, &\text{ otherwise}.
\end{cases}
\end{equation}
For any $j\in[M]$, we define the event 
$$
D_j = \{  (i,p) \in[K] \times [j]:\ N_i^{(p)}=n_i^{(p)}, \widetilde{\mu}_i^{(p)}\in \chi_i^{(p)} \}
$$
and 
$$
D_{j}' = D_{j-1} \cap \{ N_{\underline{i}}^{(j)}=n_{\underline{i}}^{(j)} \} \cap \{  i \in[K] \setminus \{\underline{i}\}:\ N_i^{(j)}=n_i^{(j)}, \widetilde{\mu}_i^{(j)}\in \chi_i^{(j)} \}.
$$

For  $j' \in \{j+1,\ldots,M\}$
$$
D_{j,j'} = \{  (i,p)  \in[K] \times \{j+1,\dots,j'\}:\  N_i^{(p)}=n_i^{(p)}, \widetilde{\mu}_i^{(p)}\in \chi_i^{(p)} \}
$$ 
In particular, $D_{0}$ and $D_{M,M}$ are events with probability $1$, i.e., 
$$
\mathbb{P}^{\tilde{\pi}}_{\mathbf{x}}\left(D_0 \cap D_{M,M} \right) = 1 \quad \forall \mathbf{x}\in \mathcal{X}.
$$

Then, if $n_{\underline{i}}^{(M)}< \underline{n}$, by the definition of $\tilde{\pi}$,  we  have
\begin{equation}
\mathbb{P}_{\mathbf{z}}^{\tilde{\pi}}\left(D_M \right) = \mathbb{P}_{\mathbf{z'}}^{\tilde{\pi}}\left(D_M \right). 
\end{equation}
In the following, we assume $n_{\underline{i}}^{(M)}\ge \underline{n}$ and we will show
\begin{equation}
\label{new_eq:objetive_dp}
\mathbb{P}_{\mathbf{z}}^{\tilde{\pi}}\left(D_M \right) \le \exp(\varepsilon)\mathbb{P}_{\mathbf{z'}}^{\tilde{\pi}}\left(D_M \right) 
\end{equation}
which then implies that $\tilde{\pi}$ is $\varepsilon$-differentially private.

For $\mathbf{x}\in \{\mathbf{z},\mathbf{z'}\}$, we denote 
$$ \mu_\mathbf{x} =\frac{1}{n^{(\underline{p})}_{\underline{i}}- n^{(\underline{p}-1)}_{\underline{i}}} \sum_{j=n^{(\underline{p}-1)}_{\underline{i}}+1}^ {n^{(\underline{p})}_{\underline{i}}} \mathbf{x}_{\underline{i},j}
$$
That is, $\mu_\mathbf{x}$ is the value of $\hat{\mu}_{\underline{i}}^{(\underline{p})}$ in the condition of $D'_{\underline{p}}$ under probability measure $\mathbb{P}^{\tilde{\pi}}_{\mathbf{x}}$.

Then, by the chain rule we have for both $\mathbf{x} \in \{\mathbf{z},\mathbf{z'} \}$
\begin{align}
\label{new_eq:factbychainrule}
\mathbb{P}_{\mathbf{x}}^{\tilde{\pi}}\left(D_M \right) 
&=\mathbb{P}_{\mathbf{x}}^{\tilde{\pi}}\left(D'_{\underline{p}}  \right) \mathbb{P}_{\mathbf{x}}^{\tilde{\pi}}\left(\widetilde{\mu}_{\underline{i}}^{(\underline{p})}\in \chi_i^{(\underline{p})}  \given[\big] D'_{\underline{p}}  \right) \mathbb{P}_{\mathbf{x}}^{\tilde{\pi}}\left(D_{\underline{p}, M}  \given[\big]D_{\underline{p}} \right) \nonumber \\*
&=\mathbb{P}_{\mathbf{x}}^{\tilde{\pi}}\left(D'_{\underline{p}}  \right) 
\int_{x\in \chi_{\underline{i}}^{(\underline{p})} } f_{\rm Gauss}(x-\mu_{\mathbf{x}})
\mathbb{P}_{\mathbf{x}}^{\tilde{\pi}}\left(D_{\underline{p}, M}  \given[\big]D'_{\underline{p}} \cap \{\widetilde{\mu}_{\underline{i}}^{(\underline{p})} = x\} \right) \, \mathrm{d}x  
\end{align}
where $f_{\rm Gauss}(\cdot)$ is the probability density function of Gaussian distribution with zero mean and variance $\frac{2\log(1.25/\delta)}{(\varepsilon (n^{(\underline{p})}_{\underline{i}}- n^{(\underline{p}-1)}_{\underline{i}}))^2}$.
Note that by definition it holds
\begin{equation}
\label{new_eq:factbydef}
\begin{cases}
\mathbb{P}_{\mathbf{z}}^{\tilde{\pi}}\left(D'_{\underline{p}} \right)  = \mathbb{P}_{\mathbf{z'}}^{\tilde{\pi}}\left(D'_{\underline{p}}  \right) \\
\mathbb{P}_{\mathbf{z}}^{\tilde{\pi}}\left(D_{\underline{p}, M}  \given[\big] D'_{\underline{p}} \cap \{\widetilde{\mu}_{\underline{i}}^{(\underline{p})} = x\} \right) = \mathbb{P}_{\mathbf{z'}}^{\tilde{\pi}}\left(D_{\underline{p}, M}  \given[\big] D'_{\underline{p}} \cap \{\widetilde{\mu}_{\underline{i}}^{(\underline{p})} = x\} \right)\le 1 \quad \forall x\in \chi_i^{(\underline{p})} \\
\end{cases}
\end{equation}
By a property of Gaussian mechanism~\citep[Appendix A]{dwork2014algorithmic}, there exists  sets $\gamma^+, \gamma^- \subseteq \chi_i^{(\underline{p})}$ with $\gamma^+ \cap \gamma^-=\emptyset$ and $\gamma^+ \cup \gamma^-= \chi_{\underline{i}}^{(\underline{p})}$, such that
\begin{equation}
\label{new_eq:factbydwork}
\begin{cases}
\int_{x\in \gamma^- } f_{\rm Gauss}(x-\mu_{\mathbf{x}})  \, \mathrm{d}x \le  \frac{\delta}{2} \quad \forall \mathbf{x} \in \{\mathbf{z},\mathbf{z'} \} \ \\
\int_{x\in \gamma^+ } f_{\rm Gauss}(x-\mu_{\mathbf{z}}) \, \mathrm{d}x \le \exp(\varepsilon) \int_{x\in \gamma^+ } f_{\rm Gauss}(x-\mu_{\mathbf{z'}})  \, \mathrm{d}x
\end{cases}
\end{equation}

Hence, combining~\eqref{new_eq:factbychainrule},~\eqref{new_eq:factbydef} and~\eqref{new_eq:factbydwork}, we have
\begin{equation}
\mathbb{P}_{\mathbf{z}}^{\tilde{\pi}}\left(D_M \right) \le \exp(\varepsilon)\mathbb{P}_{\mathbf{z'}}^{\tilde{\pi}}\left(D_M \right) +\delta \nonumber
\end{equation}
which implies that $\tilde{\pi}$ is $(\varepsilon,\delta)$-differentially private.
\section{Proof of Proposition~\ref{pros:upper_bound_gauss}}
\label{sec:proofofupperboundofgaussian}
The proof of Proposition~\ref{pros:upper_bound_gauss} shares some similarities as the proof of Theorem~\ref{thm:upper_bound}, and the difference is using the concentration bound of Gaussian random variables instead of Sub-Exponential random variables.

\begin{lemma}
\label{new_lemma:concentration_for_suboptimal_arm}
Fix an instance $v\in \mathcal{P}$ and $p\in [M]$. For any set $\mathcal{Q} \subset [K]$ with $\lvert Q \rvert = s_p$ and $i^*(v) \in Q$,  let $d_{\mathcal{Q}} = {\rm dim}({\rm span}\{\mathbf{a}_i:i\in \mathcal{Q}\})$. We have 
\begin{align}
&\mathbb{P}_v^{\Pi_{\textsc{DP-BAI-Gauss}}} \left(\widetilde{\mu}^{(p)}_j \ge \widetilde{\mu}^{(p)}_{i^*(v)} \given[\big] \mathcal{A}_p = \mathcal{Q} \right)  \nonumber \\
& \le 2\exp \left(-\frac{\Delta^2_j}{16}\left\lceil \frac{T'}{M (d^2_Q \land s_p)}  \right\rceil \right)  + 2\exp \left(- \frac{\varepsilon^2 \Delta_j^2}{32\log(1.25/\delta)} \left\lceil \frac{T'}{M (d^2_Q \land s_p )} \right\rceil^2 \right)
\label{new_eq:probability-upper-bound-1}
\end{align}
for all $j \in \mathcal{Q}\setminus \{i^*(v)\}$. 
\end{lemma}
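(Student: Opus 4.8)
\textbf{The plan} is to reproduce the two-case argument of Lemma~\ref{lemma:concentration_for_suboptimal_arm} essentially verbatim, altering only the step that controls the injected noise. Wherever that proof invokes the sub-exponential combination and tail bounds (Lemmas~\ref{lemma:sub-exponential-combination} and~\ref{lemma:sub-exponential-concentration}) for the Laplacian perturbation, I would instead use the two elementary Gaussian facts: a linear combination of independent Gaussians is Gaussian, and $Z\sim{\rm Gaussian}(0,\sigma^2)$ obeys $\mathbb{P}(Z\ge t)\le\exp(-t^2/(2\sigma^2))$ for all $t\ge0$. The reward-concentration part is untouched: decomposing $\{\widetilde{\mu}_j^{(p)}\ge\widetilde{\mu}_{i^*(v)}^{(p)}\}$ into an empirical-mean event and a noise event, the empirical-mean event is handled exactly as in Lemma~\ref{lemma:concentration_for_suboptimal_arm} by Hoeffding's inequality (Lemma~\ref{lemma:hoffeding}), producing the first summand $2\exp(-\tfrac{\Delta_j^2}{16}\lceil\tfrac{T'}{M(d_Q^2\land s_p)}\rceil)$ in both cases. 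Thus all new content lies in bounding the Gaussian noise event.

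In Case~1 ($d_Q>\sqrt{|\mathcal{Q}|}$, so $d_Q^2\land s_p=s_p=|\mathcal{Q}|$ and each arm is pulled $n\coloneqq\lceil\tfrac{T'}{M|\mathcal{Q}|}\rceil$ times), the relevant noise variable is $\widetilde{\xi}_j^{(p)}-\widetilde{\xi}_{i^*(v)}^{(p)}$, a difference of two independent ${\rm Gaussian}(0,\tfrac{2\log(1.25/\delta)}{(\varepsilon n)^2})$ variables, hence ${\rm Gaussian}(0,\tfrac{4\log(1.25/\delta)}{(\varepsilon n)^2})$. Applying the Gaussian tail bound at threshold $\Delta_j/2$ gives an exponent of order $\tfrac{\varepsilon^2\Delta_j^2 n^2}{\log(1.25/\delta)}$, which matches the second summand since $n=\lceil\tfrac{T'}{M(d_Q^2\land s_p)}\rceil$ here.

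In Case~2 ($d_Q\le\sqrt{|\mathcal{Q}|}$, so $d_Q^2\land s_p=d_Q^2$ and each arm of the \textsc{Max-Det} collection $\mathcal{B}_p$ is pulled $n\coloneqq\lceil\tfrac{T'}{Md_Q}\rceil$ times), I would write $\widetilde{\xi}_j^{(p)}=\sum_{\iota\in\mathcal{B}_p}\alpha_{j,\iota}\widetilde{\xi}_\iota^{(p)}$ as in~\eqref{eq:constrcut_xi}. By independence of the per-arm Gaussians, $\widetilde{\xi}_j^{(p)}$ is zero-mean Gaussian with variance $(\sum_\iota\alpha_{j,\iota}^2)\tfrac{2\log(1.25/\delta)}{(\varepsilon n)^2}$; invoking Lemma~\ref{lemma:propoerty_of_max_det} ($|\alpha_{j,\iota}|\le1$) bounds $\sum_\iota\alpha_{j,\iota}^2\le d_Q$. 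The Gaussian tail at threshold $\Delta_j/4$ (the events $G_2,G_4$ of Lemma~\ref{lemma:concentration_for_suboptimal_arm}), combined with the $\hat{\mu}$-events $G_1,G_3$ controlled by Hoeffding as before, then yields the two summands after a union bound over $G_1,\dots,G_4$.

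\textbf{The main obstacle} is the ceiling/dimension bookkeeping in Case~2: the Gaussian tail naturally produces an exponent proportional to $n^2/d_Q=\lceil\tfrac{T'}{Md_Q}\rceil^2/d_Q$, whereas the target exponent is proportional to $\lceil\tfrac{T'}{Md_Q^2}\rceil^2$. Since $n\ge\tfrac{T'}{Md_Q}=d_Q\cdot\tfrac{T'}{Md_Q^2}$ gives $n^2/d_Q\ge d_Q\big(\tfrac{T'}{Md_Q^2}\big)^2$, the factor $d_Q\ge1$ absorbs the rounding gap for all sufficiently large $T'$, in the same spirit as the Laplacian proof loosening its constants (from $\tfrac18$ to $\tfrac1{16}$); I expect only mild constant adjustment here to land on the stated form. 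A pleasant simplification relative to the Laplacian case is that the Gaussian tail bound carries no small-deviation regime restriction, so, unlike Lemma~\ref{lemma:concentration_for_suboptimal_arm} whose sub-exponential bound required $\epsilon\le\tau^2/b$, the noise step needs no such threshold; the only use of ``large $T'$'' is the elementary ceiling bookkeeping above.
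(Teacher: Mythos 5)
Your proposal matches the paper's proof essentially step for step: the same two-case split, the same events ($E_1,E_2$ in Case 1 and $G_1,\dots,G_4$ in Case 2), Hoeffding for the empirical means, and the replacement of the sub-exponential machinery by the facts that a linear combination of independent Gaussians is Gaussian (with $\sum_\iota\alpha_{j,\iota}^2\le d_Q$ via Lemma~\ref{lemma:propoerty_of_max_det}) together with the standard Gaussian tail bound. Your handling of the Case-2 ceiling bookkeeping (absorbing the factor $d_Q$ to pass from $\lceil T'/(Md_Q)\rceil^2/d_Q$ to $\lceil T'/(Md_Q^2)\rceil^2$) is exactly the inequality the paper asserts, so the proposal is correct.
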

\begin{proof}
Fix $j \in \mathcal{Q}\setminus \{i^*(v)\}$. Notice that the sampling strategy of $\Pi_{\textsc{DP-BAI-Gauss}}$ depends on the relation between $\lvert \mathcal{Q} \rvert$ and $d_Q$. We consider two cases.

\underline{\bf Case 1}: $d_Q > \sqrt{\lvert \mathcal{Q} \rvert} $. 

In this case, note that each arm in $\mathcal{Q}$ is pulled $\left\lceil \frac{T'}{M \lvert \mathcal{Q} \rvert} \right\rceil$  times. 
Let
\begin{equation}
    E_1 \coloneqq \{\hat{\mu}_j^{(p)}  - \hat{\mu}_{i^*(v)}^{(p)} + \Delta_j \ge \Delta_j /2\}.
    \label{new_eq:event-E-1}
\end{equation}
Let $\bar{X}_{i,s} = X_{i,s} - \mu_i$ for all $i\in[K]$ and $s\in [T]$. Notice that $\bar{X}_{i,s}\in [-1,1]$ for all $(i,s)$.
Then,
\begin{align}
    &\mathbb{P}_v^{\Pi_{\textsc{DP-BAI-Gauss}}}(E_1 \given[\big] \mathcal{A}_p = \mathcal{Q})  \nonumber \\ 
    &\le  \mathbb{P}_v^{\Pi_{\textsc{DP-BAI-Gauss}}} \left(
    \sum_{s=N_{j,T_p}}^{N_{j,T_p}+\left\lceil \frac{T'}{M \lvert \mathcal{Q} \rvert} \right\rceil}\bar{X}_{j,s} - \sum_{s=N_{i^*(v),T_p}}^{N_{i^*(v),T_p}+\left\lceil \frac{T'}{M \lvert \mathcal{Q} \rvert} \right\rceil}\bar{X}_{i^*(v),s} \ge  \frac{\Delta_j }{2} \left\lceil \frac{T'}{M \lvert \mathcal{Q} \rvert} \right\rceil \given[\bigg] \mathcal{A}_p = \mathcal{Q} \right) \nonumber \\
    &\stackrel{(a)}{\le} \exp \left(-\frac{2\left(\frac{1}{2} \left\lceil \frac{T'}{M \lvert \mathcal{Q} \rvert} \right\rceil \Delta_j \right)^2}{2 \left\lceil \frac{T'}{M \lvert \mathcal{Q} \rvert} \right\rceil} \right) \nonumber \\
    & = \exp \left(-\frac{1}{4} \left\lceil \frac{T'}{M \lvert \mathcal{Q} \rvert} \right\rceil \Delta_j^2 \right),
\end{align}
where (a) follows from Lemma \ref{lemma:hoffeding}. Furthermore, let  
\begin{equation}
    E_2 \coloneqq \{ \xi_{j}^{(p)}-\xi_{i^*(v)}^{(p)}  \ge \Delta_j/2 \}.
    \label{new_eq:event-E-2}
\end{equation}
We note that both $\xi_{j}^{(p)}$ and $\xi_{i^*(v)}^{(p)}$ under \textsc{DP-BAI-Gauss} are  Gaussian random variables with zero mean and variance $\frac{2\log(1.25/\delta)}{(\varepsilon \lceil \frac{T'}{M\lvert \mathcal{Q} \rvert} \rceil)^2}$. Hence, $\xi_{j}^{(p)}-\xi_{i^*(v)}^{(p)}$ is a Gaussian random variable with zero mean and variance $\frac{4\log(1.25/\delta)}{(\varepsilon \lceil \frac{T'}{M\lvert \mathcal{Q} \rvert} \rceil)^2}$

Subsequently, 
\begin{align}
\mathbb{P}_v^{\Pi_{\textsc{DP-BAI-Gauss}}}\left(E_2 \given[\big] \mathcal{A}_p = \mathcal{Q}  \right) 
&\leq \exp \left(- \frac{(\Delta_j\varepsilon \lceil \frac{T'}{M\lvert \mathcal{Q} \rvert} \rceil)^2}{32\log(1.25/\delta)} \right)  \nonumber
\end{align}
We therefore have
\begin{align}
&\mathbb{P}_v^{\Pi_{\textsc{DP-BAI-Gauss}}} \left(\widetilde{\mu}^{(p)}_j \ge \widetilde{\mu}^{(p)}_{i^*(v)} \given[\big] \mathcal{A}_p = \mathcal{Q} \right) \nonumber\\
&\le \mathbb{P}_v^{\Pi_{\textsc{DP-BAI-Gauss}}} \left(E_1 \cup E_2 \given[\big] \mathcal{A}_p = \mathcal{Q} \right) \nonumber\\
&\stackrel{(a)}{\le}  \exp \left(-\frac{1}{4} \left\lceil \frac{T'}{M (d^2_Q \land s_p)} \right\rceil \Delta_j^2 \right) + \exp \left(- \frac{(\Delta_j\varepsilon \lceil \frac{T'}{M\lvert \mathcal{Q} \rvert} \rceil)^2}{32\log(1.25/\delta)} \right),
\label{new_eq:case-1}
\end{align}
where (a) follows from the union bound.

\underline{\bf Case 2}: $d_Q \le \sqrt{\lvert \mathcal{Q} \rvert} $.  

Similarly, in this case, each arm in $\mathcal{B}_p \subset \mathcal{Q}  $ is pulled for $\left\lceil \frac{T'}{M d_Q} \right\rceil$ times. Recall that we have $\mathbf{a}_j^{(p)} = \sum_{i \in \mathcal{B}_p} \alpha_{j,i} \, \mathbf{a}_i^{(p)}.$ Using~\eqref{eq:dim_reduce}, this implies $\mathbf{a}_j = \sum_{i \in \mathcal{B}_p} \alpha_{j,i}\, \mathbf{a}_i,$ which in turn implies that
$
\langle \mathbf{a}_j, \theta^* \rangle = \sum_{i \in \mathcal{B}_p} \alpha_{j,i}\, \langle \mathbf{a}_i, \theta^* \rangle.
$
Using~\eqref{eq:construct_others}, we then have
\begin{equation}
    \mathbb{E}_v^{\Pi_{\textsc{DP-BAI-Gauss}}} \left( \widetilde{\mu}^{(p)}_j \given[\big] \mathcal{A}_p = Q \right) = \mu_j.
    \label{new_eq:expected-private-mean-equals-true-mean}
\end{equation}
If $j \notin \mathcal{B}_p$, we denote 
\begin{equation}
\label{new_eq:constrcut_hat_mu} 
    \hat{\mu}_j^{(p)} = \sum_{\iota \in \mathcal{B}_p } \alpha_{j,\iota} \hat{\mu}_\iota^{(p)} 
\end{equation}
and
\begin{equation}
\label{new_eq:constrcut_xi} 
    \widetilde{\xi}_j^{(p)} = \sum_{\iota \in \mathcal{B}_p } \alpha_{j,\iota} \widetilde{\xi}_\iota^{(p)}.
\end{equation}
Note that ~\eqref{new_eq:constrcut_hat_mu} and~\eqref{new_eq:constrcut_xi} still hold in the case of $j \in \mathcal{B}_p$. We define the event 
$$
G_1 = \left\{\hat{\mu}_j^{(p)} -\mu_j \ge \frac{\Delta_j}{4}  \right\}.
$$
Then, we have
\begin{align}
&\mathbb{P}_v^{\Pi_{\textsc{DP-BAI-Gauss}}}\left( G_1 \given[\big] \mathcal{A}_p = \mathcal{Q} \right) \nonumber \\
&= \mathbb{P}_v^{\Pi_{\textsc{DP-BAI-Gauss}}}\left( \left\lceil \frac{T'}{M d_Q} \right\rceil(\hat{\mu}_j^{(p)} -\mu_j) \ge  \left\lceil \frac{T'}{M d_Q} \right\rceil \frac{\Delta_j}{4}  \,\bigg|\, \mathcal{A}_p = \mathcal{Q} \right) \nonumber \\
& = \mathbb{P}_v^{\Pi_{\textsc{DP-BAI-Gauss}}}\left( \sum_{\iota \in \mathcal{B}_p} \sum_{s=N_{\iota,T_{p-1}}+1}^{N_{\iota,T_{p-1}}+\left\lceil \frac{T'}{M d_Q} \right\rceil}  \alpha_{j,\iota}(X_{\iota,s}-\mu_\iota) \ge \left\lceil \frac{T'}{M d_Q} \right\rceil \frac{\Delta_j}{4}\,\bigg|\, \mathcal{A}_p = \mathcal{Q} \right) \nonumber \\
& \stackrel{(a)}{\le} \exp \left(-\frac{2 (\left\lceil \frac{T'}{M d_Q} \right\rceil \Delta_j/4)^2}{d_Q\left\lceil \frac{T'}{M d_Q} \right\rceil} \right) \nonumber \\
& \le \exp \left(-\frac{1}{8}\left\lceil \frac{T'}{M d^2_Q} -1 \right\rceil \Delta^2_j \right) \nonumber \\
& = \exp \left(-\frac{1}{8}\left\lceil \frac{T'}{M (d^2_Q \land s_p)} -1 \right\rceil \Delta^2_j \right) \label{new_eq:event_G_1_result}
\end{align}
where (a) follows Lemma~\ref{lemma:propoerty_of_max_det} and Lemma~\ref{lemma:hoffeding}.
In addition, we define the event
$$
G_2 = \left\{\widetilde{\xi}_j^{(p)} \ge \frac{\Delta_j}{4} \right\}.
$$
By the fact that $\widetilde{\xi}_\iota^{(p)}$ is a Gaussian random variable with zero mean and variance $\frac{2\log(1.25/\delta)}{(\varepsilon \lceil \frac{T'}{Md_Q} \rceil)^2}$, we have $ \widetilde{\xi}_j^{(p)} = \sum_{\iota \in \mathcal{B}_p } \alpha_{j,\iota} \widetilde{\xi}_\iota^{(p)}$ is a Gaussian random variable with zero mean and variance $\sum_{\iota \in \mathcal{B}_p}  \alpha_{j,\iota}^2 \frac{2\log(1.25/\delta)}{(\varepsilon \lceil \frac{T'}{Md_Q} \rceil)^2})$. 
Note that $\sum_{\iota \in \mathcal{B}_p} \alpha_{j,\iota}^2 \le d_Q$, then we can have
\begin{align}
\mathbb{P}_v^{\Pi_{\textsc{DP-BAI-Gauss}}}\left(G_2 \given[\big] \mathcal{A}_p = \mathcal{Q}  \right) & \le \exp \left(- \frac{(\Delta_j\varepsilon \lceil \frac{T'}{Md_Q} \rceil)^2}{32d_Q\log(1.25/\delta)} \right) \nonumber  \\ 
& \le \exp \left(- \frac{(\Delta_j\varepsilon \lceil \frac{T'}{Md_Q^2} \rceil)^2}{32\log(1.25/\delta)} \right)
\label{new_eq:event_G_2_result}
\end{align}
Define the events 
$$
G_3 = \left\{\hat{\mu}_{i^*(v)}^{(p)} -\mu_{i^*(v)} \le -\frac{\Delta_j}{4}  \right\}
\quad\mbox{and}\quad 
G_4 = \left\{\widetilde{\xi}_{i^*(v)}^{(p)} \le -\frac{\Delta_j}{4} \right\}.
$$
Similar to~\eqref{new_eq:event_G_1_result} and~\eqref{new_eq:event_G_2_result}, we have
\begin{equation}
    \mathbb{P}_v^{\Pi_{\textsc{DP-BAI-Gauss}}}\left( G_3 \given[\big] \mathcal{A}_p =\mathcal{Q} \right) \le \exp \left(-\frac{1}{8}\left\lceil \frac{T'}{M (d^2_Q \land s_p)} -1 \right\rceil \Delta^2_j \right) \nonumber
\end{equation}
and 
\begin{equation}
    \mathbb{P}_v^{\Pi_{\textsc{DP-BAI-Gauss}}}\left(G_4 \given[\big] \mathcal{A}_p = \mathcal{Q}  \right)  \le \exp \left( \frac{(\Delta_j\varepsilon \lceil \frac{T'}{Md_Q^2} \rceil)^2}{32\log(1.25/\delta)} \right). \nonumber 
\end{equation}
Hence, in Case 2,we have
\begin{align}
&\mathbb{P}_v^{\Pi_{\textsc{DP-BAI-Gauss}}} \left(\widetilde{\mu}^{(p)}_j \ge \widetilde{\mu}^{(p)}_{i^*(v)} \given[\big] \mathcal{A}_p = \mathcal{Q} \right) \nonumber \\
&\le \mathbb{P}_v^{\Pi_{\textsc{DP-BAI-Gauss}}} \left(G_1 \cup G_2 \cup G_3 \cup G_4 \given[\big] \mathcal{A}_p = \mathcal{Q} \right) \nonumber \\
&\le 2\exp \left(-\frac{1}{8}\left\lceil \frac{T'}{M (d^2_Q \land s_p)} -1 \right\rceil \Delta^2_j \right)  + 2\exp \left( \frac{(\Delta_j\varepsilon \lceil \frac{T'}{Md_Q^2} \rceil)^2}{32\log(1.25/\delta)} \right) \nonumber
\end{align} 

Finally, combining both Cases 1 and 2, for sufficiently large $T'$,  we have  
\begin{align}
& \mathbb{P}_v^{\Pi_{\textsc{DP-BAI-Gauss}}} \left(\widetilde{\mu}^{(p)}_j \ge \widetilde{\mu}^{(p)}_{i^*(v)} \given[\big] \mathcal{A}_p = \mathcal{Q} \right)  \nonumber \\
& \le 2\exp \left(-\frac{1}{8}\left\lceil \frac{T'}{M (d^2_Q \land s_p)} -1 \right\rceil \Delta^2_j \right)  + 2\exp \left(- \frac{(\Delta_j\varepsilon \lceil \frac{T'}{M(d_Q^2 \land s_p)} \rceil)^2}{32\log(1.25/\delta)} \right) \nonumber \\
& \le 2\exp \left(-\frac{1}{16}\left\lceil \frac{T'}{M (d^2_Q \land s_p)}  \right\rceil \Delta^2_j \right)  + 2\exp \left(- \frac{(\Delta_j\varepsilon \lceil \frac{T'}{M(d_Q^2 \land s_p)} \rceil)^2}{32\log(1.25/\delta)} \right). \nonumber
\end{align}

\end{proof}

\begin{lemma}
\label{new_lemma:beforem1}
Fix instance $v\in \mathcal{P}$ and $p\in [M_1]$. Recall the definitions of $\lambda$ in \eqref{def:lambda} and $g_0$ in \eqref{def:param_of_init}. For any set $\mathcal{Q} \subset [K]$ with $\lvert \mathcal{Q} \rvert = s_p$ and $i^*(v) \in Q$,  it holds
\begin{align}
&\mathbb{P}_v^{\Pi_{\textsc{DP-BAI-Gauss}}} \left(i^*(v) \notin \mathcal{A}_{p+1} \given[\big] \mathcal{A}_p = \mathcal{Q} \right) \nonumber\\
&\le 2\lambda \left(\exp\Big(-\frac{1}{16}\left\lceil \frac{T'}{M ({d^2_{\mathcal{Q}}} \land s_p)} \right\rceil \Delta_{(g_0)}^2 \left) + \exp\Big(-\frac{\left(\left\lceil \frac{T'}{M ({d^2_{\mathcal{Q}}} \land s_p)} \right\rceil \Delta_{(g_0)}\varepsilon\right)^2}{32\log(1.25/\delta)} \right) \right) \nonumber
\end{align}
\end{lemma}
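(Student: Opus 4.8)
The plan is to transcribe the proof of Lemma~\ref{lemma:beforem1} (its Laplacian analogue), substituting the Gaussian concentration bound of Lemma~\ref{new_lemma:concentration_for_suboptimal_arm} for the sub-exponential one used there. Fix $v \in \mathcal{P}$, $p \in [M_1]$, and a set $\mathcal{Q} \subset [K]$ with $|\mathcal{Q}| = s_p$ and $i^*(v) \in \mathcal{Q}$. Let $\mathcal{Q}^{\rm sub} \subset \mathcal{Q}$ be the collection of the $s_p - g_0 + 1$ arms with the largest suboptimality gaps, and set
$$
N^{\rm sub} = \sum_{j \in \mathcal{Q}^{\rm sub}} \mathbf{1}_{\{\widetilde{\mu}^{(p)}_j \ge \widetilde{\mu}^{(p)}_{i^*(v)}\}}.
$$
Since its gap is zero, $i^*(v) \notin \mathcal{Q}^{\rm sub}$, so every $j \in \mathcal{Q}^{\rm sub}$ satisfies $\Delta_j \ge \Delta_{(g_0)}$.

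First I would bound the conditional expectation $\mathbb{E}_v^{\Pi_{\textsc{DP-BAI-Gauss}}}[N^{\rm sub} \mid \mathcal{A}_p = \mathcal{Q}]$. By linearity of expectation this equals the sum over $j \in \mathcal{Q}^{\rm sub}$ of $\mathbb{P}_v^{\Pi_{\textsc{DP-BAI-Gauss}}}(\widetilde{\mu}^{(p)}_j \ge \widetilde{\mu}^{(p)}_{i^*(v)} \mid \mathcal{A}_p = \mathcal{Q})$, and Lemma~\ref{new_lemma:concentration_for_suboptimal_arm} bounds each term by a Hoeffding-type exponential in $\Delta_j^2$ plus a Gaussian-tail exponential in $(\Delta_j \varepsilon)^2 / \log(1.25/\delta)$. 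Since both exponentials decrease monotonically in $\Delta_j$ and $\Delta_j \ge \Delta_{(g_0)}$ on $\mathcal{Q}^{\rm sub}$, I can replace every $\Delta_j$ by $\Delta_{(g_0)}$ and factor out the cardinality $|\mathcal{Q}^{\rm sub}| = s_p - g_0 + 1$.

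Next I would convert this expectation bound into the claimed probability bound via Markov's inequality. Conditioned on $\mathcal{A}_p = \mathcal{Q}$, the event $\{i^*(v) \notin \mathcal{A}_{p+1}\}$ requires at least $s_{p+1}$ arms of $\mathcal{Q}$ to outrank $i^*(v)$; since at most $g_0 - 2$ of these can lie outside $\mathcal{Q}^{\rm sub}$, the event forces $N^{\rm sub} \ge s_{p+1} - g_0 + 1$. Markov's inequality then yields the prefactor $(s_p - g_0 + 1)/(s_{p+1} - g_0 + 1)$. Using $s_p = g_0 + h_{p-1}$ from~\eqref{eq:s-p-definition}, this prefactor equals $(h_{p-1} + 1)/(h_p + 1)$, and the recursion $h_p + 1 = \lceil (h_{p-1}+1)/\lambda \rceil \ge (h_{p-1}+1)/\lambda$ in~\eqref{eq:h-sequence} bounds it by $\lambda$. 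Combining the factor of $2$ from Lemma~\ref{new_lemma:concentration_for_suboptimal_arm} with this $\lambda$ produces the $2\lambda$ multiplier in the statement, completing the argument.

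This proof is a routine adaptation of the Laplacian case, so I do not expect a genuine obstacle; the only points requiring care are (i) carrying the Gaussian tail term through the monotonicity step correctly, noting that it is likewise decreasing in $\Delta_j$, and (ii) confirming that the restriction $p \in [M_1]$ keeps $s_p > g_0$, so that $\mathcal{Q}^{\rm sub}$ is nonempty and the ratio bound via the $h$-sequence applies.
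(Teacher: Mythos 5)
Your proposal is correct and follows essentially the same route as the paper's proof: define $\mathcal{Q}^{\rm sub}$ as the $s_p-g_0+1$ arms with largest gaps, bound $\mathbb{E}[N^{\rm sub}\mid\mathcal{A}_p=\mathcal{Q}]$ via linearity and Lemma~\ref{new_lemma:concentration_for_suboptimal_arm} together with $\Delta_j\ge\Delta_{(g_0)}$, then apply Markov's inequality to the necessary condition $N^{\rm sub}\ge s_{p+1}-g_0+1$ and bound the resulting ratio by $\lambda$ using the recursion in~\eqref{eq:h-sequence}. Your indexing of the ratio as $(h_{p-1}+1)/(h_p+1)$ is in fact the consistent reading of~\eqref{eq:s-p-definition}, and the counting argument for why at most $g_0-2$ arms outside $\mathcal{Q}^{\rm sub}$ can outrank $i^*(v)$ is a correct elaboration of the step the paper states without detail.
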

\begin{proof}
Fix $v\in \mathcal{P}$, $p\in [M_1]$ and $\mathcal{Q} \subset [K]$ with $\lvert \mathcal{Q} \rvert = s_p$ and $i^*(v) \in Q$.
Let $\mathcal{Q}^{\rm sub}$ be the set of $s_p - g_0+1$ arms in $\mathcal{Q}$ with largest suboptimal gap. In addition, let 
$$
N^{\rm sub} = \sum_{j \in \mathcal{Q}^{\rm sub} } \mathbf{1}_{\{\widetilde{\mu}^{(p)}_j \ge \widetilde{\mu}^{(p)}_{i^*(v)} \}}
$$
be the number of arms in $\mathcal{Q}^{\rm sub}$ with private empirical mean larger than the best arm.
Then, we have
\begin{align}
& \mathbb{E}_v^{\Pi_{\textsc{DP-BAI-Gauss}}} \left(N^{\rm sub} \,\big|\,\mathcal{A}_p = \mathcal{Q} \right) \nonumber \\
& = \sum_{j \in \mathcal{Q}^{\rm sub} } \mathbb{E}_v^{\Pi_{\textsc{DP-BAI-Gauss}}} \left(\mathbf{1}_{\{\widetilde{\mu}^{(p)}_j \ge \widetilde{\mu}^{(p)}_{i^*(v)} \}}\,\big|\,\mathcal{A}_p = \mathcal{Q} \right) \nonumber \\
& \le \sum_{j \in \mathcal{Q}^{\rm sub} } \mathbb{P}_v^{\Pi_{\textsc{DP-BAI-Gauss}}} \left(\widetilde{\mu}^{(p)}_j \ge \widetilde{\mu}^{(p)}_{i^*(v)} \given[\big] \mathcal{A}_p = \mathcal{Q} \right) \nonumber \\
& \stackrel{(a)}{\le} \sum_{j\in Q^{\rm sub}} 2\left( \exp \left(-\frac{1}{16}\left\lceil \frac{T'}{M (d^2_Q \land s_p)}  \right\rceil \Delta^2_{j} \right)  + \exp \left(- \frac{(\Delta_j\varepsilon \lceil \frac{T'}{M(d_Q^2 \land s_p)} \rceil)^2}{32\log(1.25/\delta)} \right) \right) \nonumber \\
& \stackrel{(b)}{\le} 2\lvert \mathcal{Q}^{\rm sub} \rvert \left(\exp \left(-\frac{1}{16}\left\lceil \frac{T'}{M (d^2_Q \land s_p)}  \right\rceil \Delta^2_{(g_0)} \right)  +  \exp \left(- \frac{(\Delta_{(g_0)}\varepsilon \lceil \frac{T'}{M(d_Q^2 \land s_p)} \rceil)^2}{32\log(1.25/\delta)} \right)  \right) \nonumber \\
& = 2 (s_p - g_0 +1) \Bigg( \exp \left(-\frac{1}{16}\left\lceil \frac{T'}{M (d^2_Q \land s_p)}  \right\rceil \Delta^2_{(g_0)} \right)   \nonumber\\*
&\qquad + \exp \left(- \frac{(\Delta_{(g_0)}\varepsilon \lceil \frac{T'}{M(d_Q^2 \land s_p)} \rceil)^2}{32\log(1.25/\delta)} \right) \Bigg) ,\label{new_eq:expectation_n_sub}
\end{align}
where (a) follows Lemma~\ref{new_lemma:concentration_for_suboptimal_arm}, and (b) follows from the fact that $\min_{j \in \mathcal{Q}^{\rm sub}} \Delta_j \ge \Delta_{(g_0)}$.

Then,
\begin{align}
& \mathbb{P}_v^{\Pi_{\textsc{DP-BAI-Gauss}}} \left(i^*(v) \notin \mathcal{A}_{p+1} \given[\big] \mathcal{A}_p = \mathcal{Q} \right) \nonumber \\
& \stackrel{(a)}{\le} \mathbb{P}_v^{\Pi_{\textsc{DP-BAI-Gauss}}} \left(N^{\rm sub} \ge s_{p+1}-g_0 +1 \given[\big] \mathcal{A}_p = \mathcal{Q} \right) \nonumber \\
& \stackrel{(b)}{\le} \frac{\mathbb{E}_v^{\Pi_{\textsc{DP-BAI-Gauss}}} \left(N^{\rm sub} \given[\big] \mathcal{A}_p = \mathcal{Q} \right)}{ s_{p+1}-g_0 +1} \nonumber \\
& \stackrel{(c)}{\le} \frac{2(s_p - g_0 +1 )}{s_{p+1}-g_0 +1} \left( \exp \left(-\frac{1}{16}\left\lceil \frac{T'}{M (d^2_Q \land s_p)}  \right\rceil \Delta^2_{(g_0)} \right)  + \exp \left(- \frac{(\Delta_{(g_0)}\varepsilon \lceil \frac{T'}{M(d_Q^2 \land s_p)} \rceil)^2}{32\log(1.25/\delta)} \right) \right)  \nonumber \\
& \stackrel{(d)}{\le} \frac{2(h_p +1 )}{h_{p+1} +1} \left( \exp \left(-\frac{1}{16}\left\lceil \frac{T'}{M (d^2_Q \land s_p)}  \right\rceil \Delta^2_{(g_0)} \right)  + \exp \left(- \frac{(\Delta_{(g_0)}\varepsilon \lceil \frac{T'}{M(d_Q^2 \land s_p)} \rceil)^2}{32\log(1.25/\delta)} \right)\right)  \nonumber \\
& \stackrel{(e)}{\le} 2\lambda \left( \exp \left(-\frac{1}{16}\left\lceil \frac{T'}{M (d^2_Q \land s_p)}  \right\rceil \Delta^2_{(g_0)} \right)  + \exp \left(- \frac{(\Delta_{(g_0)}\varepsilon \lceil \frac{T'}{M(d_Q^2 \land s_p)} \rceil)^2}{32\log(1.25/\delta)} \right) \right)  \nonumber 
\end{align}
where $(a)$ follows from the fact that $N^{\rm sub} \ge s_{p+1}-g_0 +1$ is a necessary condition for $i^*(v) \notin \mathcal{A}_{p+1}$ when $\mathcal{A}_p =\mathcal{Q}$, $(b)$ follows from Markov's inequality, and $(c)$ follows from~\eqref{new_eq:expectation_n_sub}. In addition, $(d)$ is obtained from the definition in~\eqref{eq:s-p-definition}, and $(e)$ is obtained from the definition in~\eqref{eq:h-sequence}.

\end{proof}

\begin{lemma}
\label{new_lemma:m1tom}
Fix instance $v\in \mathcal{P}$ and $p\in \{M_1+1, \dots, M\}$. For any set $\mathcal{Q} \subset [K]$ with $\lvert \mathcal{Q} \rvert = s_p$ and $i^*(v) \in Q$,   it holds 
\begin{align}
&\mathbb{P}_v^{\Pi_{\textsc{DP-BAI-Gauss}}} \left(i^*(v) \notin \mathcal{A}_{p+1} \given[\big] \mathcal{A}_p = \mathcal{Q} \right) \nonumber \\
&\le 6 \left(\exp \left(-\frac{1}{16}\left\lceil \frac{T'}{M ({d^2_{\mathcal{Q}}} \land s_p)} \right\rceil \Delta_{ (s_{p+2}+1)}^2  \right) + \exp \left(- \frac{(\Delta_{(s_{p+2}+1)}\varepsilon \lceil \frac{T'}{M(d_Q^2 \land s_p)} \rceil)^2}{32\log(1.25/\delta)} \right) \right),
\end{align}
where we define $s_{M+2}=1$.

\end{lemma}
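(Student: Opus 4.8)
The plan is to reproduce the proof of Lemma~\ref{lemma:m1tom} almost line for line, replacing the sub-exponential concentration estimate of Lemma~\ref{lemma:concentration_for_suboptimal_arm} by its Gaussian counterpart, Lemma~\ref{new_lemma:concentration_for_suboptimal_arm}. This works because \textsc{DP-BAI-Gauss} and \textsc{DP-BAI} share the same sampling rule, the same successive-elimination rule, and the same arm-count schedule $\{s_p\}$; consequently every combinatorial and counting step carries over verbatim, and only the tail term attached to the privacy noise changes its analytic form (from a linear-in-$\lceil \cdot \rceil$ exponent to the squared-in-$\lceil \cdot \rceil$ Gaussian exponent).

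First I would fix $v$, $p \in \{M_1+1,\dots,M\}$, and $\mathcal{Q}$ with $|\mathcal{Q}| = s_p$ and $i^*(v) \in \mathcal{Q}$, and let $\mathcal{Q}^{\rm sub} \subset \mathcal{Q}$ denote the set of the $s_p - s_{p+2}$ arms with the largest sub-optimality gaps, setting $N^{\rm sub} = \sum_{j \in \mathcal{Q}^{\rm sub}} \mathbf{1}_{\{\widetilde{\mu}^{(p)}_j \ge \widetilde{\mu}^{(p)}_{i^*(v)}\}}$. Taking the conditional expectation, using linearity and Lemma~\ref{new_lemma:concentration_for_suboptimal_arm} termwise, each indicator contributes at most a Hoeffding piece $2\exp(-\tfrac{1}{16}\lceil \tfrac{T'}{M(d_\mathcal{Q}^2 \land s_p)}\rceil \Delta_j^2)$ plus a Gaussian piece $2\exp(-\tfrac{(\Delta_j \varepsilon \lceil T'/(M(d_\mathcal{Q}^2 \land s_p))\rceil)^2}{32\log(1.25/\delta)})$. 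Since $\mathcal{Q}^{\rm sub}$ collects the $s_p - s_{p+2}$ largest gaps among the $s_p$ arms of $\mathcal{Q}$, every $j \in \mathcal{Q}^{\rm sub}$ obeys $\Delta_j \ge \Delta_{(s_{p+2}+1)}$, so I would monotonically replace each $\Delta_j$ by $\Delta_{(s_{p+2}+1)}$ to obtain $\mathbb{E}_v^{\Pi_{\textsc{DP-BAI-Gauss}}}(N^{\rm sub}\given[\big]\mathcal{A}_p=\mathcal{Q}) \le 2(s_p - s_{p+2})(\,\text{Hoeffding}+\text{Gaussian}\,)$ with both terms evaluated at $\Delta_{(s_{p+2}+1)}$.

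Next I would note the event inclusion $\{i^*(v) \notin \mathcal{A}_{p+1}\} \cap \{\mathcal{A}_p=\mathcal{Q}\} \subseteq \{N^{\rm sub} \ge s_{p+1} - s_{p+2} + 1\}$: if the best arm is dropped, then at least $s_{p+1}$ arms of $\mathcal{Q}$ beat it, and since at most $s_{p+2}-1$ of those can lie outside $\mathcal{Q}^{\rm sub}$, at least $s_{p+1}-s_{p+2}+1$ of them lie in $\mathcal{Q}^{\rm sub}$. Applying Markov's inequality to $N^{\rm sub}$ and inserting the expectation bound yields the prefactor $\tfrac{2(s_p - s_{p+2})}{s_{p+1} - s_{p+2} + 1}$, which I would bound by $6$ directly from the schedule~\eqref{eq:s-p-definition}: in the second stage $s_p = g_{p-M_1}$ is (essentially) halved at each step, so $s_{p+2}\approx s_p/4$ and the ratio is at most $6$; the terminal convention $s_{M+2}=1$ covers $p=M$. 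Collecting the two surviving exponentials gives exactly the asserted inequality.

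The work here is entirely bookkeeping rather than conceptual: the only points requiring care are verifying that the integer ratio $\tfrac{2(s_p-s_{p+2})}{s_{p+1}-s_{p+2}+1}\le 6$ holds under the exact ceiling recursion $g_i=\lceil g_{i-1}/2\rceil$ for every $p \in \{M_1+1,\dots,M\}$, and faithfully carrying the squared-ceiling Gaussian tail term through (it is transcribed unchanged, unlike the Laplacian term's linear exponent). No new probabilistic ingredient is needed beyond Lemma~\ref{new_lemma:concentration_for_suboptimal_arm}.
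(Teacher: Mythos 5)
Your proposal is correct and follows essentially the same route as the paper's own proof: define $\mathcal{Q}^{\rm sub}$ and $N^{\rm sub}$, bound $\mathbb{E}[N^{\rm sub}\mid\mathcal{A}_p=\mathcal{Q}]$ termwise via Lemma~\ref{new_lemma:concentration_for_suboptimal_arm} with $\Delta_j\ge\Delta_{(s_{p+2}+1)}$, use the necessary condition $N^{\rm sub}\ge s_{p+1}-s_{p+2}+1$ together with Markov's inequality, and bound the resulting prefactor $\tfrac{2(s_p-s_{p+2})}{s_{p+1}-s_{p+2}+1}$ by $6$ via the halving schedule in~\eqref{eq:s-p-definition}. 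The only difference from the Laplacian case is the transcription of the squared Gaussian tail term, which you carry through correctly.
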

\begin{proof}
Fix $v\in \mathcal{P}$, $p\in [M_1]$ and $\mathcal{Q} \subset [K]$ with $\lvert \mathcal{Q} \rvert = s_p$ and $i^*(v) \in Q$. Similarly, let $\mathcal{Q}^{\rm sub}$ be the set of $s_p - s_{p+2}$ arms in $\mathcal{Q}$ with the largest suboptimality gaps. Again, let 
$$
N^{\rm sub} = \sum_{j \in \mathcal{Q}^{\rm sub} } \mathbf{1}_{\{\widetilde{\mu}^{(p)}_j \ge \widetilde{\mu}^{(p)}_{i^*(v)} \}}.
$$
Then, we have
\begin{align}
& \mathbb{E}_v^{\Pi_{\textsc{DP-BAI-Gauss}}} \left(N^{\rm sub} \,\big|\, \mathcal{A}_p = \mathcal{Q} \right) \nonumber \\
& = \sum_{j \in \mathcal{Q}^{\rm sub} } \mathbb{E}_v^{\Pi_{\textsc{DP-BAI-Gauss}}} \left(\mathbf{1}_{\{\widetilde{\mu}^{(p)}_j \ge \widetilde{\mu}^{(p)}_{i^*(v)} \}} \,\big|\, \mathcal{A}_p = \mathcal{Q} \right) \nonumber \\
& \le \sum_{j \in \mathcal{Q}^{\rm sub} } \mathbb{P}_v^{\Pi_{\textsc{DP-BAI-Gauss}}} \left(\widetilde{\mu}^{(p)}_j \ge \widetilde{\mu}^{(p)}_{i^*(v)} \given[\big] \mathcal{A}_p = \mathcal{Q} \right) \nonumber \\
& \stackrel{(a)}{\le} \sum_{j\in Q^{\rm sub}} 2\left( \exp \left(-\frac{1}{16}\left\lceil \frac{T'}{M (d^2_Q \land s_p)}  \right\rceil \Delta^2_{j} \right)  + \exp \left(- \frac{(\Delta_{j}\varepsilon \lceil \frac{T'}{M(d_Q^2 \land s_p)} \rceil)^2}{32\log(1.25/\delta)} \right) \right) \nonumber \\
& \stackrel{(b)}{\le} 2\lvert \mathcal{Q}^{\rm sub} \rvert \left(\exp \left(-\frac{1}{16}\left\lceil \frac{T'}{M (d^2_Q \land s_p)}  \right\rceil \Delta^2_{(s_{p+2}+1)} \right)  +  \exp \left(- \frac{(\Delta_{(s_{p+2}+1)}\varepsilon \lceil \frac{T'}{M(d_Q^2 \land s_p)} \rceil)^2}{32\log(1.25/\delta)} \right) \right) \nonumber \\
& = 2 (s_p - s_{p+2}) \Bigg( \exp \left(-\frac{1}{16}\left\lceil \frac{T'}{M (d^2_Q \land s_p)}  \right\rceil \Delta^2_{(s_{p+2}+1)} \right)   \nonumber\\*
&\qquad+ \exp \left(- \frac{(\Delta_{(s_{p+2}+1)}\varepsilon \lceil \frac{T'}{M(d_Q^2 \land s_p)} \rceil)^2}{32\log(1.25/\delta)} \right)\Bigg),\label{new_eq:expectation_n_sub2}
\end{align}
where (a) follows from Lemma~\ref{new_lemma:concentration_for_suboptimal_arm}, and (b) follows from the fact that $\min_{j\in \mathcal{Q}^{\rm sub
}} \Delta_j \ge \Delta_{s_{p+2}+1} $.

By a similar calculation,
\begin{align}
& \mathbb{P}_v^{\Pi_{\textsc{DP-BAI-Gauss}}} \left(i^*(v) \notin \mathcal{A}_{p+1} \given[\big] \mathcal{A}_p = \mathcal{Q} \right) \nonumber \\
& \stackrel{(a)}{\le} \mathbb{P}_v^{\Pi_{\textsc{DP-BAI-Gauss}}} \left(N^{\rm sub} \ge s_{p+1}-s_{p+2} +1 \given[\big] \mathcal{A}_p = \mathcal{Q} \right) \nonumber \\
& \stackrel{(b)}{\le} \frac{\mathbb{E}_v^{\Pi_{\textsc{DP-BAI-Gauss}}} \left(N^{\rm sub} \given[\big] \mathcal{A}_p = \mathcal{Q} \right)}{ s_{p+1}-s_{p+2} +1} \nonumber \\
& \stackrel{(c)}{\le} \frac{2(s_p - s_{p+2} )}{s_{p+1}-s_{p+2} +1} \Bigg( \exp \bigg(-\frac{1}{16}\left\lceil \frac{T'}{M (d^2_Q \land s_p)}  \right\rceil \Delta^2_{(s_{p+2}+1)} \bigg) \nonumber \\*
& \qquad+ \exp \bigg(- \frac{(\Delta_{(s_{p+2}+1)}\varepsilon \lceil \frac{T'}{M(d_Q^2 \land s_p)} \rceil)^2}{32\log(1.25/\delta)} \bigg) \Bigg)  \nonumber \\
& \stackrel{(d)}{\le} 6 \left( \exp \left(-\frac{1}{16}\left\lceil \frac{T'}{M (d^2_Q \land s_p)}  \right\rceil \Delta^2_{(s_{p+2}+1)} \right)  + \exp \left(- \frac{(\Delta_{(s_{p+2}+1)}\varepsilon \lceil \frac{T'}{M(d_Q^2 \land s_p)} \rceil)^2}{32\log(1.25/\delta)} \right)\right), \nonumber 
\end{align}
where (a) follows from the fact that $N^{\rm sub} \ge s_{p+1}-s_{p+2} +1$ is the necessary condition of $i^*(v) \notin \mathcal{A}_{p+1}$ when $\mathcal{A}_p =\mathcal{Q}$, (b) follows from Markov's inequality, (c) follows from~\eqref{new_eq:expectation_n_sub2}, and (d) is obtained from the definition in~\eqref{eq:s-p-definition}.
\end{proof}
With the above ingredients in place, we are now ready to prove Proposition \ref{pros:upper_bound_gauss}.

\begin{proof}[Proof of Proposition \ref{pros:upper_bound_gauss}]
Fix instance $v \in \mathcal{P}$.
Recall that in {\sc DP-BAI-Gauss}, the decision maker eliminates arms in successive phases, and the decision maker can successfully identify the best arm if and only if it is not eliminated in any of the phases. That is, 
\begin{equation}
    \mathbb{P}_v^{\Pi_{\textsc{DP-BAI-Gauss}}}\left(\hat{I}_T \ne i^*(v) \right) \le \sum_{p=1}^{M} \mathbb{P}_v^{\Pi_{\textsc{DP-BAI-Gauss}}}\left(  i^*(v) \notin \mathcal{A}_{p+1}  \given[\big] i^*(v) \in \mathcal{A}_p \right). \nonumber
\end{equation}
Then, we split the rightmost sum of the probabilities into two parts. Let 
$$
P_1=\sum_{p=1}^{M_1} \mathbb{P}_v^{\Pi_{\textsc{DP-BAI-Gauss}}}\left(  i^*(v) \notin \mathcal{A}_{p+1}  \given[\big] i^*(v) \in \mathcal{A}_p \right)
$$ 
and 
$$
P_2=\sum_{p=M_1+1}^{M} \mathbb{P}_v^{\Pi_{\textsc{DP-BAI-Gauss}}}\left(  i^*(v) \notin \mathcal{A}_{p+1}  \given[\big] i^*(v) \in \mathcal{A}_p \right).
$$

When $K\le \left\lceil d^2/4 \right\rceil$, by definition,  we have $M_1=0$, which implies that $P_1 =0$. When $K> \left\lceil d^2/4 \right\rceil$
by Lemma~\ref{new_lemma:beforem1}, we  obtain
\begin{align}
P_1 & \le 2M_1\lambda \left(\exp \left(-\frac{1}{16}\left\lceil \frac{T'}{M {d^2} } \right\rceil \Delta_{(\left\lceil d^2/4 \right\rceil)}^2 \right) + \exp \left(- \frac{(\Delta_{(\left\lceil d^2/4 \right\rceil)}\varepsilon \lceil \frac{T'}{Md^2} \rceil)^2}{32\log(1.25/\delta)} \right) \right) \nonumber \\
& \le 2M_1\lambda \left(\exp \left(- \frac{T'}{64M H_{\rm BAI} }  \right) + \exp \left(- \left(\frac{T'}{64M H_{\rm pri}\sqrt{\log(1.25/\delta)}}\right)^2 \right) \right) \label{new_eq:upper_first}
.\end{align}
In addition, by Lemma~\ref{new_lemma:m1tom}
\begin{align}
&P_2 \le \sum_{p=M_1+1}^M 6\left(\exp \left(-\frac{1}{16}\left\lceil \frac{T'}{M {s_{p}} } \right\rceil \Delta_{(s_{p+2}+1)}^2 \right) + \exp \left(- \frac{(\Delta_{(s_{p+2}+1)}\varepsilon \lceil \frac{T'}{Ms_p} \rceil)^2}{32\log(1.25/\delta)} \right) \right) \nonumber  \\
& \le \sum_{p=M_1+1}^M 6\left(\exp \left(- \frac{T'}{16M (4s_{p+2}+4)}  \Delta_{(s_{p+2}+1)}^2  \right) + \exp \left(- \frac{(\Delta_{(s_{p+2}+1)}\varepsilon  \frac{T'}{M(4s_{p+2}+4))} )^2}{32\log(1.25/\delta)} \right) \right) 
\nonumber \\
& \le \sum_{i \in \{s_{p+2}+1 : p \in \{M_1+1,\dots,M \}\}} 6\left(\exp \left(- \frac{T'}{64M i}  \Delta_{(i)}^2 \right) + \exp \left(- \left( \frac{T'\varepsilon \Delta_{(i)}}{64M {i}\sqrt{\log(1.25/\delta)}} \right)^2   \right) \right) 
\nonumber \\
& \le  6(M-M_1)\left(\exp \left(- \frac{T'}{64M H_{\rm BAI}} \right) + \exp \left(- \left( \frac{T'}{64M H_{\rm pri}\sqrt{\log(1.25/\delta)}} \right)^2   \right) \right) \label{new_eq:upper_second}.
\end{align}
Finally, combining~\eqref{new_eq:upper_first} and~\eqref{new_eq:upper_second}, for sufficiently large $T'$ we have
\begin{align}
& \mathbb{P}_v^{\Pi_{\textsc{DP-BAI-Gauss}}}\left(\hat{I}_T \ne i^*(v) \right) \nonumber \\
& \le P_1 + P_2 \nonumber \\
& \le \exp \left( - \Omega \left( \frac{T}{H_{\mathrm{BAI}} \ \log d} \wedge \left(\frac{ T}{\sqrt{\log(\frac{1.25}{\delta})} \ H_{\mathrm{pri}} \ \log d}\right)^2 \ \right) \right). \nonumber
\end{align}
This completes the proof.
\end{proof}
\end{document}